\newcommand{\xmark}{\ding{55}}%
\newcommand{\cmark}{\ding{51}}%
\newcommand\encircle[1]{%
  \tikz[baseline=(X.base)]
    \node (X) [draw, shape=circle, inner sep=0] {\strut #1};}
\newcolumntype{L}[1]{>{\raggedright\let\newline\\\arraybackslash\hspace{0pt}}m{#1}}
\newcolumntype{C}[1]{>{\centering\let\newline\\\arraybackslash\hspace{0pt}}m{#1}}
\newcolumntype{R}[1]{>{\raggedleft\let\newline\\\arraybackslash\hspace{0pt}}m{#1}}
\newcommand{\ind}{\perp\!\!\!\!\!\!\perp}
\DeclareMathOperator{\tr}{tr}
\DeclareMathOperator{\diag}{diag}
\begin{document}

\newtheorem{theorem}{Theorem}
\newtheorem{lemma}{Lemma}
\newtheorem{definition}{Definition}
\newtheorem{assumption}{Assumption}
\newtheorem{proposition}{Proposition}
\newtheorem{remark}{Remark}

\newsavebox\CBox
\def\textBF#1{\sbox\CBox{#1}\resizebox{\wd\CBox}{\ht\CBox}{\textbf{#1}}}

\title{The Conditional Cauchy-Schwarz Divergence with Applications to Time-Series Data and Sequential Decision Making}

\author{Shujian Yu, Hongming Li, Sigurd L{\o}kse, Robert Jenssen, and Jos\'{e} C. Pr\'{i}ncipe~\IEEEmembership{Life Fellow,~IEEE}
\thanks{This work was funded in part by the Research Council of Norway (RCN) under grant 309439, and the U.S. ONR under grant ONR N00014-21-1-2295.}
\thanks{Shujian Yu is with the Machine Learning Group, UiT - The Arctic University of Norway, Troms\o, Norway, and with the Quantitative Data Analytics Group, Vrije Universiteit Amsterdam (email:yusj9011@gmail.com).}
\thanks{Hongming Li and Jos\'{e} C. Pr\'{i}ncipe are with the Department of Electrical and Computer Engineering,
University of Florida, Gainesville, FL 32611, USA (e-mail:hongmingli1995@gmail.com; principe@cnel.ufl.edu).}
\thanks{Sigurd L{\o}kse is with the Drones and Autonomous Systems group at NORCE Norwegian Research Centre, Troms\o, Norway (email:sigl@norceresearch.no).}
\thanks{Robert Jenssen is with the Machine Learning Group, UiT - The Arctic University of Norway, Troms\o, Norway, with the Pioneer AI Centre, Copenhagen University, and with the Norwegian Computing Center, Oslo, Norway (email:robert.jenssen@uit.no).}

}




\IEEEcompsoctitleabstractindextext{
\begin{abstract}

The Cauchy-Schwarz (CS) divergence was developed by Pr\'{i}ncipe \emph{et al.} in $2000$. In this paper, we extend the classic CS divergence to quantify the closeness between two conditional distributions and show that the developed conditional CS divergence can be elegantly estimated by a kernel density estimator from given samples. We illustrate the advantages (e.g., rigorous faithfulness guarantee, lower computational complexity, higher statistical power, and much more flexibility in a wide range of applications) of our conditional CS divergence over previous proposals, such as the conditional KL divergence and the conditional maximum mean discrepancy. We also demonstrate the compelling performance of conditional CS divergence in two machine learning tasks related to time series data and sequential inference, namely time series clustering and uncertainty-guided exploration for sequential decision making. The code of conditional CS divergence is available at \url{https://github.com/SJYuCNEL/conditional_CS_divergence}.


\end{abstract}

\begin{IEEEkeywords}
Conditional Cauchy-Schwarz divergence, Time series clustering, Sequential decision making
\end{IEEEkeywords}
}

\maketitle
\IEEEdisplaynotcompsoctitleabstractindextext
\IEEEpeerreviewmaketitle

\section{Introduction}


Quantifying the dissimilarity or distance between two probability distributions (i.e., $p(\mathbf{y})$ with respect to $q(\mathbf{y})$, where $\mathbf{y}$ is a random variable or vector) is a fundamental problem in pattern analysis and machine intelligence, receiving significant interests in both machine learning and statistics~\cite{flamary2016optimal,cherian2012jensen}. It also plays a central role in various problems, such as the generative models~\cite{arjovsky2017wasserstein}, the independence or conditional independence tests~\cite{gretton2007kernel,wang2015conditional}, and the design of robust deep learning loss functions~\cite{amid2019robust}.

Different types of divergence or discrepancy measures have been developed over the past few decades~\cite{basu2011statistical}. Among them, integral probability metrics (IPMs)~\cite{muller1997integral} and $f$-divergences~\cite{renyi1961measures,csiszar1967information} are perhaps the two most popular choices. IPMs aim to find a well-behaved function $f$ from a class of function $\mathcal{F}$ such that $|\mathbb{E}_{p}f(\mathbf{y})-\mathbb{E}_{q}f(\mathbf{y})|$ is maximized. Notable examples in this category include the Wasserstein distance and maximum mean discrepancy (MMD)~\cite{gretton2012kernel}. On the other hand, $f$-divergences, such as the Kullback-Leibler (KL) divergence and the Jensen-Shannon divergence, regard two distributions as identical if they assign the same likelihood to every point~\cite{zhao2021comparing}.



In the early $2000$s, Principe \emph{et al}.~\cite{principe2000information,principe2000learning} suggested a way to quantify the distributional dissimilarity simply by measuring the tightness of the famed Cauchy-Schwarz (CS) inequality associated with two distributions:
\begin{equation}
\Big| \int p(\mathbf{y})q(\mathbf{y})d\mathbf{y} \Big|^2 \leq \int \mid p(\mathbf{y})\mid^2 d\mathbf{y} \int \mid q(\mathbf{y})\mid^2 d\mathbf{y},
\end{equation}
with equality if and only if $p(\mathbf{y})$ and $q(\mathbf{y})$ are linearly dependent, a measure of the ``distance'' between the probability density functions (PDFs) can be defined with:
\begin{equation} \label{eq:CS_divergence}
\small
\begin{split}
& D_{\text{CS}} (p;q) = -\log\left( \frac{\Big| \int p(\mathbf{y})q(\mathbf{y})d\mathbf{y} \Big|^2}{\int \mid p(\mathbf{y})\mid^2 d\mathbf{y} \int \mid q(\mathbf{y})\mid^2 d\mathbf{y}} \right) \\
& = -2\log(\int p(\mathbf{y})q(\mathbf{y})d\mathbf{y}) + \log(\int p(\mathbf{y})^2 d\mathbf{y})
 + \log(\int q(\mathbf{y})^2 d\mathbf{y}),
\end{split}
\end{equation}
which was named the CS divergence.



The CS divergence enjoys a few appealing properties~\cite{jenssen2006cauchy}. For example, it has a closed-form expression for mixture-of-Gaussians (MoG)~\cite{kampa2011closed}, a property that KL divergence does not hold. Moreover, it can be simply evaluated with the kernel density estimator (KDE)~\cite{parzen1962estimation}. Specifically, suppose that we are
given $\{\mathbf{y}_i^p\}_{i=1}^M$ and $\{\mathbf{y}_j^q\}_{j=1}^N$, drawn~\emph{i.i.d.} from $p(\mathbf{y})$ and $q(\mathbf{y})$, respectively. Using KDE with Gaussian kernel $G_{\sigma}(\cdot)=\frac{1}{\sqrt{2\pi}\sigma}\exp(-\frac{\|\cdot\|^2}{2\sigma^2})$, Eq.~(\ref{eq:CS_divergence}) can be estimated as~\cite{jenssen2006cauchy}:
\begin{equation}\label{eq:CS_divergence_est}
\small
\begin{aligned}
& \widehat{D}_{\text{CS}} (p(\mathbf{y});q(\mathbf{y})) = -2 \log\left(\frac{1}{MN}\sum_{i=1}^M \sum_{j=1}^N G_{\sigma}({\bf y}_i^p-{\bf y}_j^q)\right) + \\
& \log\left(\frac{1}{M^2}\sum_{i,i'=1}^M G_{\sigma}({\bf y}_i^p-{\bf y}_{i'}^p)\right)
+ \log\left(\frac{1}{N^2}\sum_{j,j'=1}^N G_{\sigma}({\bf y}_j^q-{\bf y}_{j'}^q)\right).
\end{aligned}
\end{equation}
The estimator in Eq.~(\ref{eq:CS_divergence_est}) is also closely related to the famed MMD. We refer interested readers to Appendix~\ref{sec:CS_and_MMD} for more details.

Due to these properties, the CS divergence has been widely used in a variety of practical machine learning applications. Popular examples include measuring the similarity of two Poisson point processes~\cite{hoang2015cauchy} or data clustering by maximizing the sum of divergences between pairwise clusters~\cite{kampffmeyer2019deep}. Recently, it has been applied to variational autoencoders (VAE)~\cite{kingma2014auto,tran2022cauchy} by imposing a MoG prior, which significantly improves the representation power and the quality of generation. In another application to scene flow estimation for a pair of consecutive point clouds, the CS divergence demonstrated noticeable performance gain over the Chamfer distance and the Earth Mover's distance~\cite{he2022self}.

All the above-mentioned measures can only be used to evaluate the dissimilarity of two (marginal) distributions, whereas, in practice, one is also frequently interested in quantifying the dissimilarity of two conditional distributions, i.e., $p(\mathbf{y}|\mathbf{x})$ with respect to $q(\mathbf{y}|\mathbf{x})$. In fact, the conditional divergence itself has also been used, as a key ingredient, in recent machine learning applications. For example, in domain adaptation and generalization, given input $\mathbf{x}$ and class label $y$, suppose $\phi$ is a feature extractor, it is crucially important to align both the (marginal) distribution of latent representations $p(\phi(\mathbf{x}))$ and the conditional distribution $p(y|\phi(\mathbf{x}))$~\cite{zhao2019learning}, although in practice people resort to matching the class conditional distribution $p(\phi(\mathbf{x})|y)$ for simplicity (because $y$ is discrete)~\cite{long2017deep,yan2017mind}. In self-supervised learning, let $\mathbf{v}_1$ and $\mathbf{v}_2$ denote two different augmented views of the same input entity $\mathbf{x}$, suppose $\mathbf{z}_1$ and $\mathbf{z}_2$ are the latent representations of $\mathbf{v}_1$ and $\mathbf{v}_2$, respectively. To ensure minimum sufficient information in $\mathbf{z}_1$ and $\mathbf{z}_2$, it is necessary to minimize the conditional divergence between $p_\theta (\mathbf{z}_1 |\mathbf{v}_1)$ and $p_\varphi (\mathbf{z}_2 |\mathbf{v}_2)$, in which $\theta$ and $\varphi$ are parameters of view-specific encoders~\cite{federici2020learning}.

Despite the growing attention and increasing importance of the conditional divergence, methods on quantifying the dissimilarity or discrepancy of two conditional distributions are less investigated. In this paper, we follow the line of the classic CS divergence in Eq.~(\ref{eq:CS_divergence}) and extend it to conditional distributions. We illustrate the advantages of the developed conditional CS divergence over the conditional KL divergence and the conditional maximum mean discrepancy (CMMD)~\cite{ren2016conditional}. We also demonstrate its versatile applications and consistent performance boosts in two tasks related to time series data, namely time series clustering and uncertainty-guided exploration for reinforcement learning without rewards. We finally discuss the potential usage of the new divergence to other fundamental problems and identify its limitations that deserve further research.



\section{Background Knowledge}
\subsection{Problem Formulation}
We have two sets of observations $\psi_p=\{(\mathbf{x}_i^p,\mathbf{y}_i^p )\}_{i=1}^M$ and $\psi_q=\{(\mathbf{x}_j^q,\mathbf{y}_j^q )\}_{j=1}^N$ that are assumed to be independently and identically distributed (\emph{i.i.d.}) with density functions $p (\mathbf{x},\mathbf{y})$ and $q (\mathbf{x},\mathbf{y})$, respectively. Here, $\mathbf{x}$ is a random vector which contains input or explanatory variables, whereas $\mathbf{y}$ is the response or dependent variable (or vector).

Typically, the conditional distributions $p(\mathbf{y}|\mathbf{x})$ in $\psi_p$ and $q(\mathbf{y}|\mathbf{x})$ in $\psi_q$ are unknown and unspecified.
The aim of this paper is to develop a computationally efficient measure to quantify the closeness between $p(\mathbf{y}|\mathbf{x})$ and $q(\mathbf{y}|\mathbf{x})$ based on observations from $\psi_p$ and $\psi_q$, which we denote as $D(p(\mathbf{y}|\mathbf{x}); q(\mathbf{y}|\mathbf{x}))$.

\subsection{Existing Measures of $D(p (\mathbf{y}|\mathbf{x}); q(\mathbf{y}|\mathbf{x}))$}\label{sec:existing_measures}

We now briefly review previous efforts to measure $D(p (\mathbf{y}|\mathbf{x});q (\mathbf{y}|\mathbf{x}))$. The most natural choice is obviously the Kullback-Leibler (KL) divergence. Indeed, by the chain rule for KL divergence, $D_{\text{KL}}(p(\mathbf{y}|\mathbf{x});q (\mathbf{y}|\mathbf{x}))$ can be decomposed as:
\begin{equation}\label{eq:KL}
\begin{split}
D_{\text{KL}}(p (\mathbf{y}|\mathbf{x});q (\mathbf{y}|\mathbf{x})) & = D_{\text{KL}}(p (\mathbf{x},\mathbf{y});q (\mathbf{x},\mathbf{y})) \\
& - D_{\text{KL}}(p (\mathbf{x});q (\mathbf{x})).
\end{split}
\end{equation}


Eq.~(\ref{eq:KL}) suggests that one can measure $D_{\text{KL}}(p (\mathbf{y}|\mathbf{x});q (\mathbf{y}|\mathbf{x}))$ by simply taking the difference between $D_{\text{KL}}(p (\mathbf{x},\mathbf{y});q (\mathbf{x},\mathbf{y}))$ and $D_{\text{KL}}(p (\mathbf{x});q (\mathbf{x}))$, in which both terms can be evaluated by popular plug-in KL divergence estimators, such as the $k$-nearest neighbors ($k$-NN) estimator~\cite{wang2009divergence} and the KDE estimator. Albeit its simplicity, the decomposition rule requires estimation in the joint space of two variables, which further increases the dimensionality and makes estimation more difficult. On the other hand, a common drawback for $k$-NN estimator and its variants (e.g., \cite{noshad2017direct}) is that they are not differentiable, which limits their practical applications in modern deep learning tasks.

The second approach defines a distance metric through the embedding of probability functions in a reproducing kernel Hilbert space (RKHS) $\mathcal{F}$.
The RKHS estimators are much more well-behaved with respect to the dimensionality of the input data, which is also particularly important for our CS divergence.
Specifically, let $\{\mathbf{y}_i^p\}_{i=1}^M$ and $\{\mathbf{y}_j^q\}_{j=1}^N$ be the sets of samples from $p (\mathbf{y})$ and $q (\mathbf{y})$ respectively, the maximum mean discrepancy (MMD)~\cite{gretton2012kernel} is defined as the distance of feature means $\mu_p$ and $\mu_q$ in $\mathcal{F}$. That is,
\begin{equation}
\text{MMD}^2[p (\mathbf{y}),q (\mathbf{y})] =\|\mu_p-\mu_q \|_{\mathcal{F}}^2
\end{equation}

In practice, an estimate of the MMD objective compares the square difference between the empirical kernel mean embeddings:
\begin{equation}\label{eq:MMD}
\small
\begin{aligned}
& \widehat{\text{MMD}}^2[p (\mathbf{y}),q (\mathbf{y})] = \|\frac{1}{M}\sum_{i=1}^M \phi(\mathbf{y}_i^p)- \frac{1}{N}\sum_{j=1}^N \phi(\mathbf{y}_j^q) \|_{\mathcal{F}}^2 \\
& = \left[ \frac{1}{M^2}\sum_{i,i'}^M \kappa(\mathbf{y}_i^p,\mathbf{y}_{i'}^p) - \frac{2}{MN}\sum_{i,j}^{MN}\kappa(\mathbf{y}_i^p,\mathbf{y}_j^q)
 + \frac{1}{N^2}\sum_{j,j'}^N \kappa(\mathbf{y}_j^q,\mathbf{y}_{j'}^q) \right],
\end{aligned}
\end{equation}
where $\phi(\mathbf{y}):=\kappa(\mathbf{y},\cdot)$ refers to a (usually infinite dimension) feature map of $\mathbf{y}$, $\kappa: \mathcal{Y}\times \mathcal{Y} \rightarrow \mathbb{R}$ is a positive definite kernel (usually Gaussian).

An unbiased and more commonly used MMD estimator is given by:
\begin{equation}
\begin{split}
& \widehat{\text{MMD}}_u^2[p (\mathbf{y}),q (\mathbf{y})]
= \biggr[ \frac{1}{M(M-1)}\sum_{i}^M \sum_{i\neq i'}^M \kappa(\mathbf{y}_i^p,\mathbf{y}_{i'}^p) \\
& - \frac{2}{MN}\sum_{i,j}^{MN}\kappa(\mathbf{y}_i^p,\mathbf{y}_j^q)
 + \frac{1}{N(N-1)}\sum_{j}^N \sum_{j\neq j'}^N \kappa(\mathbf{y}_j^q,\mathbf{y}_{j'}^q) \biggr].
\end{split}
\end{equation}

The RKHS embedding of conditional distributions was developed by Song \emph{et al.}~\cite{song2009hilbert,song2013kernel} via a conditional covariance operator $\mathcal{C}_{Y|X}=\mathcal{C}_{YX} \mathcal{C}_{XX}^{-1}=\mathcal{C}_{YX} \left(\mathcal{C}_{XX}+\lambda I\right)^{-1}$, in which $\mathcal{C}_{XX}$ and $\mathcal{C}_{YX}$ are respectively the uncentered covariance and cross-covariance operator. Later, Ren \emph{et al.}~\cite{ren2016conditional} measure the difference of two conditional distributions by the square difference between the empirical estimates of the conditional embedding operators. The so-called conditional maximum mean discrepancy (CMMD) is defined as:
\begin{equation}\label{eq:CMMD}
\begin{aligned}
& \widehat{\text{CMMD}}^2[p (\mathbf{y}|\mathbf{x}),q (\mathbf{y}|\mathbf{x})] = \| \widehat{\mathcal{C}}_{Y|X}^p - \widehat{\mathcal{C}}_{Y|X}^q \|_{\mathcal{F}\otimes \mathcal{G}}^2 \\
& = \| \Phi_p (K_p + \lambda I)^{-1}\Upsilon_p^T - \Phi_q (K_q + \lambda I)^{-1}\Upsilon_q^T \|_{\mathcal{F}\otimes \mathcal{G}}^2  \\
& = \tr (K_p \tilde{K}_p^{-1} L_p \tilde{K}_p^{-1}) + \tr (K_q \tilde{K}_q^{-1} L_q \tilde{K}_q^{-1}) \\
& - 2\tr (K_{pq} \tilde{K}_q^{-1} L_{qp} \tilde{K}_p^{-1}),
\end{aligned}
\end{equation}
in which $\otimes$ denotes tensor product, $\mathcal{G}$ is the RKHS corresponding to $\mathbf{x}$, $\tilde{K}=K+\lambda I$, $\tr$ denotes matrix trace.
$\Phi_p = [\phi(\mathbf{y}_1^p),\phi(\mathbf{y}_2^p),\cdots,\phi(\mathbf{y}_M^p)]$ and $\Upsilon_p = [\phi(\mathbf{x}_1^p),\phi(\mathbf{x}_2^p),\cdots,\phi(\mathbf{x}_M^p)]$ are implicitly formed feature matrices for dataset $\psi_p$. $\Phi_q$ and $\Upsilon_q$ are defined similarly for dataset $\psi_q$. $K_p = \Upsilon_p^T \Upsilon_p \in \mathbb{R}^{M\times M}$ and $K_q = \Upsilon_q^T \Upsilon_q \in \mathbb{R}^{N\times N}$ are the Gram matrices for input variable $X$,
$L_p = \Phi_p^T \Phi_p \in \mathbb{R}^{M\times M}$ and $L_q = \Phi_q^T \Phi_q \in \mathbb{R}^{N\times N}$ are the Gram matrices for output variable $Y$. Finally,
$K_{pq} = \Upsilon_p^T \Upsilon_q \in \mathbb{R}^{M\times N} $ and $L_{qp} = \Phi_q^T \Phi_p \in \mathbb{R}^{N\times M} $ are the Gram matrices
between $\psi_p$ and $\psi_q$ on input and output variables, i.e.,
$(K_{pq})_{ij}=\kappa(\mathbf{x}_i^p,\mathbf{x}_j^q)$ and $(L_{qp})_{ji}=\kappa(\mathbf{y}_j^q,\mathbf{y}_i^p)$.

The conditional covariance operator $\mathcal{C}_{Y|X}$ relies on stringent assumptions that are often violated, hindering its analysis. To overcome this limitation, Park \emph{et al.}~\cite{park2020measure} developed an operator-free, measure-theoretic approach for conditional mean embedding and defined the maximum conditional mean discrepancy (MCMD) between \( p(Y|X) \) and \( q(Y|X) \) as a function from \(\mathcal{X}\) to \(\mathbb{R}\), which has a closed-form estimator. Informally, \(\mathrm{MCMD}_{p(Y|X),q(Y|X)}(\bf x)\) can be viewed as the MMD between $p(Y|X=\bf x)$ and $q(Y|X=\bf x)$.

The recently developed conditional Bregman matrix divergence~\cite{yu2020measuring} applies the decomposition rule in Eq.~(\ref{eq:KL}) and quantifies the difference of two conditional distributions via:
\begin{equation}
D_{\varphi,B} (p (\mathbf{y}|\mathbf{x});q (\mathbf{y}|\mathbf{x}))=D_{\varphi,B} (C_{\mathbf{xy}}^p;C_{\mathbf{xy}}^q )- D_{\varphi,B} (C_\mathbf{x}^p;C_\mathbf{x}^q),
\end{equation}
in which \( D_{\varphi, B} \) refers to the Bregman matrix divergence~\cite{kulis2009low}. \( C_{\mathbf{xy}} \in \mathcal{S}_{+}^{d_{\mathbf{x}}+d_{\mathbf{y}}} \) denotes the sample covariance matrix or the centered correntropy matrix~\cite{rao2011test}, which characterizes the joint distribution of $\mathbf{x}$ and $\mathbf{y}$. Meanwhile, \( C_{\mathbf{x}} \in \mathcal{S}_{+}^{d_{\mathbf{x}}} \) denotes the sample covariance matrix or the centered correntropy matrix that characterizes the marginal distribution of $\mathbf{x}$\footnote{We denote \( \mathcal{S}_{+}^{d} \) as the set of all \( d \times d \) symmetric positive semidefinite (SPS) matrices, i.e., \( \mathcal{S}_{+}^{d} = \{ A \in \mathbb{R}^{d \times d} | A = A^T, A \succcurlyeq 0 \} \).} Formally, given a strictly convex, differentiable function $\varphi$, the matrix Bregman divergence from a matrix $\rho$ to a matrix $\sigma$ is defined as:
\begin{equation}
D_{\varphi,B}(\sigma;\rho) = \varphi(\sigma) - \varphi(\rho) - \tr\left( (\nabla\varphi(\rho))^T(\sigma-\rho) \right),
\end{equation}
where $\tr(\cdot)$ denotes the trace. For example, when $\varphi(\sigma)=\tr(\sigma \log\sigma-\sigma)$, where $\log\sigma$ is the matrix logarithm, the resulting Bregman matrix divergence is:
\begin{equation}
D_{\text{vN}}(\sigma;\rho) = \tr (\sigma\log\sigma - \sigma\log\rho - \sigma +\rho),
\end{equation}
which is also referred to as von Neumann divergence~\cite{nielsen2010quantum}.


We summarize in Table~\ref{tab:property} different properties (such as computational complexity, differentiability, and faithfulness) associated with different existing measures. We also include our new measure for a comparison. Here, we define ``faithfulness" of a measure when $D(p (\mathbf{y}|\mathbf{x});q (\mathbf{y}|\mathbf{x}))\geq 0$ in general and $D(p (\mathbf{y}|\mathbf{x});q (\mathbf{y}|\mathbf{x})) = 0$ if and only if $p (\mathbf{y}|\mathbf{x}) = q (\mathbf{y}|\mathbf{x})$.
The computational complexity of conditional KL divergence estimator by $k$-NN graph can be reduced to $\mathcal{O}(kN\log N)$. The computational complexity of conditional MMD comes from the computation of Gram matrices ($\mathcal{O}(N^2d)$ complexity) and matrix inversion ($\mathcal{O}(N^3)$ complexity). Our conditional CS divergence also requires computation of Gram matrices, but avoids matrix inversion. The computational complexity of conditional Bregman divergence is dominated by the value of $d$, rather than $N$. This is because it requires evaluating a covariance matrix of size $d\times d$ and its eigenvalues.
Note that, the computation of Gram matrices can take only $\mathcal{O}(RN\log d)$ complexity, where $R$ is the number of basis functions for approximating kernels which determines the approximation accuracy~\cite{zhao2015fastmmd}. 

\begin{table}[]
\centering
\caption{Properties of different conditional divergences. ``Diff." refers to the differentiability; ``Faith." refers to the faithfulness.}
\begin{threeparttable}[t]
\begin{tabular}{l|l|l|l|l}
\hline
& Hyperparameter & Complexity\tnote{2} & Diff. & Faith. \\ \hline
Cond. KL\tnote{1}   & $k$           &    $\mathcal{O}(kN\log N)$    &     \xmark     & \cmark   \\ \hline
Cond. MMD     &     kernel size $\sigma$; $\lambda$        &   $\mathcal{O}(N^2d+N^3)$    & \cmark        &    \textbf{?}\tnote{3}    \\ \hline
Cond. Bregman\tnote{4} &    free        &     $\mathcal{O}(Nd^2+d^3)$    & \cmark        &   \xmark       \\ \hline
Cond. CS (ours)    &   kernel size $\sigma$   &   $\mathcal{O}(N^2d)$      & \cmark        & \cmark   \\ \hline
\end{tabular}
\begin{tablenotes}
     \item[1] We assume the conditional KL divergence is estimated nonparametrically with $k$-nearest neighbors ($k$-NN) graph.
     \item[2] $N$ refers to number of samples, $d$ is the dimension of $\mathbf{x}$ or $\mathbf{y}$.
     \item[3] Indeed, there is no universally agreed-upon definition for conditional MMD, and the faithfulness property depends on the specific definition~\cite{park2020measure}. But this is beyond the scope of our work.
     \item[4] We assume the conditional Bregman divergence is measured with the sample covariance matrix.
\end{tablenotes}
\end{threeparttable}
\label{tab:property}
\end{table}

\section{The Conditional Cauchy-Schwarz divergence}

\subsection{Extending Cauchy-Schwarz divergence for conditional distributions}

Following Eq.~(\ref{eq:CS_divergence}), the CS divergence for two conditional distributions $p(\mathbf{y}|\mathbf{x})$ and $q(\mathbf{y}|\mathbf{x})$ can be expressed naturally as:
\begin{equation} \label{eq:conditional_CS}
\begin{split}
& D_{\text{CS}}(p(\mathbf{y}|\mathbf{x});q(\mathbf{y}|\mathbf{x}))  = - 2 \log \left(\int_\mathcal{X}\int_\mathcal{Y} p(\mathbf{y}|\mathbf{x})q(\mathbf{y}|\mathbf{x}) d\mathbf{x}d\mathbf{y} \right) \\
& + \log \left(\int_\mathcal{X}\int_\mathcal{Y} p^2(\mathbf{y}|\mathbf{x})d\mathbf{x}d\mathbf{y}\right) + \log \left(\int_\mathcal{X}\int_\mathcal{Y} q^2(\mathbf{y}|\mathbf{x})d\mathbf{x}d\mathbf{y}\right) \\
& = - 2 \log \left(\int_\mathcal{X}\int_\mathcal{Y} \frac{p(\mathbf{x},\mathbf{y})q(\mathbf{x},\mathbf{y})}{p(\mathbf{x})q(\mathbf{x})} d\mathbf{x}d\mathbf{y} \right) \\
& + \log \left(\int_\mathcal{X}\int_\mathcal{Y} \frac{p^2(\mathbf{x},\mathbf{y})}{p^2(\mathbf{x})} d\mathbf{x}d\mathbf{y}\right) + \log \left(\int_\mathcal{X}\int_\mathcal{Y} \frac{q^2(\mathbf{x},\mathbf{y})}{q^2(\mathbf{x})} d\mathbf{x}d\mathbf{y}\right),
\end{split}
\end{equation}
which contains two conditional quadratic terms (i.e., $\int_\mathcal{X}\int_\mathcal{Y} \frac{p^2(\mathbf{x},\mathbf{y})}{p^2(\mathbf{x})} d\mathbf{x}d\mathbf{y}$ and $\int_\mathcal{X}\int_\mathcal{Y} \frac{q^2(\mathbf{x},\mathbf{y})}{q^2(\mathbf{x})} d\mathbf{x}d\mathbf{y}$) and a cross term (i.e., $\int_\mathcal{X}\int_\mathcal{Y} \frac{p(\mathbf{x},\mathbf{y})q(\mathbf{x},\mathbf{y})}{p(\mathbf{x})q(\mathbf{x})} d\mathbf{x}d\mathbf{y}$).

\begin{proposition}
The conditional CS divergence defined in Eq.~(\ref{eq:conditional_CS}) is a ``faithful" measure on the closeness between $p(\mathbf{y}|\mathbf{x})$ and $q(\mathbf{y}|\mathbf{x})$.
\end{proposition}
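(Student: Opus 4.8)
The plan is to read the right-hand side of Eq.~(\ref{eq:conditional_CS}) as the ordinary CS divergence of Eq.~(\ref{eq:CS_divergence}) applied to the two conditional densities viewed as a single pair of functions on the product space. Concretely, regard $f(\mathbf{x},\mathbf{y}):=p(\mathbf{y}|\mathbf{x})$ and $g(\mathbf{x},\mathbf{y}):=q(\mathbf{y}|\mathbf{x})$ as elements of the Hilbert space $L^2(\mathcal{X}\times\mathcal{Y})$ equipped with $\langle f,g\rangle=\int_\mathcal{X}\int_\mathcal{Y} f g\, d\mathbf{x}\,d\mathbf{y}$, and set $A=\langle f,g\rangle$, $B=\langle f,f\rangle$, $C=\langle g,g\rangle$, so that $D_{\text{CS}}(p(\mathbf{y}|\mathbf{x});q(\mathbf{y}|\mathbf{x}))=\log(BC/A^2)$. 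Throughout I assume the mild regularity that both conditionals are square-integrable over the product space, which guarantees $B,C<\infty$ and, via Cauchy--Schwarz, $A<\infty$; since $p,q$ are densities they are nonnegative and not identically zero, so $B,C>0$.

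For non-negativity I would simply apply the Cauchy--Schwarz inequality in $L^2(\mathcal{X}\times\mathcal{Y})$, $\langle f,g\rangle^2\le \langle f,f\rangle\langle g,g\rangle$, i.e.\ $A^2\le BC$. Hence $BC/A^2\ge 1$ and $D_{\text{CS}}=\log(BC/A^2)\ge 0$, with the degenerate case of disjoint supports giving $A=0$ and $D_{\text{CS}}=+\infty$, which is consistent with the claim.

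The ``if'' direction of faithfulness is immediate: if $p(\mathbf{y}|\mathbf{x})=q(\mathbf{y}|\mathbf{x})$ then $A=B=C$ and $\log(BC/A^2)=0$. For the ``only if'' direction I would use the equality case of Cauchy--Schwarz: $D_{\text{CS}}=0$ forces $A^2=BC$, which holds if and only if $f$ and $g$ are linearly dependent in $L^2(\mathcal{X}\times\mathcal{Y})$. As $f,g\ge 0$ and are nonzero, this means $q(\mathbf{y}|\mathbf{x})=\lambda\, p(\mathbf{y}|\mathbf{x})$ for a single scalar $\lambda>0$, holding for almost every $(\mathbf{x},\mathbf{y})$. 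To pin down $\lambda$ I would invoke the defining normalization of conditional densities, $\int_\mathcal{Y} p(\mathbf{y}|\mathbf{x})\,d\mathbf{y}=\int_\mathcal{Y} q(\mathbf{y}|\mathbf{x})\,d\mathbf{y}=1$ for almost every $\mathbf{x}$: integrating the identity $q=\lambda p$ over $\mathbf{y}$ at a fixed such $\mathbf{x}$ gives $1=\lambda\cdot 1$, so $\lambda=1$ and therefore $p(\mathbf{y}|\mathbf{x})=q(\mathbf{y}|\mathbf{x})$ almost everywhere.

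The main subtlety to get right is the equality case. It must be applied to $f$ and $g$ as functions on the \emph{entire} product space, so that linear dependence yields one global constant $\lambda$; a factor that could a priori depend on $\mathbf{x}$, say $\lambda(\mathbf{x})$, would not be excluded by the per-slice normalization alone. It is precisely the joint-space formulation of Eq.~(\ref{eq:conditional_CS}) that delivers a constant $\lambda$, after which the $\mathbf{y}$-normalization forces $\lambda=1$. The remaining care is bookkeeping around integrability and the ``almost everywhere'' qualifiers, ensuring $A,B,C$ are finite and positive so that the logarithms are well defined, which I regard as routine given the square-integrability assumption.
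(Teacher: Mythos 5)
Your proof is correct and follows essentially the same route as the paper's: apply the Cauchy--Schwarz inequality to the two conditionals (viewed as functions on the joint space), use the equality case to conclude $q(\mathbf{y}|\mathbf{x})=\lambda\, p(\mathbf{y}|\mathbf{x})$, and invoke the normalization $\int_\mathcal{Y} p(\mathbf{y}|\mathbf{x})\,d\mathbf{y}=\int_\mathcal{Y} q(\mathbf{y}|\mathbf{x})\,d\mathbf{y}=1$ to force $\lambda=1$. If anything, your write-up is more careful than the paper's, which suppresses the $\mathbf{x}$-dependence notationally; your observation that the joint-space formulation is what delivers a single global constant $\lambda$ (rather than a per-slice factor $\lambda(\mathbf{x})$) makes explicit a point the paper glosses over.
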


\begin{proof}
All proof(s) are demonstrated in Appendix~\ref{sec:appendix_proof}.
\end{proof}

We demonstrate below that Eq.~(\ref{eq:conditional_CS}) has a closed-form empirical estimator. The estimation technique used in our paper is a bit different from that in~\cite{jenssen2006cauchy}, but enjoys more advantages. Interested readers can refer to Appendix~\ref{sec:two_estimators}.


\begin{proposition}
Assume that we are given observations $\psi_p=\{(\mathbf{x}_i^p,\mathbf{y}_i^p )\}_{i=1}^M$ and $\psi_q=\{(\mathbf{x}_j^q,\mathbf{y}_j^q )\}_{j=1}^N$, sampled from distributions $p(\mathbf{x},\mathbf{y})$ and $q(\mathbf{x},\mathbf{y})$, respectively. Let $K^p$ and $L^p$ denote, respectively, the Gram matrices for variables $\mathbf{x}$ and $\mathbf{y}$ in the distribution $p$. Similarly, let $K^q$ and $L^q$ denote, respectively, the Gram matrices for variables $\mathbf{x}$ and $\mathbf{y}$ in the distribution $q$. Meanwhile, let $K^{pq}\in \mathbb{R}^{M\times N}$ (i.e., $\left(K^{pq}\right)_{ij}=\kappa(\mathbf{x}^p_i - \mathbf{x}^q_j)$) denote the Gram matrix from distribution $p$ to distribution $q$ for input variable $\mathbf{x}$, and $L^{pq}\in \mathbb{R}^{M\times N}$ the Gram matrix from distribution $p$ to distribution $q$ for output variable $\mathbf{y}$.
Similarly, let $K^{qp}\in \mathbb{R}^{N\times M}$ (i.e., $\left(K^{qp}\right)_{ji}=\kappa(\mathbf{x}^q_j - \mathbf{x}^p_i)$) denote the Gram matrix from distribution $q$ to distribution $p$ for input variable $\mathbf{x}$, and $L^{qp}\in \mathbb{R}^{N\times M}$ the Gram matrix from distribution $q$ to distribution $p$ for output variable $\mathbf{y}$.
The empirical estimation of $D_{\text{CS}}(p(\mathbf{y}|\mathbf{x});q(\mathbf{y}|\mathbf{x}))$ is given by:
\begin{equation}\label{eq:conditional_CS_est}
\begin{split}
& \widehat{D}_{\text{CS}}(p(\mathbf{y}|\mathbf{x});q(\mathbf{y}|\mathbf{x}))  \approx \log\left( \sum_{j=1}^M \left( \frac{ \sum_{i=1}^M K_{ji}^p L_{ji}^p }{ (\sum_{i=1}^M K_{ji}^p)^2 } \right) \right) \\
& + \log\left( \sum_{j=1}^N \left( \frac{ \sum_{i=1}^N K_{ji}^q L_{ji}^q }{ (\sum_{i=1}^N K_{ji}^q)^2 } \right) \right) \\
& - \log \left( \sum_{j=1}^M \left( \frac{ \sum_{i=1}^N K_{ji}^{pq} L_{ji}^{pq} }{ (\sum_{i=1}^M K_{ji}^p) (\sum_{i=1}^N K_{ji}^{pq}) } \right) \right) \\
& - \log \left( \sum_{j=1}^N \left( \frac{ \sum_{i=1}^M K_{ji}^{qp} L_{ji}^{qp} }{ (\sum_{i=1}^M K_{ji}^{qp}) (\sum_{i=1}^N K_{ji}^q) } \right) \right).
\end{split}
\end{equation}
\end{proposition}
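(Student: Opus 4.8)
The plan is to derive Eq.~(\ref{eq:conditional_CS_est}) by combining a kernel density estimate of the relevant densities with a Monte Carlo (sample-average) evaluation of each integral, rather than by carrying the integrals out analytically as in~\cite{jenssen2006cauchy}. The obstacle that forces this route is that every term in Eq.~(\ref{eq:conditional_CS}) carries a marginal density in its denominator (e.g. $p^2(\mathbf{x})$), so even after substituting a KDE the integral over $\mathbf{x}$ is no longer a closed-form Gaussian convolution. The trick is therefore to first recast each integral as an expectation taken with respect to a density that is actually available through samples, so that the outer integral can be replaced by an empirical average.

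Concretely, I would start from the first quadratic term and write $\int_\mathcal{X}\int_\mathcal{Y} \frac{p^2(\mathbf{x},\mathbf{y})}{p^2(\mathbf{x})}\, d\mathbf{x}\, d\mathbf{y} = \mathbb{E}_{(\mathbf{x},\mathbf{y})\sim p}\!\left[\frac{p(\mathbf{x},\mathbf{y})}{p^2(\mathbf{x})}\right]$, since one factor of $p(\mathbf{x},\mathbf{y})$ plays the role of the sampling density. Replacing the expectation by the average over the $M$ samples of $\psi_s$ and plugging in the product-kernel KDEs $\hat p(\mathbf{x}_j^s,\mathbf{y}_j^s)=\frac{1}{M}\sum_i \kappa_\sigma(\mathbf{x}_j^s-\mathbf{x}_i^s)\kappa_\sigma(\mathbf{y}_j^s-\mathbf{y}_i^s)=\frac{1}{M}\sum_i K_{ji}^p L_{ji}^p$ and $\hat p(\mathbf{x}_j^s)=\frac{1}{M}\sum_i K_{ji}^p$ yields, after cancelling the powers of $M$, exactly $\sum_{j=1}^M \frac{\sum_i K_{ji}^p L_{ji}^p}{(\sum_i K_{ji}^p)^2}$. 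The $q^2$ term follows identically by sampling from $\psi_t$, giving the second summand of Eq.~(\ref{eq:conditional_CS_est}).

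For the cross term I would exploit that $\int_\mathcal{X}\int_\mathcal{Y}\frac{p(\mathbf{x},\mathbf{y})q(\mathbf{x},\mathbf{y})}{p(\mathbf{x})q(\mathbf{x})}d\mathbf{x}\, d\mathbf{y}$ can be written either as $\mathbb{E}_{(\mathbf{x},\mathbf{y})\sim p}[q(\mathbf{x},\mathbf{y})/(p(\mathbf{x})q(\mathbf{x}))]$ or as $\mathbb{E}_{(\mathbf{x},\mathbf{y})\sim q}[p(\mathbf{x},\mathbf{y})/(p(\mathbf{x})q(\mathbf{x}))]$, depending on which of the two joint factors is treated as the sampling density. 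Estimating the first expectation over $\psi_s$ (using the cross-Gram matrices $K^{pq},L^{pq}$ to evaluate the $q$-densities at $\mathbf{x}_j^s$) produces the third summand, and estimating the second over $\psi_t$ produces the fourth. The factor $-2$ in front of the cross term in Eq.~(\ref{eq:conditional_CS}) is then split symmetrically as $-2\log(c)=-\log(c)-\log(c)$, and each copy is approximated by one of the two estimates; this is precisely what delivers a symmetric estimator and the two separate $-\log(\cdot)$ terms of Eq.~(\ref{eq:conditional_CS_est}).

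The main subtlety to handle carefully is that two distinct approximations are being composed — the Monte Carlo replacement of the outer integral and the KDE plug-in of the integrand densities — which is why the statement is written with ``$\approx$'' rather than ``$=$''. I would note that the diagonal ($i=j$) contributions are deliberately retained, leaving the estimator finite-sample biased but consistent as $M,N\to\infty$ with $\sigma\to 0$ at an appropriate rate, and that the symmetrization of the cross term is a modeling choice made to preserve the symmetry of the final measure rather than something forced by the derivation.
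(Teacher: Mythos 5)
Your proposal is correct and follows essentially the same route as the paper's own proof: each of the two quadratic terms and the cross term is first rewritten as an expectation under the available sampling density (the resubstitution idea), then estimated by a sample average with product-kernel KDE plug-ins, with the powers of $M$ and $N$ cancelling to give the stated Gram-matrix sums. Your handling of the cross term—computing it once as $\mathbb{E}_{p}[q/(pq)]$ over $\psi_s$ and once as $\mathbb{E}_{q}[p/(pq)]$ over $\psi_t$, then splitting $-2\log(c)$ into two $-\log(\cdot)$ terms to obtain a symmetric estimator—is exactly the paper's final step.
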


\begin{remark}[Difference between CS and conditional CS]
Upon closely examining the expressions for CS divergence and conditional CS divergence, we can identify some interesting connections. Suppose $M=N$ for simplicity, the expression of CS divergence according to Eq.~(\ref{eq:CS_divergence_est}) can be reformulated as\footnote{In fact, the last two terms in Eq.~(\ref{eq:CS_reformulate}) are equal. We reformulate the CS divergence in this form only for the ease of the following analysis.}:
\begin{equation}\label{eq:CS_reformulate}
\begin{split}
& D_{\text{CS}} (p(\mathbf{y});q(\mathbf{y})) = \underbrace{\log \left(\frac{1}{M^2} \tr(L^p\cdot\mathbf{1})\right)}_{\text{within-distrib. similarity}} + \underbrace{\log \left(\frac{1}{N^2} \tr(L^q\cdot\mathbf{1})\right)}_{\text{within-distrib. similarity}} \\
& - \underbrace{\log \left(\frac{1}{MN} \tr(L^{pq}\cdot\mathbf{1})\right)}_{\text{cross-distrib. similarity}} - \underbrace{\log \left(\frac{1}{NM} \tr(L^{qp}\cdot\mathbf{1})\right)}_{\text{cross-distrib. similarity}},
\end{split}
\end{equation}
where $\mathbf{1}$ is the all-ones matrix. $\log \left(\frac{1}{M^2} \tr(L^p\cdot\mathbf{1})\right) = \log\left(\frac{1}{M^2}\sum_{i,i'=1}^M \kappa({\bf y}_i^p-{\bf y}_{i'}^p)\right) \approx \log \left( \mathbb{E}_{\mathbf{y}_i\sim p,\mathbf{y}_{i'}\sim p} \kappa(\mathbf{y}_i^p-\mathbf{y}_{i'}^p) \right)$ measures the logarithm of the expected \emph{within-distribution similarity} for any two samples that are drawn from $p$ (the same interpretation applies to the term $\log \left(\frac{1}{N^2} \tr(L^q\cdot\mathbf{1})\right)$). By contrast, $\log \left(\frac{1}{MN} \tr(L^{pq}\cdot\mathbf{1})\right) = \log\left(\frac{1}{MN}\sum_{i=1}^M \sum_{j=1}^N \kappa({\bf y}_i^p-{\bf y}_j^q)\right) \approx  \log \left( \mathbb{E}_{\mathbf{y}_i\sim p,\mathbf{y}_j\sim q} \kappa(\mathbf{y}_i^p-\mathbf{y}_j^q) \right)$ measures the logarithm of the expected \emph{cross-distribution similarity} for any two samples such that one is drawn from $p$, whereas another is drawn from $q$ (the same interpretation applies to the term $\log \left(\frac{1}{NM} \tr(L^{qp}\cdot\mathbf{1})\right)$).

Similarly, the expression of conditional CS divergence (in Eq.~(\ref{eq:conditional_CS_est})) can be reformulated as:
\begin{equation}\label{eq:conditional_CS_reformulate}
\begin{split}
& D_{\text{CS}} (p(\mathbf{y}|\mathbf{x});q(\mathbf{y}|\mathbf{x})) = \log \left(\tr(L^p\cdot C_1)\right) + \log \left(\tr(L^q\cdot C_2)\right) \\
& - \log \left(\tr(L^{pq}\cdot C_3)\right) - \log \left(\tr(L^{qp}\cdot C_4)\right),
\end{split}
\end{equation}
where $C_1$, $C_2$, $C_3$ and $C_4$ are some matrices based on the conditional variables $\mathbf{x}$ in both data sets.
\end{remark}

Comparing Eq.~(\ref{eq:CS_reformulate}) with Eq.~(\ref{eq:conditional_CS_reformulate}), it is easy to observe that both CS divergence and conditional CS divergence leverage the general idea of summing up within-distribution similarities and then subtracting cross-distribution similarity, and rely on the Gram matrices $L^p$, $L^q$, $L^{qp}$ and $L^{pq}$. However, the CS divergence assigns uniform weights on elements of those Gram matrices, whereas the conditional CS divergence applies non-uniform weights, in which the weights are only determined by the conditional variable $\mathbf{x}$. Specifically, $C_1 \in \mathbb{R}^{M\times M}$, $C_2 \in \mathbb{R}^{N\times N}$, $C_3 \in \mathbb{R}^{N\times M}$ and $C_4 \in \mathbb{R}^{M\times N}$ are given by:
\begin{equation}
C_1 =
\begin{pmatrix}
\frac{K_{11}^p}{(\sum_{i=1}^M K_{1i}^p)^2}  & \cdots & \frac{K_{M1}^p}{(\sum_{i=1}^M K_{Mi}^p)^2} \\
\vdots & \ddots & \vdots\\
\frac{K_{1M}^p}{(\sum_{i=1}^M K_{1i}^p)^2} & \cdots & \frac{K_{MM}^p}{(\sum_{i=1}^M K_{Mi}^p)^2}
\end{pmatrix},
\end{equation}

\begin{equation}
C_2 =
\begin{pmatrix}
\frac{K_{11}^q}{(\sum_{i=1}^N K_{1i}^q)^2}  & \cdots & \frac{K_{N1}^q}{(\sum_{i=1}^N K_{Ni}^q)^2} \\
\vdots & \ddots & \vdots\\
\frac{K_{1N}^q}{(\sum_{i=1}^N K_{1i}^q)^2} & \cdots & \frac{K_{NN}^q}{(\sum_{i=1}^N K_{Ni}^q)^2} ,\
\end{pmatrix},
\end{equation}

\begin{equation}
C_3 =
\begin{pmatrix}
\frac{K_{11}^{pq}}{(\sum_{i=1}^M K_{1i}^p)(\sum_{i=1}^N K_{1i}^{pq})}  & \cdots & \frac{K_{M1}^{pq}}{(\sum_{i=1}^M K_{Mi}^p)(\sum_{i=1}^N K_{Mi}^{pq})} \\
\vdots & \ddots & \vdots\\
\frac{K_{1N}^{pq}}{(\sum_{i=1}^M K_{1i}^p)(\sum_{i=1}^N K_{1i}^{pq})} & \cdots & \frac{K_{MN}^{pq}}{(\sum_{i=1}^M K_{Mi}^p)(\sum_{i=1}^N K_{Mi}^{pq})}
\end{pmatrix},
\end{equation}

\begin{equation}
C_4 =
\begin{pmatrix}
\frac{K_{11}^{qp}}{(\sum_{i=1}^M K_{1i}^{qp})(\sum_{i=1}^N K_{1i}^{q})}  & \cdots & \frac{K_{1N}^{qp}}{(\sum_{i=1}^M K_{Mi}^{qp})(\sum_{i=1}^N K_{Mi}^{q})} \\
\vdots & \ddots & \vdots\\
\frac{K_{1M}^{qp}}{(\sum_{i=1}^M K_{1i}^{qp})(\sum_{i=1}^N K_{1i}^{q})} & \cdots & \frac{K_{NM}^{qp}}{(\sum_{i=1}^M K_{Mi}^{qp})(\sum_{i=1}^N K_{Mi}^{q})}
\end{pmatrix},
\end{equation}

We provide a geometric interpretation in Fig.~\ref{fig:interpretation} to further clarify the difference and use the cross-distribution similarity term as an example, i.e., $\tr(L^{pq}\cdot\mathbf{1})$ with respect to $\tr(L^{pq}\cdot C_3)$.
For both marginal and conditional CS divergences, given a point $\mathbf{y}_i^p$ (i.e., the $i$-th sample that is drawn from $p(\mathbf{y})$), we need to evaluate all its cross-distribution similarities, i.e., $L_{i1}^{pq}=\kappa(\mathbf{y}_i^p-\mathbf{y}_1^q)$ (solid line), $L_{i2}^{pq}=\kappa(\mathbf{y}_i^p-\mathbf{y}_2^q)$ (dotted line), $\cdots$, and $L_{iN}^{pq}=\kappa(\mathbf{y}_i^p-\mathbf{y}_N^q)$ (dashed line).

Without random variable $\mathbf{x}$, the sum of all similarities for $\mathbf{y}_i^p$ is simply $\sum_{j=1}^N L_{ij}^{pq}$ (i.e., equal weight).
Now, if there is a conditional variable $\mathbf{x}$, it can be expected that the sum of cross-distribution similarities will be influenced by this variable. Due to the \emph{i.i.d.} assumption, we find that the weight on $L_{ij}^{pq}$ is only determined by the similarity between $\mathbf{x}_i^p$ and $\mathbf{x}_j^q$ (i.e., $K_{ij}^{pq}$), and is independent to $K_{i1}^{pq}$, $\cdots$, $K_{i,j-1}^{pq}$, $K_{i,j+1}^{pq}$, $\cdots$, $K_{iN}^{pq}$ (if we ignore the normalization term), which makes sense. Moreover, the reason why the impact of $K_{ij}^{pq}$ to $L_{ij}^{pq}$ takes the form $L_{ij}^{pq} K_{ij}^{pq}$ (rather than other nonlinear forms) can be attributed to the Gaussian kernel that we used in our paper, which satisfies $\kappa\left(\begin{bmatrix} \mathbf{x}_i \\ \mathbf{y}_i \end{bmatrix} - \begin{bmatrix} \mathbf{x}_j \\ \mathbf{y}_j \end{bmatrix} \right) = \kappa (\mathbf{x}_i - \mathbf{x}_j) \kappa (\mathbf{y}_i - \mathbf{y}_j)$.

\begin{figure}[t]
	\centering
		\includegraphics[width=.7\linewidth]{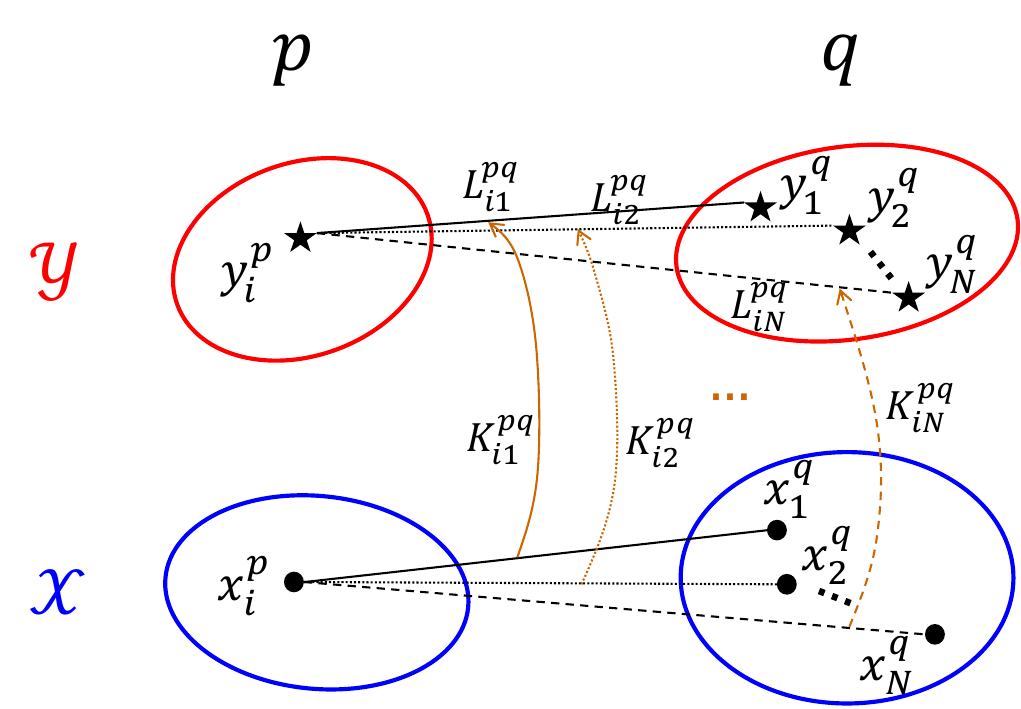}
	\caption{To evaluate the expected value of cross-distribution similarity for $\mathbf{y}_i^p$, the weight on $L_{ij}^{pq}$ is only determined by $K_{ij}^{pq}$, and is independent to $K_{i1}^{pq}$, $\cdots$, $K_{i,j-1}^{pq}$, $K_{i,j+1}^{pq}$, $\cdots$, $K_{iN}^{pq}$.}
	\label{fig:interpretation}
\end{figure}


\begin{remark}[Difference between conditional CS and conditional MMD]
Interestingly, the relationship between CS divergence (Eq.~(\ref{eq:CS_reformulate})) and conditional CS divergence (Eq.~(\ref{eq:conditional_CS_reformulate})) also holds for MMD and conditional MMD~\cite{ren2016conditional}. Specifically, suppose $M=N$, the objective of MMD in Eq.~(\ref{eq:MMD}) can be rewritten as:
\begin{equation}
    \widehat{\text{MMD}}^2 = \frac{1}{M^2}\tr(L^p\cdot\mathbf{1}) + \frac{1}{N^2}\tr(L^q\cdot\mathbf{1}) - \frac{2}{MN}\tr(L^{pq}\cdot\mathbf{1}).
\end{equation}

The objective of conditional MMD in Eq.~(\ref{eq:CMMD}) can be expressed as:
\begin{equation}\label{eq:CMMD_reformulation}
    \widehat{\text{CMMD}}^2 = \frac{1}{M^2}\tr(L^p\cdot C'_1) + \frac{1}{N^2}\tr(L^q\cdot C'_2) - \frac{2}{MN}\tr(L^{pq}\cdot C'_3),
\end{equation}
where $C'_1$, $C'_2$ and $C'_3$ are some matrices based on the conditional variables x in both data sets.

Despite the high similarity, we highlight two major differences between conditional CS divergence and conditional MMD.

\begin{itemize}
    \item ($C_1\neq C'_1$, $C_2\neq C'_2$ and $C_3\neq C'_3$). Conditional CS divergence is not simply computed by just putting a ``log” on each term of conditional MMD. That is, $C_1\neq C'_1$, $C_2\neq C'_2$ and $C_3\neq C'_3$. For simplicity, we only analyze in detail the difference between $C_1$ and $C'_1$.

    Let $D^p = \diag \left(K^p\cdot \mathbf{1}\right) \in \mathbb{R}^{M\times M}$, where $\diag(\cdot)$ refers to a diagonal matrix, i.e.,
    \begin{equation}
    D^p =
    \begin{pmatrix}
    \sum_{i=1}^M K_{1i}^p  & \cdots & 0 \\
    \vdots & \ddots & \vdots\\
    0 & \cdots & \sum_{i=1}^M K_{Mi}^p
    \end{pmatrix}.
    \end{equation}

    We have $C_1 = K^p (D^p)^{-2}$. By comparing Eq.~(\ref{eq:CMMD}) with Eq.~(\ref{eq:CMMD_reformulation}), we also have $C'_1 = (\tilde{K}^p)^{-1} K^p (\tilde{K}^p)^{-1}$, where $\tilde{K}^p = K^p + \lambda I$. Therefore,
    \begin{equation}
    \begin{split}
         C'_1 & = (\tilde{K}^p)^{-1} K^p (\tilde{K}^p)^{-1} \\
         & = (\tilde{K}^p)^{-1} K^p (D^p)^{-2} (D^p)^2 (\tilde{K}^p)^{-1} \\
         & = (\tilde{K}^p)^{-1} C_1 (D^p)^2 (\tilde{K}^p)^{-1} \neq C_1.
    \end{split}
    \end{equation}

    $C'_1$ involves matrix inverse of $\tilde{K}^p$ with computational complexity $\mathcal{O}(M^3)$. By contrast, $C_1$ only involves matrix inverse of a diagonal matrix $D^p$ with computational complexity $\mathcal{O}(M)$. Additionally, $C_1$ avoids the introduction of an additional hyperparameter $\lambda$, which is actually hard to tune in practice.
    The difference between $C_2$ (or $C_3$) and $C'_2$ (or $C'_3$) can be analyzed similarly.

    \item ($\tr(L^{pq}\cdot C_3) \neq \tr(L^{qp}\cdot C_4)$). The last two terms in conditional CS divergence are not the same.
\end{itemize}

\end{remark}

\subsection{Two special cases of conditional CS divergence}\label{sec:extensions}

We then discuss two special cases of the basic conditional CS divergence. Our purpose is to illustrate the flexibility and versatility offered by the definition of the conditional CS divergence, the elegance of its sample estimator (by just relying on the quadratic form and the inner products of samples), and its great potential to different downstream applications.

The last advantage does not hold with other divergence measures that have been discussed in Section~\ref{sec:existing_measures}. For example, if we stick to the decomposition rule in Eq.~(\ref{eq:KL}) (as in conditional KL divergence and conditional Bregman divergence), we implicitly assume that the dimension of $\mathbf{x}$ remains the same in $p$ and $q$, which may not hold in certain scenarios (e.g., comparing $p(\mathbf{y})$ with respect to $q(\mathbf{y}|\mathbf{x})$). On the other hand, there is still no universal agreement on a rigorous definition of the embedding of conditional distributions in RKHS (due to improper assumptions or difficulty of interpretation)~\cite{park2020measure}, which limits the usage of conditional MMD in a wider setting.


\subsubsection{$p(\mathbf{y}_1|\mathbf{x})$ with respect to $p(\mathbf{y}_2|\mathbf{x})$}

Our first case assumes that the variable $\mathbf{x}$ in both distributions remains the same and aims to quantify the divergence between $p(\mathbf{y}_1|\mathbf{x})$ and $p(\mathbf{y}_2|\mathbf{x})$, in which $\mathbf{y}_1$ and $\mathbf{y}_2$ are dependent. This case is common in supervised learning.

Recall a standard supervised learning paradigm, we have a training set $\mathcal{D}=\{\mathbf{x}_i,y_i\}_{i=1}^N$ of input feature $\mathbf{x}$ and desired response variable $y$. We assume that $\mathbf{x}_i$ and $y_i$ are sampled \emph{i.i.d.} from a true but unknown data distribution $p(\mathbf{x},y)=p(y|\mathbf{x})p(\mathbf{x})$. The high-level goal of supervised learning is to use the dataset $\mathcal{D}$ to learn a particular conditional distribution $q_\theta(\hat{y}|\mathbf{x})$ of the task outputs given the input features parameterized by $\theta$, which is a good approximation of $p(y|\mathbf{x})$, in which $\hat{y}$ refers to the predicted output.

If we measure the closeness between $p(y|\mathbf{x})$ and $q_\theta (\hat{y}|\mathbf{x})$ with the KL divergence, the learning objective becomes~\cite{rodriguez2019information}:
\begin{equation}\label{eq:KL_loss}
\begin{split}
\min D_{\text{KL}}(p(y|\mathbf{x});q_\theta (\hat{y}|\mathbf{x})) & = \min \mathbb{E}\left(-\log (q_\theta (\hat{y}|\mathbf{x}))\right) - H(y|\mathbf{x})
\\ & \Leftrightarrow \min \mathbb{E}\left(-\log (q_\theta (\hat{y}|\mathbf{x}))\right),
\end{split}
\end{equation}
where $H(y|\mathbf{x})$ only depends on $\mathcal{D}$ that is independent to the optimization over parameters $\theta$.


For regression, suppose $q_\theta (\hat{y}|\mathbf{x})$ is distributed normally $\mathcal{N}(h_\theta(\mathbf{x}),\sigma^2 I)$, and the network $h_\theta(\mathbf{x})$ gives the prediction of the mean of the Gaussian, the objective reduces to $\mathbb{E}\left(\|y-h_\theta(\mathbf{x})\|_2^2\right)$, which amounts to the mean squared error (MSE) loss\footnote{Note that, $\log (q_\theta (\hat{y}|\mathbf{x})) = \log \left( \frac{1}{\sqrt{2\pi}\sigma} \exp\left( -\frac{\|y-h_\theta(\mathbf{x})\|_2^2}{2\sigma^2} \right) \right) = -\log \sigma - \frac{1}{2}\log (2\pi) -  \frac{\|y-f_\theta(\mathbf{x})\|_2^2}{2\sigma^2} $.} and is empirically estimated by $\frac{1}{N}\sum_{i=1}^N (y_i-\hat{y}_i)^2$.

The CS divergence between $p(y|\mathbf{x})$ and $q_\theta(\hat{y}|\mathbf{x})$ is defined as:
\begin{equation}
\begin{split}
& D_{\text{CS}}(p(y|\mathbf{x});q_\theta(\hat{y}|\mathbf{x})) = - 2 \log \left(\int_\mathcal{X}\int_\mathcal{Y} p(y|\mathbf{x}) q_\theta(\hat{y}|\mathbf{x}) d\mathbf{x}dy \right) \\
& + \log \left(\int_\mathcal{X}\int_\mathcal{Y} p^2(y|\mathbf{x}) d\mathbf{x}dy\right) + \log \left(\int_\mathcal{X}\int_\mathcal{Y} q_\theta^2(\hat{y}|\mathbf{x}) d\mathbf{x}dy\right) \\
& = - 2 \log \left(\int_\mathcal{X}\int_\mathcal{Y} \frac{p(\mathbf{x},y)q_\theta(\mathbf{x},\hat{y})}{p^2(\mathbf{x})} d\mathbf{x}dy \right) \\
& + \log \left(\int_\mathcal{X}\int_\mathcal{Y} \frac{p^2(\mathbf{x},y)}{p^2(\mathbf{x})} d\mathbf{x}dy\right) + \log \left(\int_\mathcal{X}\int_\mathcal{Y} \frac{q_\theta^2(\mathbf{x},\hat{y})}{p^2(\mathbf{x})} d\mathbf{x}dy\right),
\end{split}
\end{equation}
which can be elegantly estimated as shown in Proposition~\ref{proposition_1} in a non-parametric way, without any parametric assumptions (e.g., Gaussian) on the underlying distribution $q_\theta(\hat{y}|\mathbf{x})$ as in the KL divergence case.

\begin{proposition}\label{proposition_1}
Assume that we are given observations $\{(\mathbf{x}_i,y_i,\hat{y}_i )\}_{i=1}^N$, where $\mathbf{x}\in \mathbb{R}^{d_\mathbf{x}}$ denotes an input variable with dimensionality $d_\mathbf{x}$, $y$ is the desired response, and $\hat{y}$ is the predicted output generated by a model $h_\theta$. Let $K$, $L^1$ and $L^2$ denote, respectively, the Gram matrices for the variables $\mathbf{x}$, $y$, and $\hat{y}$ (i.e., $K_{ij}=\kappa(\mathbf{x}_i,\mathbf{x}_j)$, $L_{ij}^1=\kappa(y_i,y_j)$ and $L_{ij}^2=\kappa(\hat{y}_i,\hat{y}_j)$). Further, let $L^{21}$ denote the Gram matrix between $\hat{y}$ and $y$ (i.e., $L_{ij}^{21}=\kappa(\hat{y}_i,y_j)$). The prediction term $D_{\text{CS}}(p(y|\mathbf{x});q_\theta(\hat{y}|\mathbf{x}))$ is given by:
\begin{equation}\label{eq:CS_ext1}
\small
\begin{split}
& \widehat{D}_{\text{CS}}(p(y|\mathbf{x});q_\theta(\hat{y}|\mathbf{x}))  = \log\left( \sum_{j=1}^N \left( \frac{ \sum_{i=1}^N K_{ji} L_{ji}^1 }{ (\sum_{i=1}^N K_{ji})^2 } \right) \right) \\
& + \log\left( \sum_{j=1}^N \left( \frac{ \sum_{i=1}^N K_{ji} L_{ji}^2 }{ (\sum_{i=1}^N K_{ji})^2 } \right) \right)
 - 2 \log \left( \sum_{j=1}^N \left( \frac{ \sum_{i=1}^N K_{ji} L_{ji}^{21} }{ (\sum_{i=1}^N K_{ji})^2 } \right) \right).
\end{split}
\end{equation}
\end{proposition}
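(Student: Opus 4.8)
The plan is to treat each of the three integrals in the definition of $D_{\text{CS}}(p(y|\mathbf{x});q_\theta(\hat{y}|\mathbf{x}))$ separately, rewrite each one as an expectation over one of the two distributions, replace that expectation by an empirical average over the corresponding $N$ observations, and finally substitute Gaussian kernel density estimates (KDEs) for the densities that remain inside. The one structural fact I would lean on throughout is that $p(y|\mathbf{x})$ and $q_\theta(\hat{y}|\mathbf{x})$ are conditioned on the \emph{same} input marginal $p(\mathbf{x})$ (since $\hat{y}_i=h_\theta(\mathbf{x}_i)$ and the inputs $\{\mathbf{x}_i\}$ are shared). This is exactly why every denominator is $p^2(\mathbf{x})$ rather than $p(\mathbf{x})q(\mathbf{x})$, and why a single Gram matrix $K$ on $\mathbf{x}$ serves all three terms.

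Concretely, I would first observe that the quadratic term for $p$ equals $\mathbb{E}_{(\mathbf{x},y)\sim p}\big[\tfrac{p(\mathbf{x},y)}{p^2(\mathbf{x})}\big]$, the quadratic term for $q_\theta$ equals $\mathbb{E}_{(\mathbf{x},\hat{y})\sim q_\theta}\big[\tfrac{q_\theta(\mathbf{x},\hat{y})}{p^2(\mathbf{x})}\big]$, and the cross term equals $\mathbb{E}_{(\mathbf{x},\hat{y})\sim q_\theta}\big[\tfrac{p(\mathbf{x},\hat{y})}{p^2(\mathbf{x})}\big]$. Approximating each expectation by the sample average over the relevant observations reduces everything to evaluating a joint density and a marginal density at the sample points. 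I would then plug in $\hat{p}(\mathbf{x},y)=\tfrac1N\sum_i G_\sigma(\mathbf{x}-\mathbf{x}_i)G_\sigma(y-y_i)$ and its analogues, and exploit the product property $\kappa([\mathbf{x};y]-[\mathbf{x}';y'])=\kappa(\mathbf{x}-\mathbf{x}')\kappa(y-y')$ of the Gaussian kernel. Under this factorization, $\hat{p}(\mathbf{x}_j,y_j)$ is proportional to $\sum_i K_{ji}L^1_{ji}$, $\hat{q}_\theta(\mathbf{x}_j,\hat{y}_j)$ to $\sum_i K_{ji}L^2_{ji}$, the mixed evaluation $\hat{p}(\mathbf{x}_j,\hat{y}_j)$ to $\sum_i K_{ji}L^{21}_{ji}$, and every marginal $\hat{p}(\mathbf{x}_j)$ to $\sum_i K_{ji}$. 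Each empirical term therefore collapses to the Nadaraya--Watson-type ratio $\sum_j\big(\sum_i K_{ji}L_{ji}\big)/\big(\sum_i K_{ji}\big)^2$, and assembling them with the coefficients $(+1,+1,-2)$ inherited from the CS divergence yields Eq.~(\ref{eq:CS_ext1}). A short bookkeeping check shows the kernel normalization constants $c_x,c_y$ contribute an additive $\log(c_y N/c_x)$ to each of the three log-terms, which cancels exactly because the coefficients sum to zero; this is what legitimizes the ``$=$'' in the statement up to the Monte Carlo approximation.

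The main obstacle is the rewriting step, namely handling the $1/p^2(\mathbf{x})$ factor in each denominator. A direct plug-in-then-integrate strategy would force a convolution of two Gaussians over $y$ and produce an intractable double sum over samples; it is the expectation-plus-Monte-Carlo route that keeps the denominator as a single marginal-density evaluation and delivers the clean single-sum form. I would also need to justify, for the cross term, that estimating the single integral $\int_\mathcal{X}\int_\mathcal{Y}\tfrac{p(\mathbf{x},y)q_\theta(\mathbf{x},y)}{p^2(\mathbf{x})}\,d\mathbf{x}dy$ once (as an expectation over $q_\theta$, which produces $L^{21}$) and weighting it by $-2$ is correct here, in contrast to the general conditional CS divergence whose asymmetric denominator $p(\mathbf{x})q(\mathbf{x})$ forces the cross contribution to split into two distinct terms.
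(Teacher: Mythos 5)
Your proposal is correct, but it reaches the result by a genuinely different route than the paper. The paper's own proof is a one-line specialization: it invokes the already-established general two-sample estimator of $D_{\text{CS}}(p(\mathbf{y}|\mathbf{x});q(\mathbf{y}|\mathbf{x}))$ (Proposition 2), observes that in this supervised-learning setting the conditioning samples coincide, so that $K=K^p=K^q=K^{pq}=K^{qp}\in\mathbb{R}^{N\times N}$, and substitutes into that four-term formula. You instead re-run the underlying estimation machinery from scratch on the three integrals of this special case: rewrite each as an expectation ($\mathbb{E}_{p}[p/p^2]$, $\mathbb{E}_{q_\theta}[q_\theta/p^2]$, $\mathbb{E}_{q_\theta}[p/p^2]$), Monte-Carlo it over the shared sample, and plug in product-form Gaussian KDEs --- exactly the technique the paper uses to prove Proposition 2, but applied directly here. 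What the paper's route buys is brevity; what yours buys is precision on a point the paper's one-liner glosses over. Substituting the equal Gram matrices into the paper's \emph{symmetrized} four-term formula literally yields $-\log\left(\sum_j \frac{\sum_i K_{ji}(L^{21})_{ij}}{(\sum_i K_{ji})^2}\right)-\log\left(\sum_j \frac{\sum_i K_{ji}(L^{21})_{ji}}{(\sum_i K_{ji})^2}\right)$, i.e.\ one term built on $L^{21}$ and one on its transpose; because the row normalizations $(\sum_i K_{ji})^2$ differ across $j$, these two sums are not equal as finite quantities, only as consistent estimators of the same cross integral. The stated $-2\log$ form with a single $L^{21}$ corresponds to the paper's \emph{unsymmetrized} estimator with the cross term evaluated as an expectation over $q_\theta$ --- which is precisely what your derivation produces, so your argument lands exactly on the claimed formula rather than on something equal to it only up to estimator equivalence. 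Your bookkeeping remark is also sound: the kernel-normalization constants and the dropped $1/N$ factors enter each of the three logarithms identically and cancel because the coefficients $(+1,+1,-2)$ sum to zero, which is the implicit reason the paper can write the estimator with unnormalized Gram-matrix sums.
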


To test the effectiveness of Eq.~(\ref{eq:CS_ext1}) as a valid loss function, we train neural networks for prediction purposes. The first data is the benchmark California housing\footnote{\url{https://scikit-learn.org/stable/modules/generated/sklearn.datasets.fetch_california_housing.html}.}, which consists of $20,640$ samples and $8$ features. We randomly select $70\%$ training samples and $30\%$ test samples. The task is to predict the median house value which has been rescaled between $0.15$ and $5$. The second data is the rotation MNIST\footnote{\url{https://de.mathworks.com/help/deeplearning/ug/train-a-convolutional-neural-network-for-regression.html}.}, in which the goal is to predict the rotation angles of handwritten digits. Specifically, $10,000$ samples were selected from MNIST dataset. Each sample was randomly rotated with a degree that is uniformly distributed between $-45^{\circ}$ and $45^{\circ}$. The training and test sets each contain $5,000$ images. For simplicity, we use fully-connected networks ($8-128-32-128-1$ for California housing and $784-256-196-36-1$ for rotation MNIST) and Sigmoid activation function. We choose SGD optimizer with learning rate $1e-3$ and mini-batch size $128$. We observe that the conditional CS divergence loss achieves slightly better prediction accuracy than the MSE loss, as shown in Fig.~\ref{fig:loss}.

In case of multi-output regression, also known as multi-task regression~\cite{chen2010graph} (i.e., when there are more than two target variables to be predicted), some tasks are often more closely related and more likely to share common relevant covariates than others. Thus, it is necessary to take into account the complex correlation structure in the outputs for a more effective multi-task learning~\cite{chen2010graph,yu2020measuring}. It is reasonable to measure relatedness between the $i$-th regression task and the $j$-th regression task using the conditional CS divergence between $p(y_i|\mathbf{x})$ and $p(y_j|\mathbf{x})$. This proposal is empirically justified in our Section~\ref{sec:simulation2}.

\begin{figure}[t]
	\centering
	\subfigure[Rotation MNIST]{
		\centering
		\includegraphics[width=.4\linewidth]{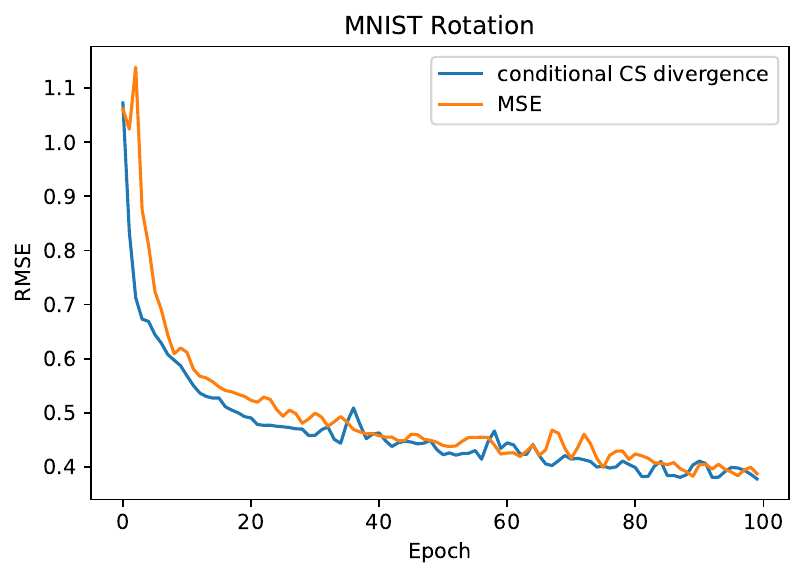}
	}
\hfill
	\subfigure[California Housing]{
		\centering
		\includegraphics[width=.4\linewidth]{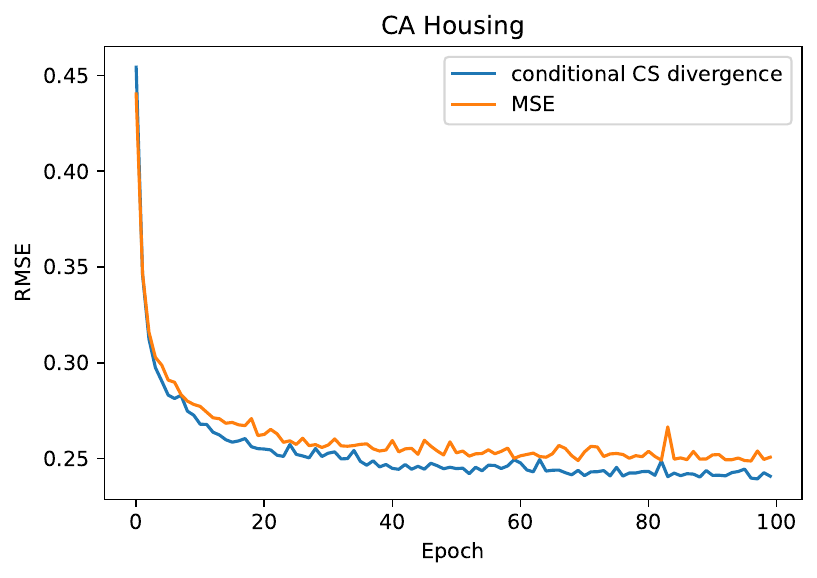}
	}
	\caption{The root mean square error (RMSE) of the regression network trained with MSE loss and conditional CS divergence loss on test data in each epoch.}
	\label{fig:loss}
\end{figure}

\subsubsection{$p(\mathbf{y}|\mathbf{x}_1)$ with respect to $p(\mathbf{y}|\{\mathbf{x}_1,\mathbf{x}_2\})$} \label{sec:3_2_2}

Our second case assumes that the variable $\mathbf{y}$ remains the same and aims to quantify the divergence between $p(\mathbf{y}|\mathbf{x}_1)$ and $p(\mathbf{y}|\{\mathbf{x}_1,\mathbf{x}_2\})$, i.e., there is a third variable $\mathbf{x}_2$ that influences the distribution of $\mathbf{y}$.
From a probabilistic perspective, $p(\mathbf{y}|\mathbf{x}_1)=p(\mathbf{y}|\{\mathbf{x}_1,\mathbf{x}_2\})$ implies that $\mathbf{y}$ is conditionally independent to $\mathbf{x}_2$ given $\mathbf{x}_1$, written symbolically as $\mathbf{y} \ind \mathbf{x}_2 | \mathbf{x}_1$. Hence, the divergence between $p(\mathbf{y}|\mathbf{x}_1)$ and $p(\mathbf{y}|\{\mathbf{x}_1,\mathbf{x}_2\})$ is also a good indicator on the degree of conditional independence, which is considerably difficult to measure.

Let us denote $\vec{\mathbf{x}}=[\mathbf{x}_1;\mathbf{x}_2] \in \mathcal{X}$, i.e., the concatenation of $\mathbf{x}_1$ and $\mathbf{x}_2$, the CS divergence for $p(\mathbf{y}|\mathbf{x}_1)$ and $p(\mathbf{y}|\{\mathbf{x}_1,\mathbf{x}_2\})$ can be expressed as:
\begin{equation}
\small
\begin{split}
& D_{\text{CS}}(p(\mathbf{y}|\mathbf{x}_1);p(\mathbf{y}|\{\mathbf{x}_1,\mathbf{x}_2\})  = - 2 \log \left(\int_\mathcal{X}\int_\mathcal{Y} p(\mathbf{y}|\mathbf{x}_1) p(\mathbf{y}|\vec{\mathbf{x}}) d\vec{\mathbf{x}}d\mathbf{y} \right) \\
& + \log \left(\int_\mathcal{X}\int_\mathcal{Y} p^2(\mathbf{y}|\mathbf{x}_1) d\vec{\mathbf{x}}d\mathbf{y}\right) + \log \left(\int_\mathcal{X}\int_\mathcal{Y} p^2(\mathbf{y}|\vec{\mathbf{x}} ) d\vec{\mathbf{x}}d\mathbf{y}\right) \\
& = - 2 \log \left(\int_\mathcal{X}\int_\mathcal{Y} \frac{p(\mathbf{x}_1,\mathbf{y})p(\vec{\mathbf{x}},\mathbf{y})}{p(\mathbf{x}_1)p(\vec{\mathbf{x}})} d\vec{\mathbf{x}}d\mathbf{y} \right) \\
& + \log \left(\int_\mathcal{X}\int_\mathcal{Y} \frac{p^2(\mathbf{x}_1,\mathbf{y})}{p^2(\mathbf{x}_1)} d\vec{\mathbf{x}}d\mathbf{y}\right) + \log \left(\int_\mathcal{X}\int_\mathcal{Y} \frac{p^2(\vec{\mathbf{x}},\mathbf{y})}{p^2(\vec{\mathbf{x}})} d\vec{\mathbf{x}}d\mathbf{y}\right),
\end{split}
\end{equation}
which can be efficiently estimated from samples as shown in Proposition~\ref{proposition_2}.

\begin{proposition}\label{proposition_2}
Assume that we are given observations $\psi=\{(\mathbf{x}_i^1,\mathbf{x}_i^2,\mathbf{y}_i)\}_{i=1}^N$, where $\mathbf{x}^1\in \mathbb{R}^{d_1}$, $\mathbf{x}^2\in \mathbb{R}^{d_2}$ and $\mathbf{y}\in \mathbb{R}^{d_\mathbf{y}}$. Let $K^1$, $K^{12}$ and $L$ denote the Gram matrices for the variable $\mathbf{x}^1$, the concatenation of variables $\{\mathbf{x}^1,\mathbf{x}^2\}$, and the variable $\mathbf{y}$, respectively. That is, $(K^{12})_{ji} = \kappa\left(\begin{bmatrix} \mathbf{x}_j^1 \\ \mathbf{x}_j^2 \end{bmatrix} - \begin{bmatrix} \mathbf{x}_i^1 \\ \mathbf{x}_i^2 \end{bmatrix} \right) = \kappa (\mathbf{x}_j^1 - \mathbf{x}_i^1) \kappa (\mathbf{x}_j^2 - \mathbf{x}_i^2)$. The empirical estimation of $D_{\text{CS}}(p(\mathbf{y}|\mathbf{x}_1);p(\mathbf{y}|\{\mathbf{x}_1,\mathbf{x}_2\})$ is given by:
\begin{equation}\label{eq:CS_ext2}
\small
\begin{split}
& D_{\text{CS}}(p(\mathbf{y}|\mathbf{x}_1);p(\mathbf{y}|\{\mathbf{x}_1,\mathbf{x}_2\}) \approx \log\left( \sum_{j=1}^N \left( \frac{ \sum_{i=1}^N K_{ji}^1 L_{ji} }{ (\sum_{i=1}^N K_{ji}^1)^2 } \right) \right) \\
& + \log\left( \sum_{j=1}^N \left( \frac{ \sum_{i=1}^N K_{ji}^{12} L_{ji} }{ (\sum_{i=1}^N K_{ji}^{12})^2 } \right) \right) \\
& - 2 \log \left( \sum_{j=1}^N \left( \frac{ \sum_{i=1}^N K_{ji}^1 L_{ji} }{ (\sum_{i=1}^N K_{ji}^1)(\sum_{i=1}^N K_{ji}^{12}) } \right) \right).
\end{split}
\end{equation}
\end{proposition}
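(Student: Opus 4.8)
The plan is to estimate, separately, the three integrals appearing in $D_{\text{CS}}(p(\mathbf{y}|\mathbf{x}_1);p(\mathbf{y}|\{\mathbf{x}_1,\mathbf{x}_2\}))$: the two quadratic terms $\int_\mathcal{X}\int_\mathcal{Y} p^2(\mathbf{y}|\mathbf{x}_1)\,d\mathbf{x}_1 d\mathbf{y}$ and $\int_\mathcal{X}\int_\mathcal{Y} p^2(\mathbf{y}|\vec{\mathbf{x}})\,d\vec{\mathbf{x}}d\mathbf{y}$, together with the single cross term $\int_\mathcal{X}\int_\mathcal{Y} p(\mathbf{y}|\mathbf{x}_1)p(\mathbf{y}|\vec{\mathbf{x}})\,d\vec{\mathbf{x}}d\mathbf{y}$, reusing the non-parametric recipe behind the general estimator of Eq.~(\ref{eq:conditional_CS_est}). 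A simplification specific to this case is that both conditionals are built from one sample set $\psi$ and hence share the same response samples $\{\mathbf{y}_i\}$; a single output Gram matrix $L$ with $L_{ji}=\kappa(\mathbf{y}_j-\mathbf{y}_i)$ therefore serves all three terms, and the cross integral need only be estimated once, carrying the factor $2$ supplied by the $-2\log$ in the divergence definition, rather than being symmetrized into two directional estimates as in Eq.~(\ref{eq:conditional_CS_est}).

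The difficulty I anticipate is that substituting the Gaussian KDE for $p(\mathbf{y}|\mathbf{x})=p(\mathbf{x},\mathbf{y})/p(\mathbf{x})$ and then integrating over $\mathbf{x}$ in closed form is intractable, because the conditional estimate is a ratio of sums of Gaussians whose Lebesgue integral over $\mathbf{x}$ has no analytic expression. The key move I would make is to rewrite each integral as an expectation under the joint law of the conditioning variable(s) and $\mathbf{y}$, so that the outer integral becomes a Monte-Carlo average and one density factor is absorbed into the sampling measure. Concretely, multiplying and dividing by the joint density gives $\int_\mathcal{X}\int_\mathcal{Y} p^2(\mathbf{y}|\mathbf{x}_1)\,d\mathbf{x}_1 d\mathbf{y}=\mathbb{E}_{(\mathbf{x}_1,\mathbf{y})\sim p}\!\left[p(\mathbf{y}|\mathbf{x}_1)/p(\mathbf{x}_1)\right]$, the second quadratic term equals $\mathbb{E}_{(\vec{\mathbf{x}},\mathbf{y})\sim p}\!\left[p(\mathbf{y}|\vec{\mathbf{x}})/p(\vec{\mathbf{x}})\right]$, and the cross term equals $\mathbb{E}_{(\vec{\mathbf{x}},\mathbf{y})\sim p}\!\left[p(\mathbf{y}|\mathbf{x}_1)/p(\vec{\mathbf{x}})\right]$. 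This reformulation is exactly what yields a \emph{single} inner sum $\sum_i K_{ji}L_{ji}$ in each numerator (instead of a double sum over $\mathbf{y}$-indices) and accounts for the asymmetric denominator $(\sum_i K_{ji}^1)(\sum_i K_{ji}^{12})$ in the cross term.

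With the integrals in expectation form, I would replace each expectation by the empirical average $\frac{1}{N}\sum_{j=1}^N(\cdot)$ and substitute the Gaussian KDEs $\hat p(\mathbf{x}_j^1,\mathbf{y}_j)\propto\sum_i K_{ji}^1 L_{ji}$, $\hat p(\mathbf{x}_j^1)\propto\sum_i K_{ji}^1$, and $\hat p(\vec{\mathbf{x}}_j)\propto\sum_i K_{ji}^{12}$, where the product property of the Gaussian kernel, $\kappa([\mathbf{x};\mathbf{y}])=\kappa(\mathbf{x})\kappa(\mathbf{y})$ and $(K^{12})_{ji}=\kappa(\mathbf{x}_j^1-\mathbf{x}_i^1)\kappa(\mathbf{x}_j^2-\mathbf{x}_i^2)$ from the proposition statement, lets the joint densities factor cleanly into the stated Gram-matrix products; forming $\hat p(\mathbf{y}_j|\mathbf{x}_j^1)=\hat p(\mathbf{x}_j^1,\mathbf{y}_j)/\hat p(\mathbf{x}_j^1)$ and collecting factors then reproduces the three summands of Eq.~(\ref{eq:CS_ext2}) one by one. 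The remaining bookkeeping is to track the $1/N$ factors and the kernel normalization constants $(\sqrt{2\pi}\sigma)^{-d}$, which differ across the $p_1$-dimensional ($\mathbf{x}_1$), $(p_1+p_2)$-dimensional ($\vec{\mathbf{x}}$), and $q$-dimensional ($\mathbf{y}$) kernels; across the $\log(\cdot)+\log(\cdot)-2\log(\cdot)$ structure these collapse into one additive constant depending only on $\sigma$ and the dimensions, which the ``$\approx$'' in Eq.~(\ref{eq:CS_ext2}) absorbs and which leaves the conditional-independence criterion ($p(\mathbf{y}|\mathbf{x}_1)=p(\mathbf{y}|\vec{\mathbf{x}})$) unaffected. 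The main obstacle is precisely the reformulation step: the naive KDE-plus-analytic-integration route is closed-form only for the unconditional CS divergence via Gaussian convolution, whereas the denominator $p(\mathbf{x})$ here forces the empirical-mean reformulation, which is the source of both the single-sum structure and the approximate (rather than exact) equality.
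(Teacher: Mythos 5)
Your proposal is correct and takes essentially the same route as the paper's proof: decompose the divergence into the two conditional quadratic terms and the single cross term, rewrite each integral as an expectation under the joint sampling law (the resubstitution device, giving $\mathbb{E}_{p(\vec{\mathbf{x}},\mathbf{y})}\left[p(\mathbf{x}_1,\mathbf{y})/(p(\mathbf{x}_1)p(\vec{\mathbf{x}}))\right]$ for the cross term), then replace each expectation by an empirical mean with Gaussian product-kernel KDE plug-ins, which yields exactly the three sums of Eq.~(\ref{eq:CS_ext2}). Your explicit bookkeeping of the dimension-dependent kernel normalization constants---showing they collapse to a single additive constant across the $\log+\log-2\log$ structure---is a detail the paper's proof passes over silently, but it does not alter the argument.
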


There are vast AI applications that may benefit from an efficient sample estimator to the conditional independence~\cite{pogodin2023efficient}. Taking representation learning as an example, we aim to learn a representation function $\varphi$ for the features $\mathbf{x}$, such that our predictions $\hat{y}$ are ``invariant" to some metadata $\mathbf{z}$. If $\mathbf{z}$ refers to some protected attribute(s) such as race or gender, the \emph{Equalized Odds} condition~\cite{hardt2016equality} requires the conditional independence between prediction and protected attribute(s) given ground truth of the target, i.e., $\hat{y} \ind \mathbf{z} | y$ or equivalently $p(\hat{y}|y)=p(\hat{y}|\mathbf{z},y)$. On the other hand, if $\mathbf{z}$ is the environment index in which the data was collected, the condition $ y \ind \mathbf{z} | \varphi(\mathbf{x})$ or equivalently $p(y|\varphi(\mathbf{x}))=p(y|\mathbf{z},\varphi(\mathbf{x}))$ is commonly used as a target for invariant learning in domain generalization~\cite{li2022invariant}.

Since the attention of this paper is on dynamic data, we demonstrate the implication of Eq.~(\ref{eq:CS_ext2}) on time series causal discovery~\cite{assaad2022survey}: given two time series $\{x_t\}$ and $\{y_t\}$, determine the true causal direction between them, i.e., does $\{x_t\}$ cause $\{y_t\}$, or does $\{y_t\}$ cause $\{x_t\}$, which is also known as bivariate \emph{causal direction identification}~\cite{mooij2016distinguishing}.

According to Granger~\cite{granger1969investigating,granger1980testing} (the $2003$ Nobel Prize laureate in Economics), a time series (or process) $\{x_t\}$ causes another time series (or process) $\{y_t\}$ if the past of $\{x_t\}$ has unique information about the future of $\{y_t\}$. Essentially, Granger proposed to test the following hypothesis for identification of a causal effect of $\{x_t\}$ on $\{y_t\}$~\cite{su2008nonparametric}:
\begin{equation}
\left\{
\begin{array}{lr}
\mathcal{H}_0: p(y_{t+1}|\mathbf{y}_t^n) = p(y_{t+1}|\mathbf{y}_t^n,\mathbf{x}_t^m), \\
\quad\quad\quad \text{$\{\mathbf{x}_t\}$ is not the cause of $\{\mathbf{y}_t\}$}  \\
\mathcal{H}_1: p(y_{t+1}|\mathbf{y}_t^n) \neq p(y_{t+1}|\mathbf{y}_t^n,\mathbf{x}_t^m), \\
\quad\quad\quad \text{$\{\mathbf{x}_t\}$ is the cause of $\{\mathbf{y}_t\}$}
\end{array}
\right.
\end{equation}
where $y_{t+1}$ refers to the future observation of $\{y_t\}$. $\mathbf{x}_t^m =  [x_{t},x_{t-\tau},\cdots,x_{t-(m-1)\tau}]$ denotes the past observation (or reconstructed state-space vector) of $\{x_t\}$, in which $\tau$ is the time delay, $m$ is the embedding dimension. $\mathbf{y}_t^n =  [y_{t},y_{t-\tau},\cdots,y_{t-(n-1)\tau}]$ denotes the past observation of $\{y_t\}$ with embedding dimension $n$.


From an information-theoretic perspective, one can directly evaluate the closeness between $p(y_{t+1}|\mathbf{y}_t^n)$ and $p(y_{t+1}|\mathbf{y}_t^n,\mathbf{x}_t^m)$ to perform the above test. If we use the expected KL divergence, we get:
\begin{equation}\label{eq:transfer_entropy}
\begin{split}
& \mathbb{E}\left( \log\left(\frac{p(y_{t+1}|\mathbf{y}_t^n,\mathbf{x}_t^m)}{p(y_{t+1}|\mathbf{y}_t^n)} \right) \right) \\
& = \int\int\int p(y_{t+1},\mathbf{y}_t^n,\mathbf{x}_t^m) \log\left(\frac{p(y_{t+1}|\mathbf{y}_t^n,\mathbf{x}_t^m)}{p(y_{t+1}|\mathbf{y}_t^n)} \right) dy_t d\mathbf{y}_t^n d\mathbf{x}_t^m \\
& = - \mathbb{E}\left( \log p(\mathbf{y}_t^n,\mathbf{x}_t^m) \right) + \mathbb{E}\left( \log p(y_{t+1},\mathbf{y}_t^n,\mathbf{x}_t^m) \right) \\
& \quad - \mathbb{E}\left( \log p(y_{t+1},\mathbf{y}_t^n) \right) + \mathbb{E}\left( \log p(\mathbf{y}_t^n) \right) \\
& = H(\mathbf{y}_t^n,\mathbf{x}_t^m) - H(y_{t+1},\mathbf{y}_t^n,\mathbf{x}_t^m)+ H(y_{t+1},\mathbf{y}_t^n) - H(\mathbf{y}_t^n),
\end{split}
\end{equation}
which is also known as the transfer entropy (TE)~\cite{schreiber2000measuring}.


An alternative choice is our conditional CS divergence as shown in Eq.~(\ref{eq:CS_ext2}), i.e.,
 $D_{\text{CS}} (p(y_{t+1}|\mathbf{y}_t^n);p(y_{t+1}|\mathbf{y}_t^n,\mathbf{x}_t^m))$, by simply taking $\mathbf{y}=y_{t+1}$, $\mathbf{x}^1=\mathbf{y}_t^n$, and $\mathbf{x}^2=\mathbf{x}_t^m$. Hence, we can define a causal score for direction $\mathbf{x} \rightarrow \mathbf{y}$ by:
\begin{equation}
    C_{\mathbf{x} \rightarrow \mathbf{y}} = D_{\text{CS}} (p(y_{t+1}|\mathbf{y}_t^n);p(y_{t+1}|\mathbf{y}_t^n,\mathbf{x}_t^m)).
\end{equation}

A causal direction $\mathbf{x} \rightarrow \mathbf{y}$ is confirmed if $C_{\mathbf{x} \rightarrow \mathbf{y}}$ is significant. On the other hand, the inverse direction $\mathbf{y} \rightarrow \mathbf{x}$ is confirmed if $C_{\mathbf{y} \rightarrow \mathbf{x}}$, defined as $D_{\text{CS}} (p(x_{t+1}|\mathbf{x}_t^m);p(x_{t+1}|\mathbf{x}_t^m,\mathbf{y}_t^n))$, is significant.

To demonstrate the effectiveness of our causal score, we test its performances on two benchmark simulations: the $5$ coupled H\'enon chaotic maps in~\cite{kugiumtzis2013direct} with the true causal relation $x_{i-1}\rightarrow x_i$, and the NLVAR3 model in~\cite{gourevitch2006linear} which is a nonlinear vector autoregression (VAR) process of order $2$ with $3$ variables. We also compare our measure of causality with the classic linear Granger causality test~\cite{granger1969investigating}, the popular kernel Granger causality (KGC)~\cite{marinazzo2008kernel} (a generalization of linear Granger causality to nonlinear case by kernel method), and TE with $k$NN estimator~\cite{zhu2015contribution}.
We generate $1,024$ samples for each model and determine all pairwise causal directions by the corresponding causal score coupled with a significance test. Equations of these two models, details about the implementation regarding different competing methods and the significance test all can be found in Appendix~\ref{sec:causal_setup}.

As can be seen from Fig.~\ref{fig:causal}, the linear Granger causality fails in nonlinear data, whereas our causal score with CS divergence $(\text{CS})^{2}$, the popular KGC and TE can precisely detect all pairwise causal directions. Compared with KGC, our measure is much more computationally efficient; compared with TE with $k$NN estimator, our measure is differentiable and easy-to-implement (it only requires evaluation of three Gram matrices), which facilitates more potential usages as demonstrated in the next section.


\begin{figure}[t]
	\centering
	\subfigure[H\'enon chaotic maps]{
		\centering
		\includegraphics[width=.9\linewidth]{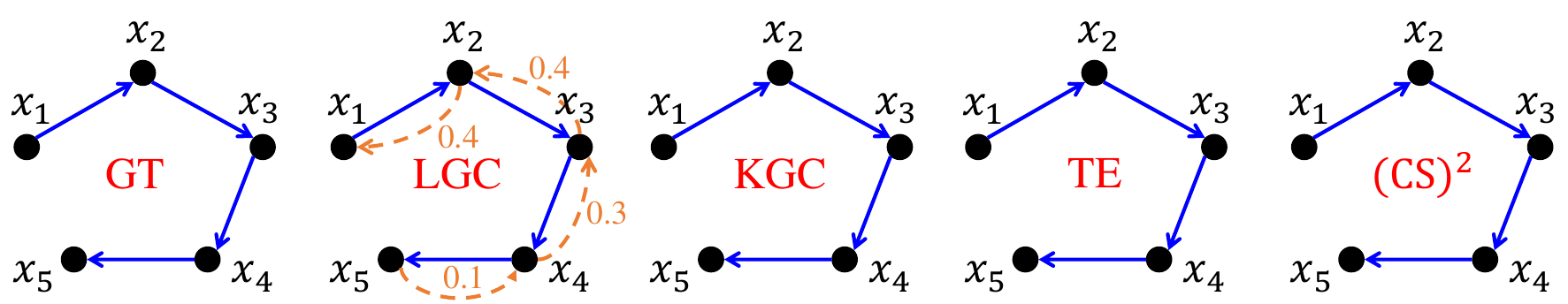}
	} \\
	\subfigure[NLVAR3]{
		\centering
		\includegraphics[width=.9\linewidth]{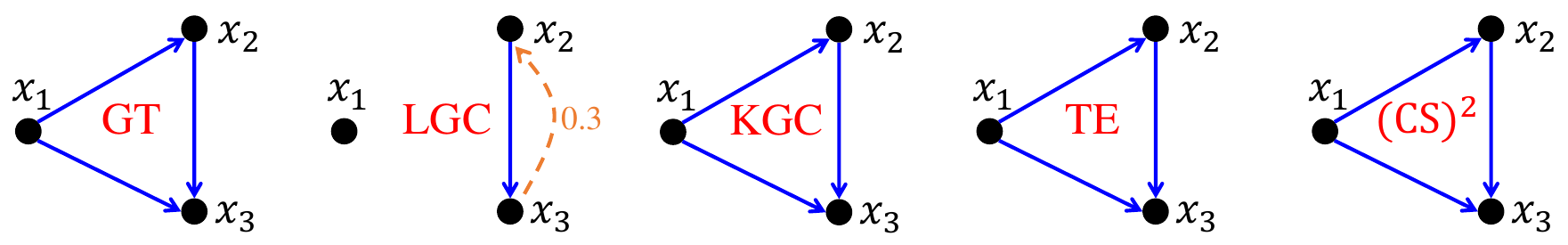}
	}
	\caption{The ground truth (GT) causal graph and that was identified by linear Granger causality (LGC), kernel Granger causality (KGC), transfer entropy (TE) with $k$NN estimator, and our causal score with CS divergence $(\text{CS})^{2}$. The blue solid line represents the detected bivariate causal direction (after the significance test). The orange dashed curve represents the anti-causal direction that could be incorrectly detected (i.e., a false positive). The ratio behind the curve is the possibility of a false positive over $10$ independent trials.}
	\label{fig:causal}
\end{figure}

\section{Numerical Simulations on Synthetic Data}

We carry out two numerical simulations on synthetic data to demonstrate the behaviors (especially the statistical power) of our conditional CS divergence with respect to previous proposals mentioned in Table~\ref{tab:property}. Both simulations are designed to examine, both qualitatively and quantitatively, the effectiveness of our divergence in distinguishing between two different conditional distributions.

\begin{table*}\centering
\scriptsize
\caption{Power test for conditional CS divergence, conditional KL divergence (with $k$-NN graph estimator), conditional Bregman divergence (operated on covariance matrix $C$), and conditional MMD.}
\begin{tabular}{@{}rrrrrrrrrrrrrrrrrrrrrrrr@{}}
\toprule
& \multicolumn{5}{c}{Conditional CS} & & \multicolumn{5}{c}{Conditional KL}  & & \multicolumn{5}{c}{von Neumann ($C$)} & & \multicolumn{5}{c}{Conditional MMD}\\
\cmidrule{2-6} \cmidrule{8-12} \cmidrule{14-18} \cmidrule{20-24}
& (a) & (b) & (c) & (d) & (e) && (a) & (b) & (c) & (d) & (e) && (a) & (b) & (c) & (d) & (e) && (a) & (b) & (c) & (d) & (e) \\
\midrule
(a) & 0.05 & 1 & 1 & 1 & 1 && 0.03 & 1 & 1 & 1 & 1 && 0.03 & 0.02 & 0.86 & 1 & 1 && 0.06 & 0 & 0 & 0 & 0 \\
(b) & 1 & 0.05 & 1 & 1 & 1 && 1 & 0.05 & 0.98 & 1 & 1 && 0.04 & 0.09 & 0.87 & 1 & 1 && 0 & 0.07 & 0 & 0 & 0 \\
(c) & 1 & 1 & 0.05 & 1 & 1 && 0.99 & 0.99 & 0.06 & 1 & 1 && 0.87 & 0.88 & 0.06 & 1 & 1 && 0 & 0 & 0.02 & 0 & 0 \\
(d) & 1 & 1 & 1 & 0.08 & 0.92 && 1 & 1 & 1 & 0.03 & 0.79 && 1 & 1 & 1 & 0.07 & 0.11 && 0 & 0 & 0 & 0.04 & 0 \\
(e) & 1 & 1 & 1 & 0.91 & 0.10 && 1 & 1 & 1 & 0.79 & 0.04 && 1 & 1 & 1 & 0.14 & 0.04 && 0 & 0 & 0 & 0 & 0.10 \\
\bottomrule
\end{tabular}
\label{tab:power_test}
\end{table*}

\begin{figure*}[t]
	\centering
	\subfigure[Ground Truth]{
		\centering
		\includegraphics[width=.18\linewidth]{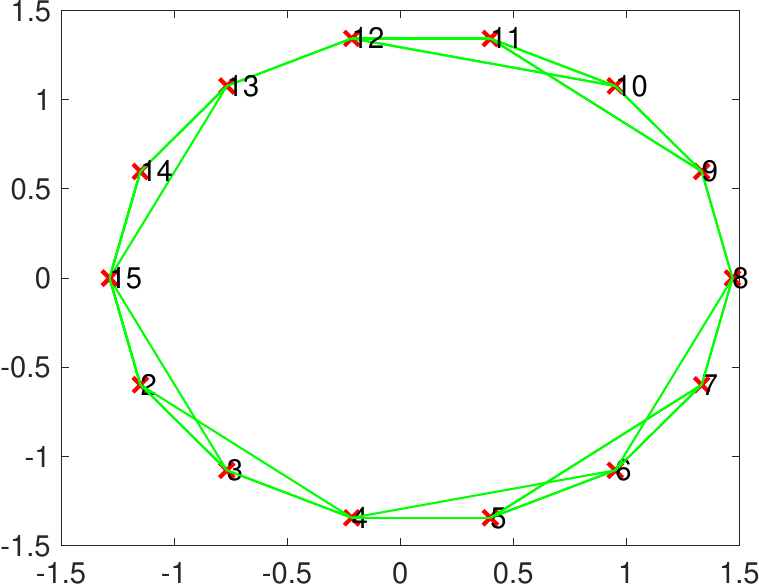}
	}
	\subfigure[Conditional CS]{
		\centering
		\includegraphics[width=.18\linewidth]{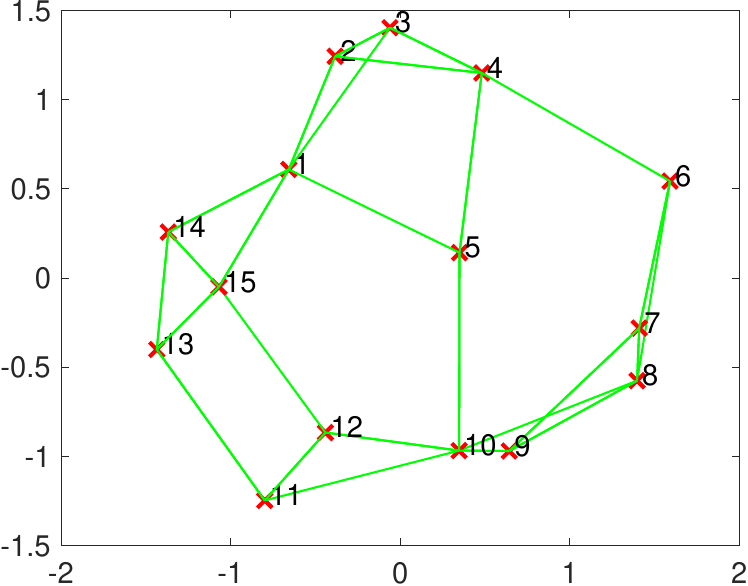}
	}
	\subfigure[Conditional KL]{
		\centering
		\includegraphics[width=.18\linewidth]{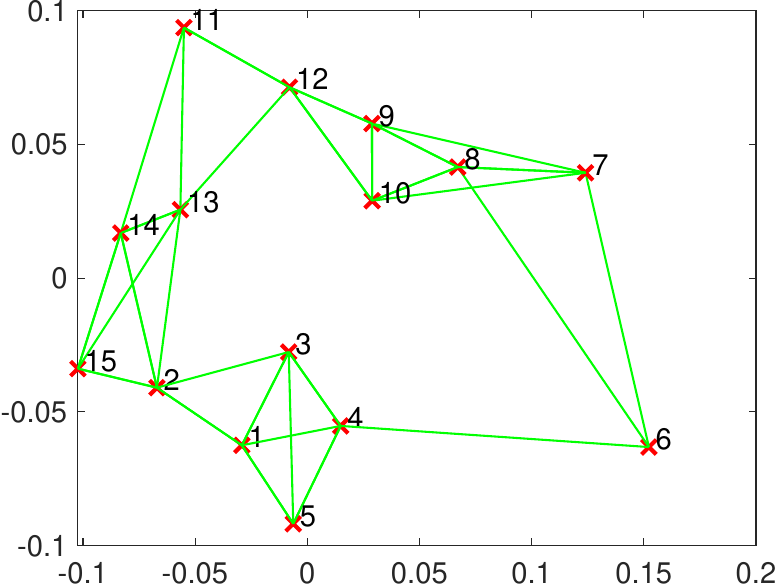}
	}
 	\subfigure[Conditional von Neumann]{
		\centering
		\includegraphics[width=.18\linewidth]{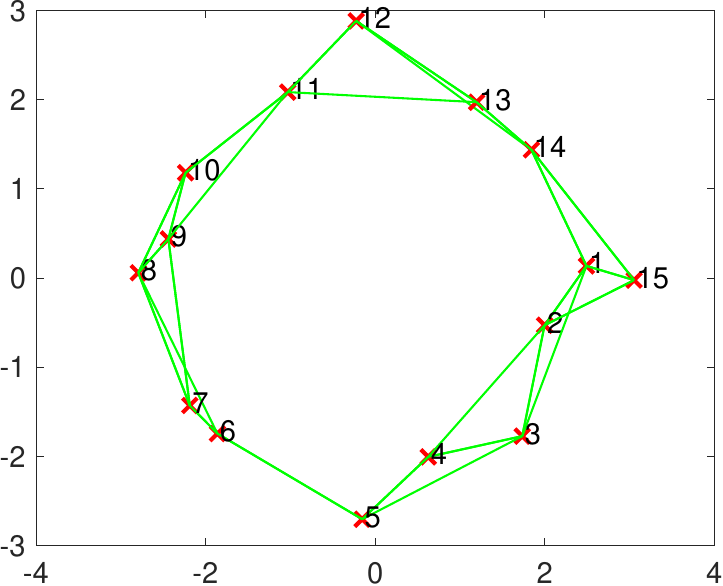}
	}
 	\subfigure[Conditional MMD]{
		\centering
		\includegraphics[width=.18\linewidth]{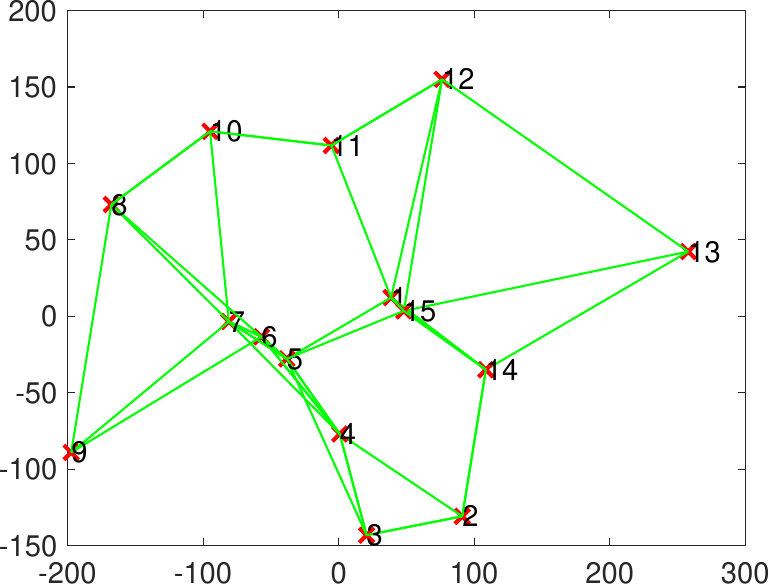}
	} \\
	\subfigure[Ground Truth]{
		\centering
		\includegraphics[width=.18\linewidth]{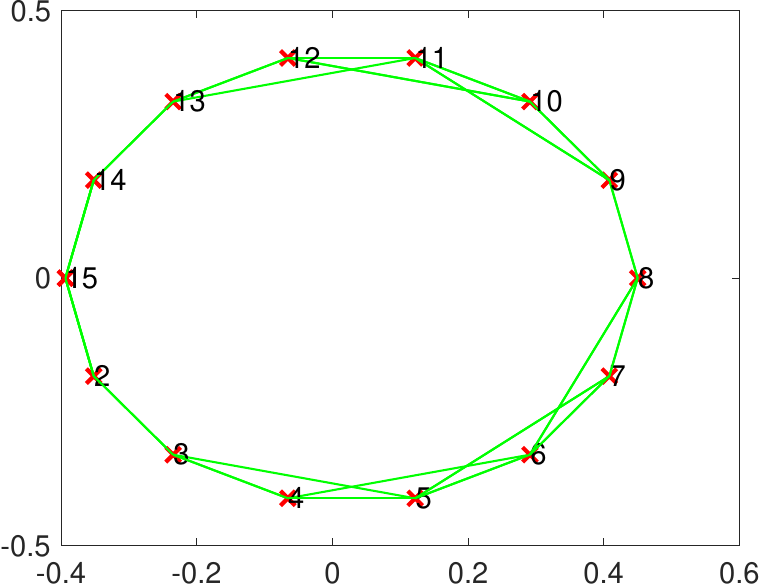}
	}
 	\subfigure[Conditional CS]{
		\centering
		\includegraphics[width=.18\linewidth]{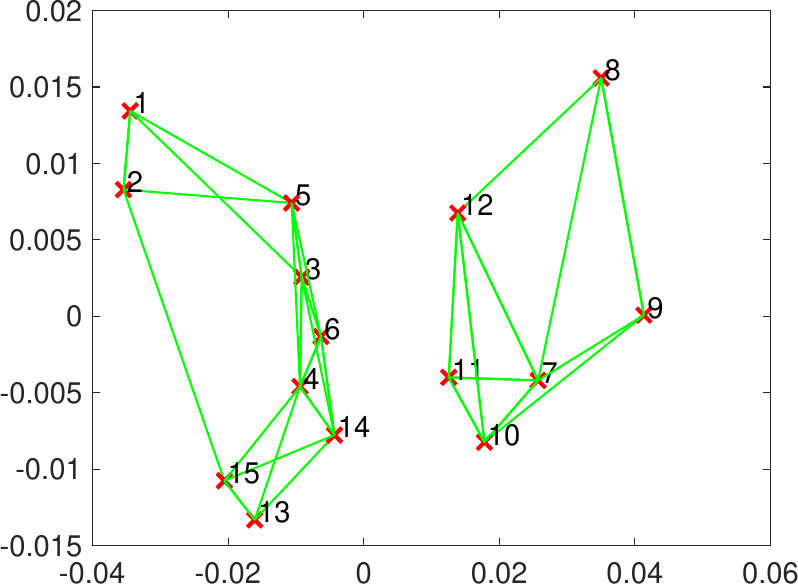}
	}
	\subfigure[Conditional KL]{
		\centering
		\includegraphics[width=.18\linewidth]{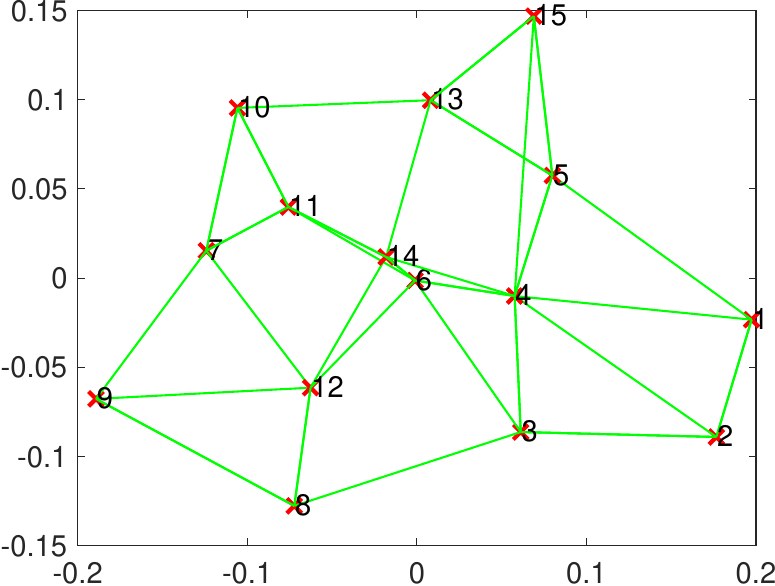}
	}
 	\subfigure[Conditional von Neumann]{
		\centering
		\includegraphics[width=.18\linewidth]{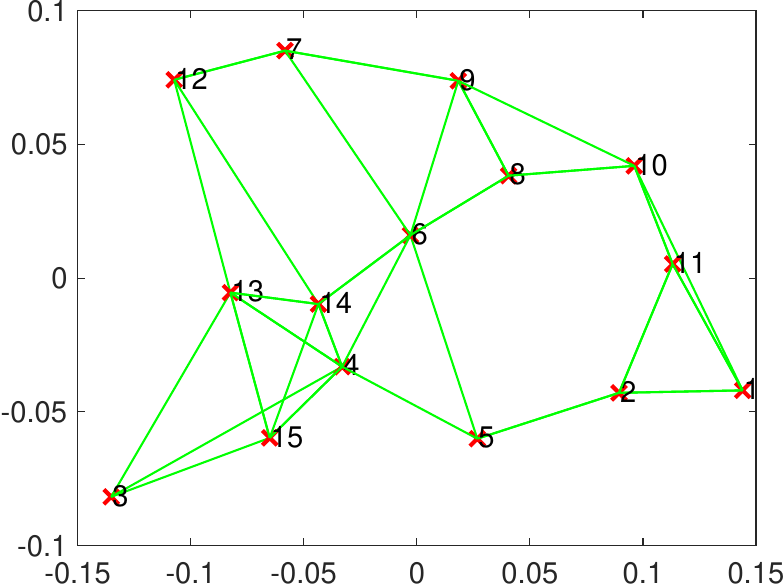}
	}
 	\subfigure[Conditional MMD]{
		\centering
		\includegraphics[width=.18\linewidth]{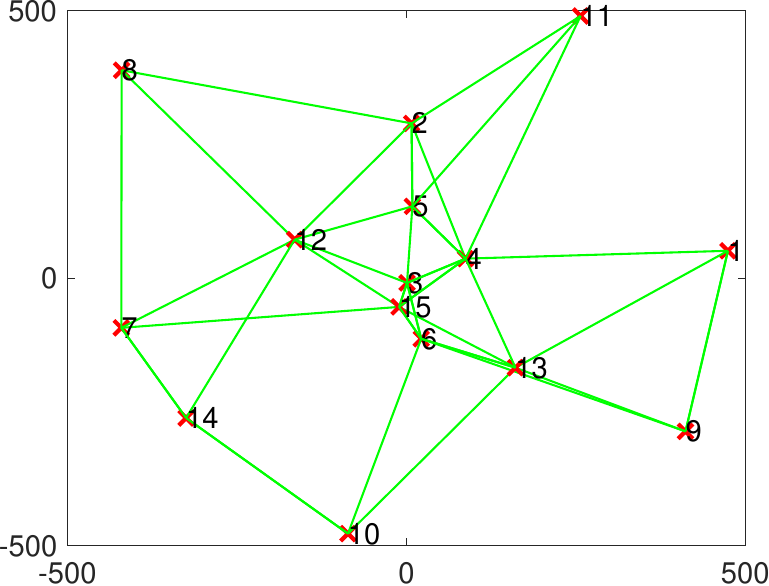}
	}
	\caption{The ground truth task structure (first column) and that is learned by the conditional CS divergence (second column); the conditional KL divergence (third column); the conditional von Neumann divergence (fourth column); and the conditional MMD (fifth column) when the input variable $\mathbf{x}$ is Gaussian distributed (first row) and uniformly distributed (second row), respectively. We connect each task with its $3$ nearest tasks.}
	\label{fig:MTL_structure}
\end{figure*}

\subsection{Simulation I} \label{sec:simulation1}
Motivated by previous literature on two-sample conditional distribution test~\cite{zheng2000consistent}, we generate $5$ sets of data that have distinct conditional distributions. Specifically, in set (a), the dependent variable $y$ is generated by $y = 1 + \sum_{i=1}^p x_i + \epsilon$, where $p$ refers to the dimension of explanatory variable $\mathbf{x}$, $\epsilon$ denotes standard normal distribution. In set (b), $y = 4 + \sum_{i=1}^p x_i + \epsilon$, i.e., there is a mean (or $1$st order moment) shift with respect to set (a).
In set (c), $y = 1 + \sum_{i=1}^p x_i + \psi$, where $\psi$ denotes a Logistic distribution with the location parameter $\mu=1$ and the scale parameter $s=1$. In set (d), $y = 1 + \sum_{i=1}^p x_i^2 + \epsilon$. In set (e), $y = 1 + \sum_{i=1}^p x_i^2 + \psi$. For each set, the input distribution $p(\mathbf{x})$ is an isotropic Gaussian, but the conditional distribution $p(y|\mathbf{x})$ differs from each other.
For example, in set (a), $p(y|\mathbf{x})\sim \mathcal{N}(y-\sum_{i=1}^p x_i -1,1)$, whereas in set (e), $p(y|\mathbf{x})\sim \textit{Logistic}(y-\sum_{i=1}^p x_i^2 -2,1)$.


To evaluate the statistical power of each conditional divergence measure to distinguish any two sets of data, we randomly simulate $500$ samples from each set, each with input dimension $p=10$. We apply a non-parametric permutation test with the number of permutations $P=500$ and significance level $\eta=0.05$ to test if one measure can distinguish these two sets\footnote{Please refer to Appendix~\ref{sec:permutation_test} on the details of the permutation test.}. We repeat this procedure $100$ independent times and define the statistical power as the percentage of successful trials, where success refers to the tested measure's ability to distinguish between the two sets. Table~\ref{tab:power_test} summarizes the power test results. Specifically, the $(i,j)$-th element of each matrix reports the quantitative statistical power of a specific measure to distinguish set $(i)$ from set $(j)$ via the following hypothesis test:
\begin{equation}\label{eq:hypothesis_test}
\left\{
\begin{array}{lr}
\mathcal{H}_0: p_i(y|\mathbf{x}) = p_j(y|\mathbf{x}),   \\
\mathcal{H}_1: p_i(y|\mathbf{x}) \neq p_j(y|\mathbf{x}).
\end{array}
\right.
\end{equation}
Given our chosen significance level of $\eta = 0.05$, one would ideally expect the main diagonal elements to be close to $0.05$ and the off-diagonal elements to be $1$.



As can be seen, the conditional CS divergence is much more powerful to distinguish two different conditional distributions (especially the ability to distinguish set (c) from set (a), and set (d) from set (e)), although it also has few false alarms along the main diagonal.

If we look deeper, the conditional von Neumann divergence on covariance matrix $C$ has nearly zero power to distinguish set (a) from set (b), in which there is only a mean shift. This is because the covariance matrix only encodes the 2nd moment information such that it is insufficient to distinguish two conditional distributions if the distributional shift comes from 1st or higher-order moments. This also indicates that the conditional von Neumann divergence, when used with the sample covariance matrix, does not guarantee ``faithfulness", which is a big limitation. On the other hand, it is surprising to find that the conditional MMD fails to identify the distinctions in these tests. Note that, this result does not mean that the conditional MMD is incapable of quantifying the conditional discrepancy. It simply suggests that a non-parametric permutation test is not a reliable way to characterize the tail probability of the extremal value of conditional MMD statistics. Hence, a more computationally efficient way is required such that one can determine the optimal detection threshold. However, to the best of our knowledge, this is still an open problem.

\subsection{Simulation II}\label{sec:simulation2}
Motivated by the multi-task learning literature~\cite{flamary2014learning}, we construct a synthetic data set with $15$ related regression tasks, each with an input dimension of $20$.
For each task, the input variable $\mathbf{x}_t$ is generated $i.i.d.$ from the same distribution. The corresponding output is generated as $y_t =\mathbf{w}_t^T\mathbf{x}_t+\epsilon$, where $\mathbf{w}_t\in \mathbb{R}^{20}$ is the regression coefficients or weights of the $t$-th task, $\epsilon\sim\mathcal{N}(0,1)$ is the independent noise. Because different tasks have the same input distribution $p(\mathbf{x})$, their relatedness is mainly manifested by the conditional distribution $p(y|\mathbf{x})$, which is also parameterized by $\mathbf{w}$.

In our data, each task is mostly related to its neighboring tasks to manifest strong locality relationships. Specifically, the weight in the $1$st task is generated by $\mathbf{w}_1\sim\mathcal{N}(\mathbf{0},\mathbf{I}_{20})$,
The weights from the $2$nd task to the $15$th task (i.e., $\mathbf{w}_2$ to $\mathbf{w}_{15}$) share the same regression coefficients with $\mathbf{w}_1$ on dimensions $3$ to $20$. However, the first two dimensions of $\mathbf{w}_2$ to $\mathbf{w}_{15}$ are generated by applying a rotation matrix of the form $R=\left[\begin{matrix}\cos(\theta)&-\sin(\theta)\\\sin(\theta)&\cos(\theta)\\\end{matrix}\right]$ to the first two dimensions of $\mathbf{w}_1$, in which $\theta$ is evenly spaced between $[0,2\pi]$. This way, $\mathbf{w}_{15}$ gets back to $\mathbf{w}_1$ and the task $i$ is mostly related to task $(i-1)$ and task $(i+1)$. In other words, the relatedness amongst these $15$ tasks forms a circular structure.

We simulate $200$ samples from each task and apply the four different conditional divergence measures to quantify the closeness between pairs of tasks, i.e., the discrepancy from task $i$ to task $j$ is quantified by $D(p_i (y|\mathbf{x});p_j (y|\mathbf{x}))$. For each measure, we can obtain a $15\times 15$ matrix that encodes all pairwise discrepancies. We then apply the multidimensional scaling (MDS) to project the obtained discrepancy matrix into a $2$-dimensional space to visualize the (dis)similarity between individual tasks.
We consider two types of input distributions: $\mathbf{x}_t$ follows an isotropic multivariate Gaussian distribution (i.e., $\mathbf{x}_t\sim\mathcal{N}(\mathbf{0},\mathbf{I}_{20})$) and each element of $\mathbf{x}_t$ is drawn from a uniform distribution $\mathcal{U}(0,1)$.

As can be seen from Fig.~\ref{fig:MTL_structure}, when input data is Gaussian distributed, our conditional CS divergence (excluding task $5$), the conditional KL divergence, and the conditional von Neumann divergence are capable of identifying a roughly circular structure across all tasks. The result of conditional von Neumann divergence is the closest to a standard circle. This is because the data in each task is Gaussian distributed and does not contain mean shift, such that the $2$nd moment information in covariance matrix $C$ is sufficient to distinguish two distributions. The conditional MMD can discover precisely the locality relationships (e.g., tasks $2$, $3$, and $4$ are closely related and tasks $1$, $14$, and $15$ are closely related). However, it is hard to identify the global circular structure.
When input data is uniformly distributed, the performance of the von Neumann divergence drops significantly, whereas our conditional CS divergence still identifies that tasks $7-12$ are closely related, forming a small circle, and tasks $13-15$, $1-6$ are closely related, forming another small circle. It should be noted that our measure only missed capturing the locality relationship between tasks $6$ and $7$. In contrast, both conditional KL divergence and conditional MMD revealed a few spurious relationships. For example, task $6$ with respect to tasks $12$ and $14$ in Fig.~\ref{fig:MTL_structure}(h) and task $6$ with respect to tasks $3,13,15$ in Fig.~\ref{fig:MTL_structure}(j).

The quantitative evaluation in Table~\ref{tab:structure_results} is consistent with the visualization results, where we utilize two measures to quantify the closeness between the estimated graph structure and the ground truth. The first measure involves calculating the percentage $p$ based on the difference between the estimated graph adjacency matrix $A_S$ and the true adjacency matrix $A_G$. The second measure is the geodesic distance $d_{\vec{x}}(L_G,L_S)$~\cite{bravo2019unifying}:
\begin{equation}
    d_{\vec{x}}(L_G,L_S) = \mathrm{arccosh}\left(1+\frac{\|(L_G-L_S)\vec{x}\|_2^2\|\vec{x}\|_2^2}{2(\vec{x}^T L_G \vec{x})(\vec{x}^T L_S \vec{x})}\right),
\end{equation}
in which $L_G$ is the ground truth graph Laplacian, $L_S$ is the estimated graph Laplacian, and we select $\vec{x}$ to be the smallest non-trivial eigenvector of $L_G$ which encodes the global structure of a graph. For both measures, a smaller value indicates better performance.

\begin{table}[]
\centering
\caption{Quantitative evaluation on task structure discovery for Gaussian input (left of $/$) and uniformly distributed input (right of $/$).}
\begin{tabular}{|c|c|c|c|c|}
\hline
Methods    & cond. CS & cond. KL & cond. vN & cond. MMD \\ \hline
$p(A_G\neq A_S)$   & 0.196/$\mathbf{0.187}$ & 0.204/0.293  & $\mathbf{0.062}$/0.293   & 0.231/0.418  \\ \hline
$d_{\vec{x}}(L_G,L_S)$  & 1.785/$\mathbf{2.607}$ & 2.246/2.696  & $\mathbf{1.494}$/2.658  & 2.872/3.273    \\ \hline
\end{tabular}
\label{tab:structure_results}
\end{table}

\section{Applications to Time Series Data and Sequential Decision Making} \label{sec:applications}

Our conditional CS divergence can be used in diverse applications associated with time series and sequential data. In the following, we comprehensively evaluate its performances against other SOTA methods in time series clustering and exploration in the absence of explicit rewards.


\subsection{Time Series Clustering}\label{sec:clustering}

Time series clustering is an unsupervised machine learning technique to partition time series data into groups. The similarity based approach is a dominating direction for time series clustering, in which the general idea is to infer the similarity (or distance) between pairwise time series and perform clustering based on the obtained similarities.

Popular time series similarity measures include for example dynamic time warping (DTW)~\cite{berndt1994using}, the time warp edit distance (TWED)~\cite{marteau2008time}, and the move-split-merge (MSM)~\cite{stefan2012move}. However, many of these measures cannot be straightforwardly applied to multivariate time series as they did not take relations between different attributes into account~\cite{banko2012correlation}. The recently proposed learned pattern similarity (LPS)~\cite{baydogan2016time} and time-series cluster kernel (TCK)~\cite{mikalsen2018time} are two exceptions. Both measures rely on an ensemble strategy to compute the similarity, whereas the latter leverages the Gaussian mixture model (GMM) to fit the data which makes it very effective to deal with missing values.

In contrast to the above-mentioned measures that aim to align two sequences locally or rely on ensemble strategies which is computationally expensive, we suggest a new way to measure time series similarity from a probabilistic perspective. Specifically, given two time series $\{\mathbf{x}_t\}$ and $\{\mathbf{y}_t\}$, let $K$ denote a predefined embedding dimension, the conditional distributions $p(\mathbf{x}_t |\mathbf{x}_{t-1},\mathbf{x}_{t-2},\cdots,\mathbf{x}_{t-K})$ and $p(\mathbf{y}_t |\mathbf{y}_{t-1},\mathbf{y}_{t-2},\cdots,\mathbf{y}_{t-K})$ characterize the predictive behavior or \emph{dynamics} of $\{\mathbf{x}_t\}$ and $\{\mathbf{y}_t\}$, respectively. Hence, our new measure directly evaluates the dissimilarity between $\{\mathbf{x}_t\}$ and $\{\mathbf{y}_t\}$ by the conditional CS divergence between $p(\mathbf{x}_t |\mathbf{x}_{t-1},\mathbf{x}_{t-2},\cdots,\mathbf{x}_{t-K})$ and $p(\mathbf{y}_t |\mathbf{y}_{t-1},\mathbf{y}_{t-2},\cdots,\mathbf{y}_{t-K})$ , i.e.,
\begin{equation}\label{eq:clustering_dynamics}
D_{\text{CS}}(p(\mathbf{x}_t |\mathbf{x}_{t-1},\mathbf{x}_{t-2},\cdots,\mathbf{x}_{t-K});p(\mathbf{y}_t |\mathbf{y}_{t-1},\mathbf{y}_{t-2},\cdots,\mathbf{y}_{t-K})).
\end{equation}

That is, we expect to quantify the closeness of two time series by measuring the discrepancy of their internal \emph{dynamics}. One can also understand our conditional divergence in Eq.~(\ref{eq:clustering_dynamics}) from a kernel adaptive filtering (KAF)~\cite{liu2008kernel} perspective. Specifically, given a time series $\{\mathbf{x}_t\}$ (or $\{\mathbf{y}_t\}$), the KAF aims to learn a nonlinear (kernel) regression function $f_\mathbf{x}$ (or $f_\mathbf{y}$) to predict $\mathbf{x}_t$ (or $\mathbf{y}_t$) using its past $K$ values $\{\mathbf{x}_{t-1},\mathbf{x}_{t-2},\cdots,\mathbf{x}_{t-K}\}$ (or $\{\mathbf{y}_{t-1},\mathbf{y}_{t-2},\cdots,\mathbf{y}_{t-K}\}$) in an online manner, i.e., $\mathbf{x}_t=f_\mathbf{x}(\mathbf{x}_{t-1},\mathbf{x}_{t-2},\cdots,\mathbf{x}_{t-K})$ and $\mathbf{y}_t=f_\mathbf{y}(\mathbf{y}_{t-1},\mathbf{y}_{t-2},\cdots,\mathbf{y}_{t-K})$. Here, $K$ is also called the filter order. Obviously, $f_\mathbf{x}$ is an approximation to $p(\mathbf{x}_t |\mathbf{x}_{t-1},\mathbf{x}_{t-2},\cdots,\mathbf{x}_{t-K})$. Hence, our divergence can also be interpreted as the discrepancy between filters $f_\mathbf{x}$ and $f_\mathbf{y}$.
However, we would like to emphasize that, although our divergence has such an interpretation, it does not mean we need to explicitly learn filters $f_\mathbf{x}$ and $f_\mathbf{y}$ or identify their parameters. That is, our divergence is model-free.

For simplicity, we assume $\{\mathbf{x}_t\}$ and $\{\mathbf{y}_t\}$ have the same length $L$. For each time series, we can reformulate the sequential observations into a so-called \emph{Hankel matrix} of size $(K+1)\times (L-K)$ as shown in Fig.~\ref{fig:hankel}. That is, for sample index $i$, we have a vector observation $\{\mathbf{x}_{i},\mathbf{x}_{i+1},\cdots,\mathbf{x}_{i+K-1}\}$ (or $\{\mathbf{y}_{i},\mathbf{y}_{i+1},\cdots,\mathbf{y}_{i+K-1}\}$) and its corresponding desired response $\mathbf{x}_{i+K}$ (or $\mathbf{y}_{i+K}$). Then, the problem reduces to how to estimate Eq.~(\ref{eq:clustering_dynamics}) from the $L-K$ pairs of observations $\{ [\mathbf{x}_{i},\mathbf{x}_{i+1},\cdots,\mathbf{x}_{i+K-1}]^T , \mathbf{x}_{i+K} \}_{i=1}^{L-K}$ that are drawn from $p(\mathrm{X})$ and another $L-K$ observations $\{ [\mathbf{y}_{i},\mathbf{y}_{i+1},\cdots,\mathbf{y}_{i+K-1}]^T , \mathbf{y}_{i+K} \}_{i=1}^{L-K}$ that are drawn from $q(\mathrm{Y})$, in which $\mathrm{X}\in\mathbb{R}^{K+1}$ and $\mathrm{Y}\in\mathbb{R}^{K+1}$. By Eq.~(\ref{eq:conditional_CS_est}), we only need to compute eight Gram matrices of size $(L-K) \times (L-K)$, without any parametric model or parametric assumptions on the underlying distribution. For example, the Gram matrix $L^p$ for the desired response variable is evaluated as $\left(L^{p}\right)_{ij}=\kappa(\mathbf{x}_{i+K} - \mathbf{x}_{j+K})$; whereas the cross Gram matrix $L^{pq}$ is $\left(L^{pq}\right)_{ij}=\kappa(\mathbf{x}_{i+K} - \mathbf{y}_{j+K})$, in which $1\leq i,j \leq L-K$.

\begin{figure}[t]
	\centering
		\includegraphics[width=.8\linewidth]{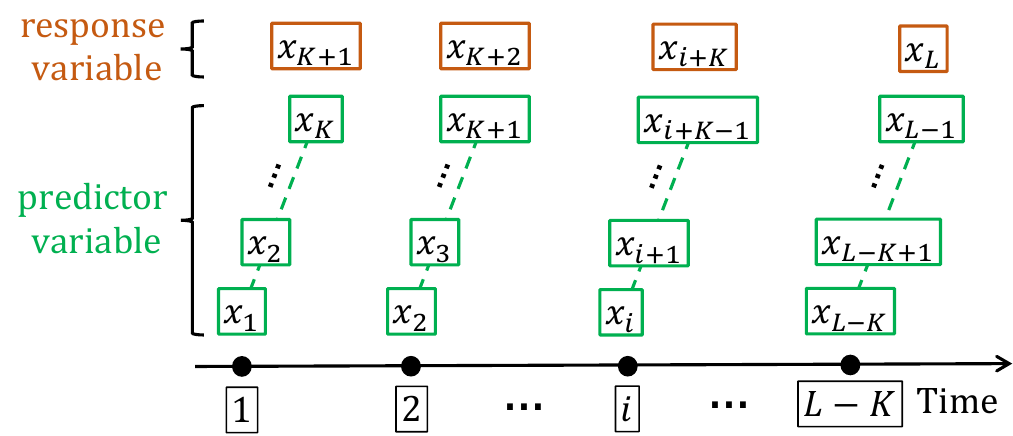}
	\caption{Reformulating a time series $\{\mathbf{x}_t\}$ into a \emph{Hankel} matrix.}
	\label{fig:hankel}
\end{figure}


We test our measure on $12$ benchmark time series datasets from the UCI\footnote{\url{https://archive.ics.uci.edu/ml/datasets.php}} and UCR\footnote{\url{https://www.cs.ucr.edu/~eamonn/time_series_data/}} databases, which cover a wide spectrum of domains, ranging from biomedical data to industrial processes. Additionally, we also test on two challenging dynamic texture (DT) datasets: Highway Traffic~\cite{chan2005classification} and UCLA~\cite{saisan2001dynamic}. The DT is a sequence of images of moving scenes such as flames, smoke, and waves that exihibits certain stationarity in time. The statistics of all datasets are summarized in Appendix~\ref{sec:data_clustering}. Fig.~\ref{fig:clustering_dataset} demonstrates exemplar time series from different classes in Synthetic Control and UCLA datasets. For datasets like Traffic and UCLA, the dimension $d$ is substantially larger than the length $T$, posing significant challenges for clustering tasks.

Our measure is compared to four other similarity measures, namely DTW, TWED, MSM, and TCK. The original DTW can only be applied to univariate time series, which has later been extended for multivariate scenarios~\cite{shokoohi2017generalizing}. In our work, we use a state-of-the-art (SOTA) multivariate implementation in \cite{schultz2018nonsmooth} for comparison.
Details of all competing measures and the setting of their hyperparameters are discussed in Appendix~\ref{sec:baseline_clustering}. For our method, the embedding dimension, i.e., $K$, is a crucial parameter. Practically, we can rely on Takens' embedding theorem~\cite{takens1981detecting} or set it heuristically with some prior knowledge. In our experiment, we observed that the Taken's embedding usually underestimates the value of $K$ that gives the best clustering performance. Therefore, for all the univariate time series, the value of $K$ is selected among $3$ values: $10$, $15$, and $20$. For multivariate time series, such as PenDigits and Robot failures, we set $K=1$ due to data prior knowledge. For Traffic and UCLA, we also set $K=1$, because the frame dimension is much larger than time series length.


\begin{figure}[t]
	\centering
	\subfigure[$10$ time series from classes ``Normal", ``Cyclic", and ``Increasing trend" in Synthetic Control dataset]{
		\centering
		\includegraphics[width=.4\linewidth]{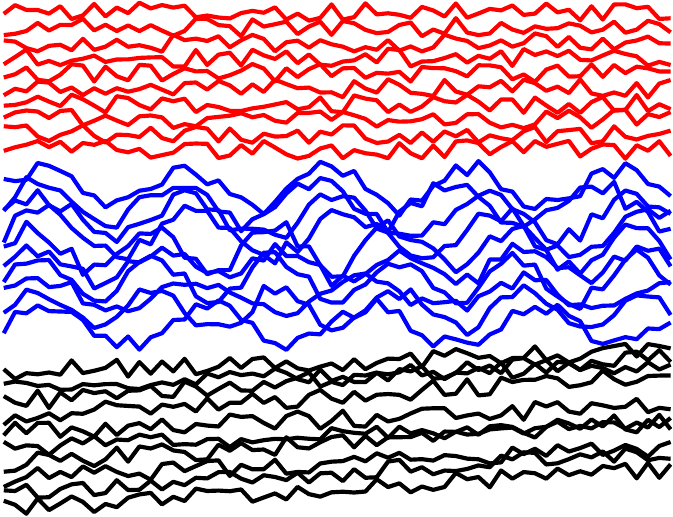}
	}
    \hfill
	\subfigure[Example snapshots of classes ``fire", ``fountain", ``water" in UCLA dataset]{
		\centering
		\includegraphics[width=.4\linewidth]{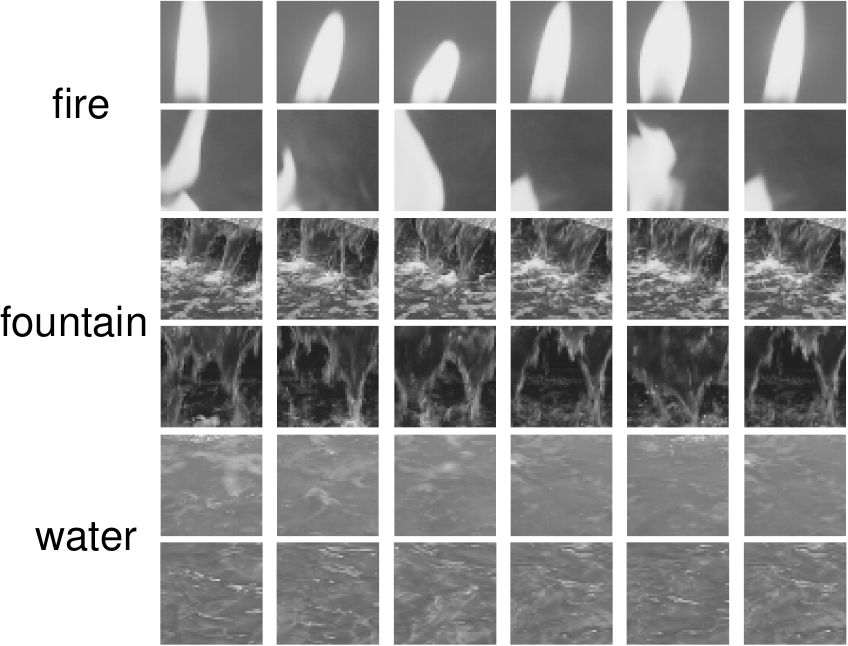}
	}
	\caption{Exemplar time series in (a) Synthetic Control; and (b) UCLA datasets. Each row is a time series.}
	\label{fig:clustering_dataset}
\end{figure}

For each test dataset, we can obtain a $N\times N$ dissimilarity matrix $D$ that encodes all pairwise dissimilarities using any competing measures ($N$ is the number of time series). To quantitatively evaluate the quality of $D$, we apply two clustering methods on top of $D$: $k$-medoids and spectral clustering~\cite{ng2001spectral}. To apply spectral clustering, we first convert the dissimilarity matrix $D$ to a valid adjacency matrix $A$ by a kernel smooth function, i.e., $A_{ij} = \exp\left(-D_{ij}/b\right)$. For each measure, the parameter $b$ is chosen from $\{0.1, 0.2, 1, 2, 10, 20\}$, and the best performance is taken.
We use normalized mutual information (NMI) as the clustering evaluation metric. Please refer to \cite{xu2003document} for detailed definitions of NMI. Table~\ref{tab:clustering_nmi_spectral} and Table~\ref{tab:clustering_nmi_kmedoids} summarize the clustering results using, respectively, spectral clustering and $k$-medoids. We can summarize a few observations: 1) the clustering performances in terms of two different clustering methods roughly remain consistent;
 2) there is no obvious winner for univariate time series, all methods can achieve competitive performance; this makes sense, as DTW, MSM, TWED, and TCK are all established methods; 3) our conditional CS divergence has obvious performance gain for multivariate time series; it is also generalizable to Traffic and UCLA, in which the dimension is significantly larger than the length; 4) the performance of our conditional CS divergence is stable in the sense that our measure does not have a failing case; by contrast, DTW gets very low NMI values in Robert failure LP1-LP5, whereas TCK completely fails in Traffic and UCLA.



\begin{table}[]
\centering
\caption {Clustering performance comparison (by spectral clustering) in terms of normalized mutual information (NMI). ``-" indicates the corresponding measures cannot be extended to multivariate time series or fail to obtain meaningful results. The best performance is in bold; the second best performance is underlined.}
\begin{tabular}{c|c|c|c|c|l}
\toprule
Datasets  & DTW  & MSM  & TWED   & TCK   & \multicolumn{1}{c}{\textbf{\begin{tabular}[c]{@{}c@{}}Cond.\\ CS (ours)\end{tabular}}} \\ \hline
Coffee    & $\underline{0.689}$  & 0.592  & $\underline{0.689}$  & $\underline{0.689}$ &  $\mathbf{1}$   \\
Diatom     & 0.788  & $\mathbf{0.837}$  & 0.774  & $\underline{0.806}$ & 0.743  \\
DistalPhalanxTW  & 0.570 & 0.522  & $\underline{0.571}$ & 0.491 & $\mathbf{0.584}$   \\
ECG5000    & $\mathbf{0.833}$ & $\underline{0.777}$ & 0.725 & 0.695  & 0.727   \\
FaceAll   & 0.508 & $\underline{0.796}$  &  $\mathbf{0.849}$  & 0.559 & 0.750 \\
Synthetic control & 0.744 & 0.565 & 0.565 & $\mathbf{0.781}$ & $\underline{0.758}$ \\
\midrule
PenDigits    &  $\mathbf{0.725}$ &    &     &  0.488  & $\underline{0.655}$  \\
Libras    & $\mathbf{0.652}$  &      &     &  0.554  & $\underline{0.606}$ \\
uWave     & $\mathbf{0.834}$  &      &     &  0.624  & $\underline{0.759}$   \\
Robot failure LP1  & 0.0607 &    &   & $\underline{0.176}$ & $\mathbf{0.686}$  \\
Robot failure LP2  & 0.227 &    &   & $\underline{0.363}$  & $\mathbf{0.438}$  \\
Robot failure LP3  & $\underline{0.170}$ &    &   & 0.144  & $\mathbf{0.328}$  \\
Robot failure LP4  & $\underline{0.241}$  &   &   & 0.080  & $\mathbf{0.516}$ \\
Robot failure LP5  & $\underline{0.091}$  &   &   & 0.080  & $\mathbf{0.393}$   \\
\midrule
Traffic  &  $\underline{0.144}$  &    &   &  -  & $\mathbf{0.145}$   \\
UCLA     &  $\underline{0.149}$  &    &   &  -  & $\mathbf{0.559}$  \\
\bottomrule
\end{tabular}
\label{tab:clustering_nmi_spectral}
\end{table}

\begin{table}[]
\centering
\caption {Clustering performance comparison (by $k$-medoids) in terms of normalized mutual information (NMI). ``-" indicates the corresponding measures cannot be extended to multivariate time series or fail to obtain meaningful results. The best performance is in bold; the second best performance is underlined.}
\begin{tabular}{c|c|c|c|c|l}
\toprule
Datasets  & DTW  & MSM  & TWED   & TCK   & \multicolumn{1}{c}{\textbf{\begin{tabular}[c]{@{}c@{}}Cond.\\ CS (ours)\end{tabular}}} \\ \hline
Coffee    & 0.592  & 0.258  & 0.689  & $\underline{0.811}$  &  $\mathbf{1}$   \\
Diatom     & 0.552  &  $\underline{0.768}$  & 0.760 & $\mathbf{0.821}$  & 0.730  \\
DistalPhalanxTW & 0.484 & 0.456 & $\underline{0.495}$ & 0.458 &   $\mathbf{0.587}$   \\
ECG5000    & $\mathbf{0.846}$ & $\underline{0.756}$ & 0.707 & 0.727  & 0.595   \\
FaceAll   & 0.694 & $\underline{0.722}$  &  $\mathbf{0.849}$  & 0.584 & 0.705 \\
Synthetic control & $\mathbf{0.909}$ & $\underline{0.899}$ & 0.856 & 0.869 & 0.745 \\
\midrule
PenDigits    & $\mathbf{0.722}$  &    &     &  0.416  &  $\underline{0.640}$ \\
Libras    &  $\underline{0.440}$  &         &      & $\mathbf{0.596}$  &  0.390 \\
uWave     &  $\underline{0.748}$ &       &        & 0.707 & $\mathbf{0.754}$ \\
Robot failure LP1  & 0.132 &    &    & $\underline{0.455}$  & $\mathbf{0.658}$ \\
Robot failure LP2  & 0.232 &    &    & $\underline{0.315}$  & $\mathbf{0.427}$ \\
Robot failure LP3  & 0.106 &    &    & $\underline{0.149}$  & $\mathbf{0.453}$ \\
Robot failure LP4  & 0.084 &    &    & $\underline{0.113}$  & $\mathbf{0.365}$ \\
Robot failure LP5  & 0.098 &    &    & $\underline{0.234}$  & $\mathbf{0.261}$ \\
\midrule
Traffic  &  $\underline{0.139}$  &    &   &  -  & $\mathbf{0.152}$   \\
UCLA     &  $\underline{0.381}$  &    &   &  -  & $\mathbf{0.416}$  \\
\bottomrule
\end{tabular}
\label{tab:clustering_nmi_kmedoids}
\end{table}

Finally, one should note that, different from DTW and other competing measures, our conditional CS divergence is not specifically designed for just measuring similarity between two temporal sequences, it is versatile, flexible, and has wide usages in other machine learning problems.


Apart from the above quantitative analysis, we provide in Appendix~\ref{sec:exploratory} how to perform exploratory data analysis for real-world time series data to identify useful patterns. Additionally, we planned to conduct the same experiment using conditional KL divergence and conditional MMD. However, applying conditional KL divergence straightforwardly presents a challenge as the $k$NN estimator may yield negative values with a noticeable likelihood, hindering the application of spectral clustering or $k$-medoids. This issue is not encountered with conditional CS. On the other hand, the performance of conditional MMD is always poor, due to the difficulty of tuning the additional hyperparameter $\lambda$ in Eq.~(\ref{eq:CMMD}). Furthermore, the computational burden of conditional MMD is cubic, stemming from matrix inversion.

\subsection{Uncertainty-Guided Exploration for Sequential Decision Making}\label{CS-dtg}

Our second application deals with sequential decision making under uncertainty, especially in scenarios where clear rewards are sparse or not available. Let us recall a (discounted) Markov Decision Process (MDP) defined by a $5$-tuple ($\mathcal{S},\mathcal{A},P,R,\gamma$), where $\mathcal{S}$ is the set of all possible states (state space), $\mathcal{A}$ is the set of all possible actions (action space).
The agent learns a policy function $\pi: \mathcal{S}\mapsto \mathcal{A}$ mapping states to actions at every time step.
$P:\mathcal{S}\times \mathcal{A}\mapsto \mathcal{S}$ is the transition function or probability. At each timestep $t$, upon observing the state $s_t$, the execution of action $a_t$ triggers an extrinsic reward $r_t=r(s_t,a_t)\in R$ given by the reward function $R:\mathcal{S}\times \mathcal{A}\mapsto \mathbb{R}$, and a transition to a new state $s_{t+1}\sim P(\cdot|s_t,a_t)$. The optimal agent maximizes the discounted cumulative extrinsic reward $Q=\mathbb{E}\left(\sum_{t=0}^\infty \gamma^t r_{t+1}|s_t,a_t \right)$, where $\gamma\in (0,1)$ is a discounting factor that decays rewards received further in the future.

Even though the reward paradigm is fundamentally flexible in many ways, it is also brittle and limits the agent's ability to learn about its environment, especially when the rewards are sparse or unavailable. To make agents to discover the environment without the requirement of a reward signal, such that the learned policy is more flexible and generalizable, recent studies seek to find an alternative form of reinforcement learning with an objective that is reward-independent and favors exploration~\cite{hazan2019provably}.

Many exploration schemes have been developed over the past years. Here, we consider an uncertainty-guided exploration in the sense that if a state has not been visited sufficiently for the agent to be familiar with it, then that state will have high uncertainty and an agent will be driven to it. This ensures that an agent will thoroughly investigate new areas of the action-state space~\cite{sledge2018guided}.


At each timestep $t$, we observe a transition of state $s_t\rightarrow s_{t+1}$, we can update our belief distribution $p(s_{t+1}|s_t,a_t)$. Following the observation of a new state transition, we can compute the divergence $D(p_{\text{new}} (s_{t+1}|s_t,a_t );p_{\text{old}} (s_{t+1}|s_t,a_t))$ between the two belief functions, before and after incorporating the new knowledge. In practice, we introduce a replay buffer to record $2\tau$ steps of $\{ s_{t+1}, s_t,a_t\}$ trios. Here we define $p_{\text{old}}$ to be the conditional distribution of trios in the old half of the buffer, while $p_{\text{new}}$ corresponds the new half. When a new state transition is observed that greatly changes the model, this means that the agent was quite uncertain about the outcome of the action taken in that particular state.
In order to encourage exploration of the agent to states that have not been visited sufficiently, the action selection module should be trained by maximizing $D(p_{\text{new}} (s_{t+1}|s_t,a_t );p_{\text{old}} (s_{t+1}|s_t,a_t))$.
We term such exploration strategy the ``divergence-to-go (DTG)", and the optimal policy $\pi_{\text{dtg}}$ aims to maximize
the divergence between old and new experiences at each visited state:
\begin{equation}\label{eq:obj_RL}
    \pi_{\text{dtg}} = \underset{\pi}{\text{argmax}} \mathbb{E} \left( \sum_{t=0}^\infty D(p_{\text{new}} (s_{t+1}|s_t,a_t );p_{\text{old}} (s_{t+1}|s_t,a_t))
    \right).
\end{equation}


The DTG exploration was initially proposed in \cite{emigh2015model}, but the authors estimate $D(p_{\text{new}} (s_{t+1}|s_t,a_t );p_{\text{old}} (s_{t+1}|s_t,a_t))$ by the Euclidean distance and make a few additional assumptions on $p(s_{t+1}|s_t,a_t )$ to make the estimation tractable. In this section, we straightforwardly estimate the conditional CS divergence between $p_{\text{new}} (s_{t+1}|s_t,a_t )$ and $p_{\text{old}} (s_{t+1}|s_t,a_t )$ using Eq.~(\ref{eq:conditional_CS_est}) in which we treat $s_{t+1}$ as variable $\mathbf{y}$ and the concatenation of $[s_t, a_t]$ as variable $\mathbf{x}$, without any distributional assumptions. We denote our improved exploration method the DTG-CS. Identical to the original DTG, we also incorporate a kernelized Q-learning backbone~\cite{bae2011reinforcement} into our methodology. The detailed algorithm of DTG-CS and its major differences to standard reward-based Q-learning are provided in Appendix~\ref{sec:dtg_details}.

We apply our method to a mountain car, pendulum and maze task, as shown in Fig.~\ref{fig:env}. The mountain car is a benchmark reinforcement learning (RL) test bed in which the agent attempts to drive an underpowered car up a $2$-dimensional hill. Each time step, the agent selects an action from $\mathcal{A}=\{-1,0,1\}$ based on the $2$-dimensional state space $\mathcal{S}$ consisting of the cart’s position and velocity. Zero represents no action. Negative actions drive the cart to the left while positive actions drive it to the right. To reach the goal, the agent must climb the hill to the left and then allow the combination of gravity and positive actions to drive the cart to the top of the hill on the right. The pendulum is another classic control problem in reinforcement learning, which consists of a pendulum attached at one end to a fixed point, and the other end being free. We set it to start at the downward position and the goal is to apply torque on the free end to swing it into an upright position. In each step, the agent chooses an action (torque) from the range $[-1, +1]$. In maze game, an agent is placed from a start (red point). The goal of the agent is to reach the exit (blue point) as quickly as possible. For every step, the agent must decide which action to take among four options: left, right, up, or down.


We compare our DTG-CS with basic Q-learning, the random exploration, the baseline DTG~\cite{emigh2015model} and the recently proposed maximum entropy (MaxEnt) exploration~\cite{hazan2019provably}. To demonstrate the superiority of conditional CS divergence over its KL divergence and MMD counterparts, we also implement DTG-KL and DTG-MMD. Basic Q-learning usually requires a pre-collected state-reward buffer with goal rewards or random policy to avoid converging to local optimum (or even non-optimum), if rewards are sparse. Hence, we introduce random actions using $\epsilon$-greedy ($\epsilon_{0} = 0.5$) in the Q-learning for the mountain car and the Maze. MaxEnt encourages efficient discovery of an unknown state space by maximizing the entropy of state distribution $-\mathbb{E}_{s \in \mathcal{S}}[\log_2p(s)]$, in which $\mathcal{S}$ is the set of all visited states.
We repeat the training process for $100$ independent runs with different global seeds and record the mean value of the number of steps used to achieve the goal for the first time. Note that, for maze game, the agent may not reach the exit given a sufficient number of training steps (because it has a chance to converge to a dead end). Therefore, we additionally report the success rate over $100$ independent runs. If the method cannot achieve the goal within $50,000$ training steps, we regard it as a failure.

Table~\ref{tab:dtg_results} shows the quantitative results. For mountain car, our method outperforms all competing methods, although the improvement over DTG is marginal. This is not surprising since mountain car is a simple task. In short, exploration-based methods (MaxEnt, DTG, and its variants including our DTG-CS) can achieve the goal faster than reward-driven Q-learning and random policy. For pendulum, in contrast to the other two environments where the agent can only obtain rewards by achieving the goals, the agent of the pendulum receives rewards at arbitrary positions. The closer to the upright position, the higher the rewards\footnote{\url{https://www.gymlibrary.dev/environments/classic_control/pendulum/}}. Hence, it is not surprising that the reward-based Q-learning outperforms others. Our DTG-CS achieves the best result among the remaining methods, without explicitly utilizing rewards. For maze, our method outperforms all other methods by a substantial margin. We obtain similar results to basic DTG but improve significantly the success rate. MaxEnt obtains a higher success rate than Q-learning and random policy but takes a longer time to achieve the goal.

Fig.~\ref{fig:dtg_results} visualizes results using heatmaps of training steps. For mountain car, we plot the log-probability of occupancy of the $2$D state space with location on the $X$-axis and speed on the $Y$-axis. In summary, states of Q-learning and random policy cluster near the initial state while exploration-based methods (especially DTG and DTG-CS) evenly distributed in the entire state space. For pendulum, we plot the $2$D positions of the free end to show the traces of the bar. A full circle indicates the agent explores the state space completely. Random policy stays in the lower semicircle due to gravity. Q-learning can explore the full space with the help of dense rewards. MaxEnt explores more than random but cannot distribute evenly before reaching $50,000$ steps training. DTG-based methods produce similar results to Q-learning but employ no rewards.  For maze, we plot $2$D state spaces with $(x,y)$ coordinates. States of Q-learning and random policy still cluster near the start point, and the density decreases monotonically with respect to the distance to the exit. MaxEnt produces two local maximums apart from the initial state (see the middle and the upper right corner), which increases the likelihood of getting stuck in a dead end. The DTG-based approach avoids both limitations.

\begin{figure}[t]
	\centering
	\subfigure[Mountain Car]{
		\centering
		\includegraphics[width=.3\linewidth]{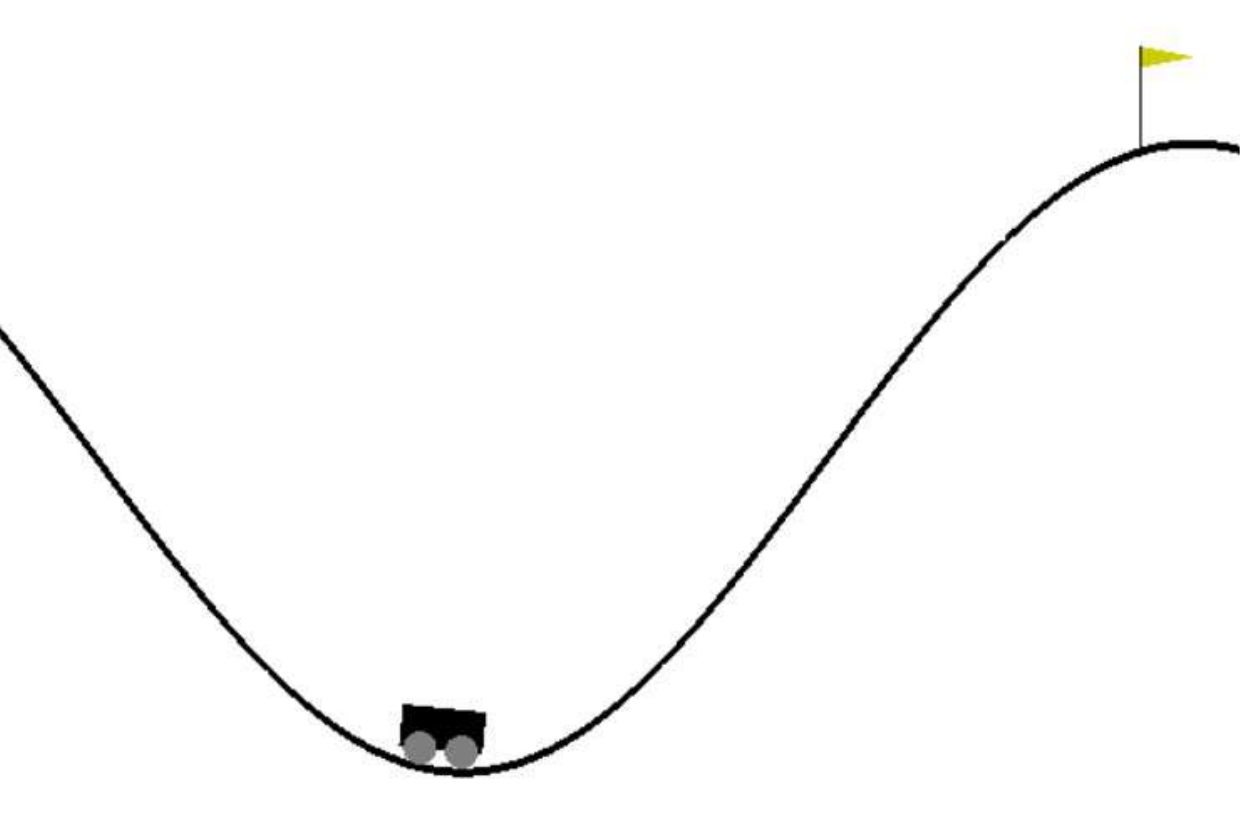}
	}
 	\subfigure[Pendulum]{
		\centering
		\includegraphics[width=.3\linewidth]{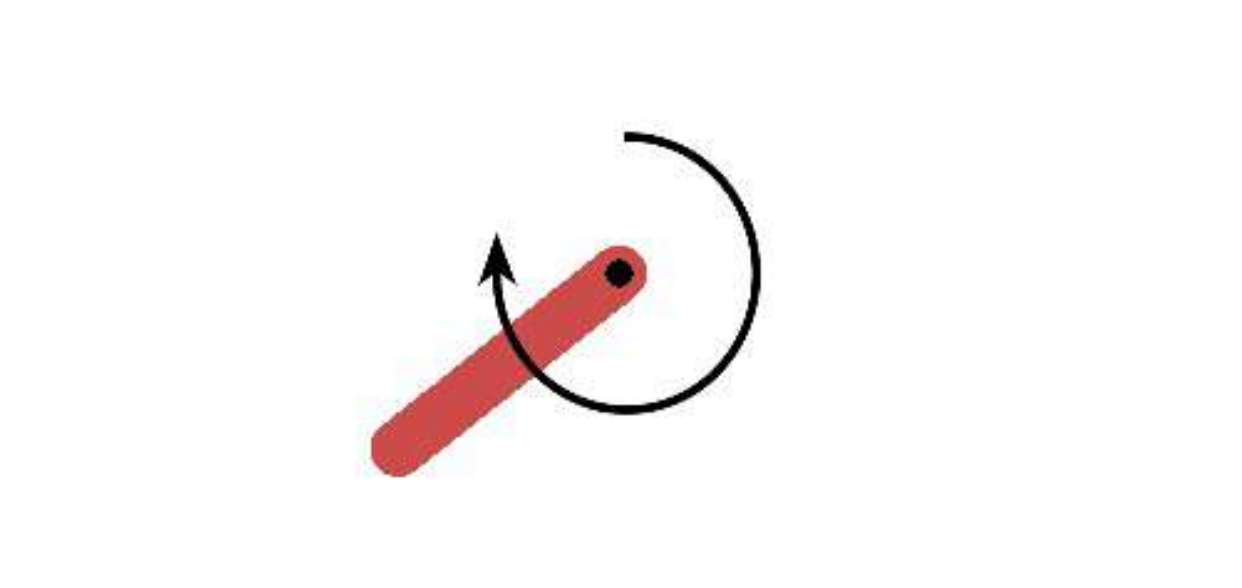}
	}
	\subfigure[Maze]{
		\centering
		\includegraphics[width=.3\linewidth]{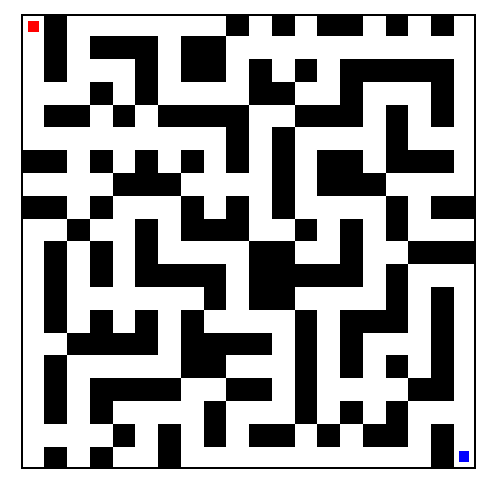}
	}
	\caption{(a) The goal of mountain car is to touch the flag; (b) The goal of Pendulum is to make the bar upright. (c) A $20 \times 20$ maze. The goal is to reach the exit (blue point) as quickly as possible from the start (red point).}
	\label{fig:env}
\end{figure}

\begin{table}[]
\centering
\caption{For mountain car and Pendulum, we show the number of steps used to achieve the goal for the first time. For Maze, we also show the success rate after ``/" given $50,000$ training steps.}
\begin{tabular}{|c|c|c|c|}
\hline
Methods    & \begin{tabular}[c]{@{}c@{}}Mountain \\ car (step)\end{tabular} & \begin{tabular}[c]{@{}c@{}}Pendulum \\ (step)\end{tabular} & \begin{tabular}[c]{@{}c@{}}Maze \\ (step/success rate)\end{tabular} \\ \hline
Random     & 42085                                                          & 5884                                                    & 15666/0.15                                                      \\ \hline
Q-learning & 53204                                                          & 3276                                                           & 12906/0.18                                                      \\ \hline
MaxEnt     & 13761                                                          & 5213                                                         & 31039/0.4                                                       \\ \hline
DTG        & 979                                                         & 4562                                                          & 2618/0.58                                                       \\ \hline
DTG-KL   & 1659                                                         & 4780                                                          & 4181/0.53                                                       \\ \hline
DTG-MMD  & 1853                                                         & 4282                                                          & 4383/0.61                                                       \\ \hline
DTG-CS     & $\mathbf{946}$                                       & $\mathbf{4105}$                                                   & $\mathbf{2581/0.75}$                                                       \\ \hline
\end{tabular}
\label{tab:dtg_results}
\end{table}

\begin{figure*}[t]
	\centering
\small
	\subfigure[Random]{
		\centering
		\includegraphics[width=.125\linewidth]{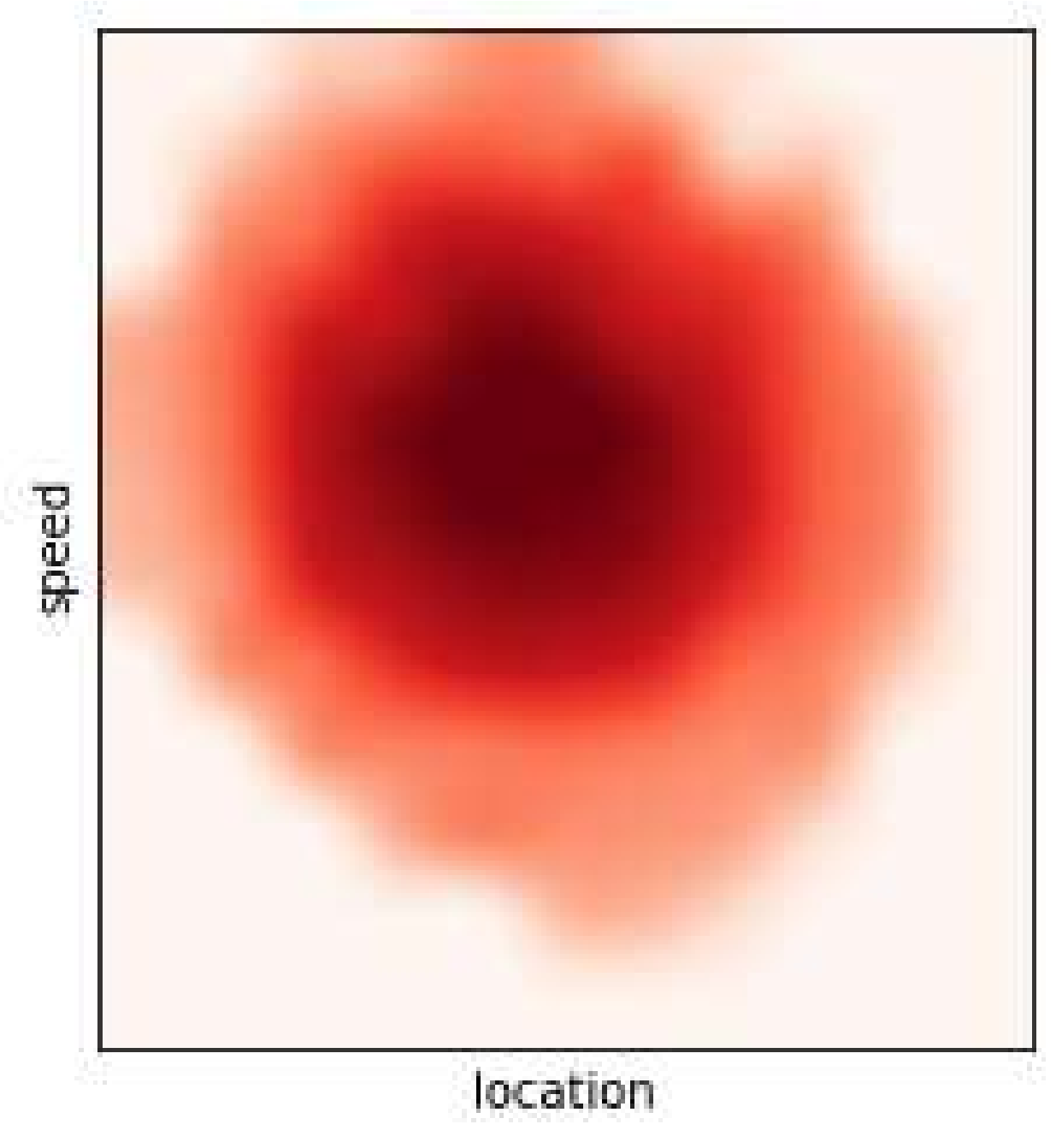}
	}
	\subfigure[Q-learning]{
		\centering
		\includegraphics[width=.125\linewidth]{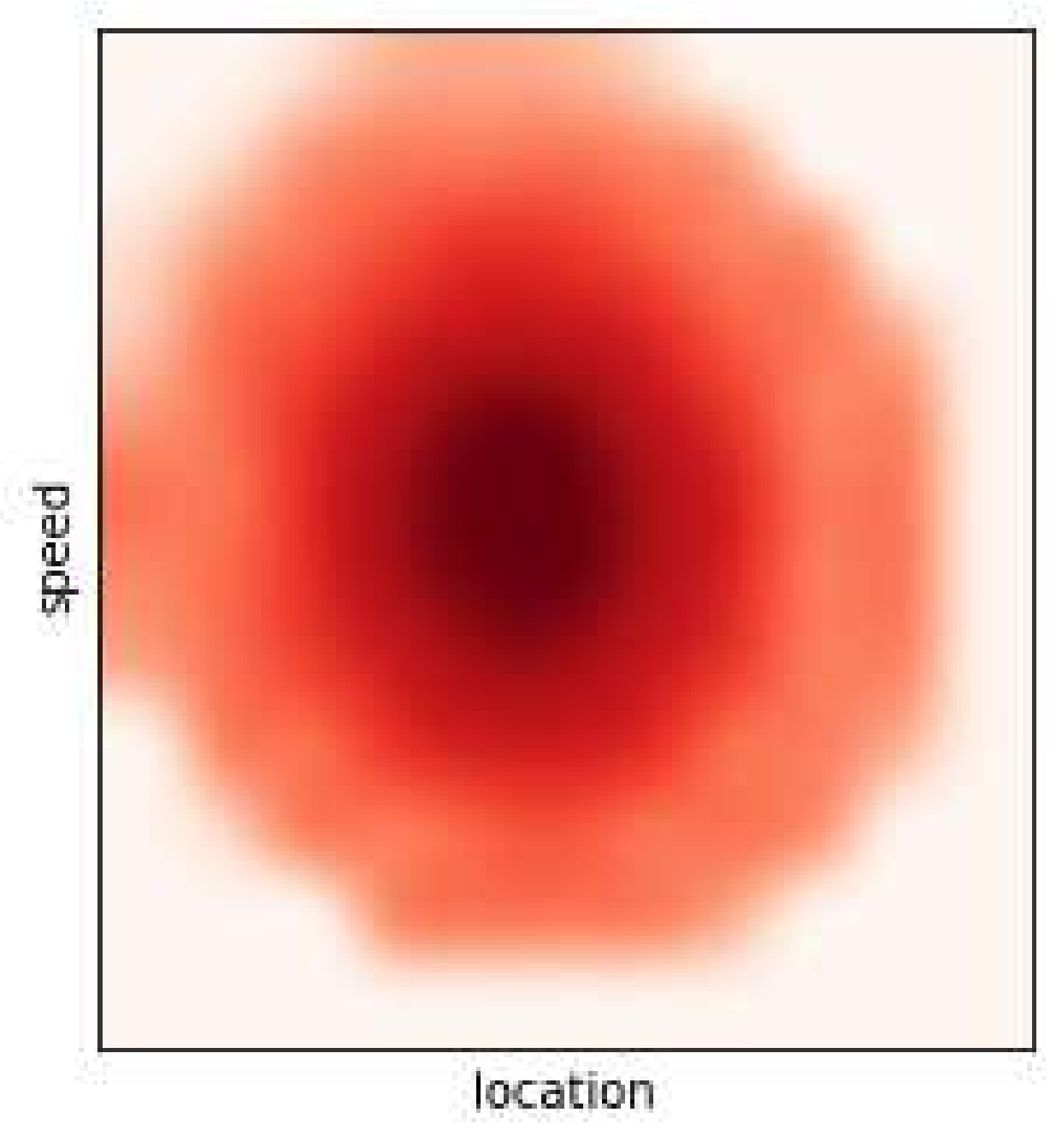}
	}
 	\subfigure[MaxEnt]{
		\centering
		\includegraphics[width=.125\linewidth]{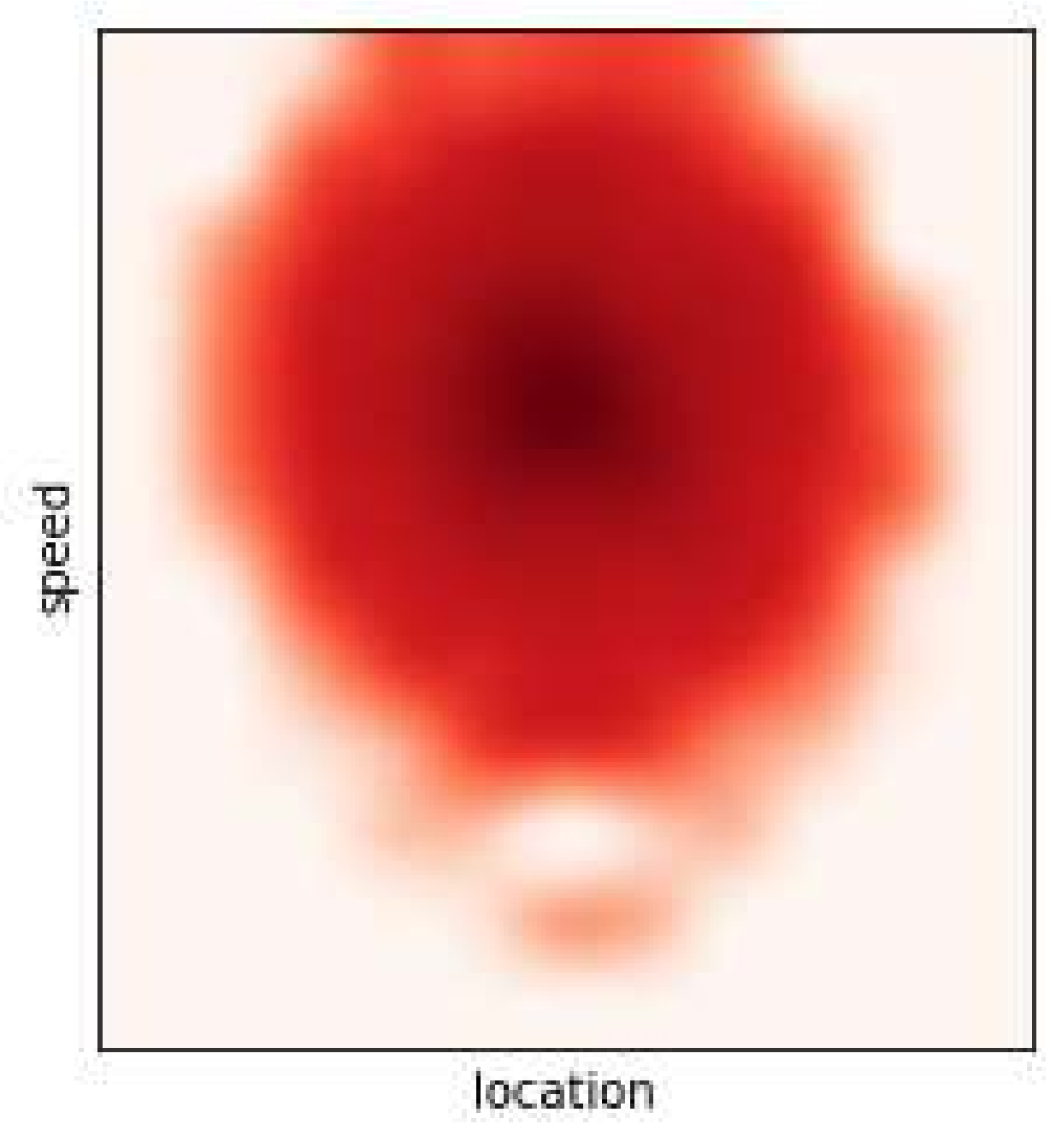}
	}
 	\subfigure[DTG]{
		\centering
		\includegraphics[width=.125\linewidth]{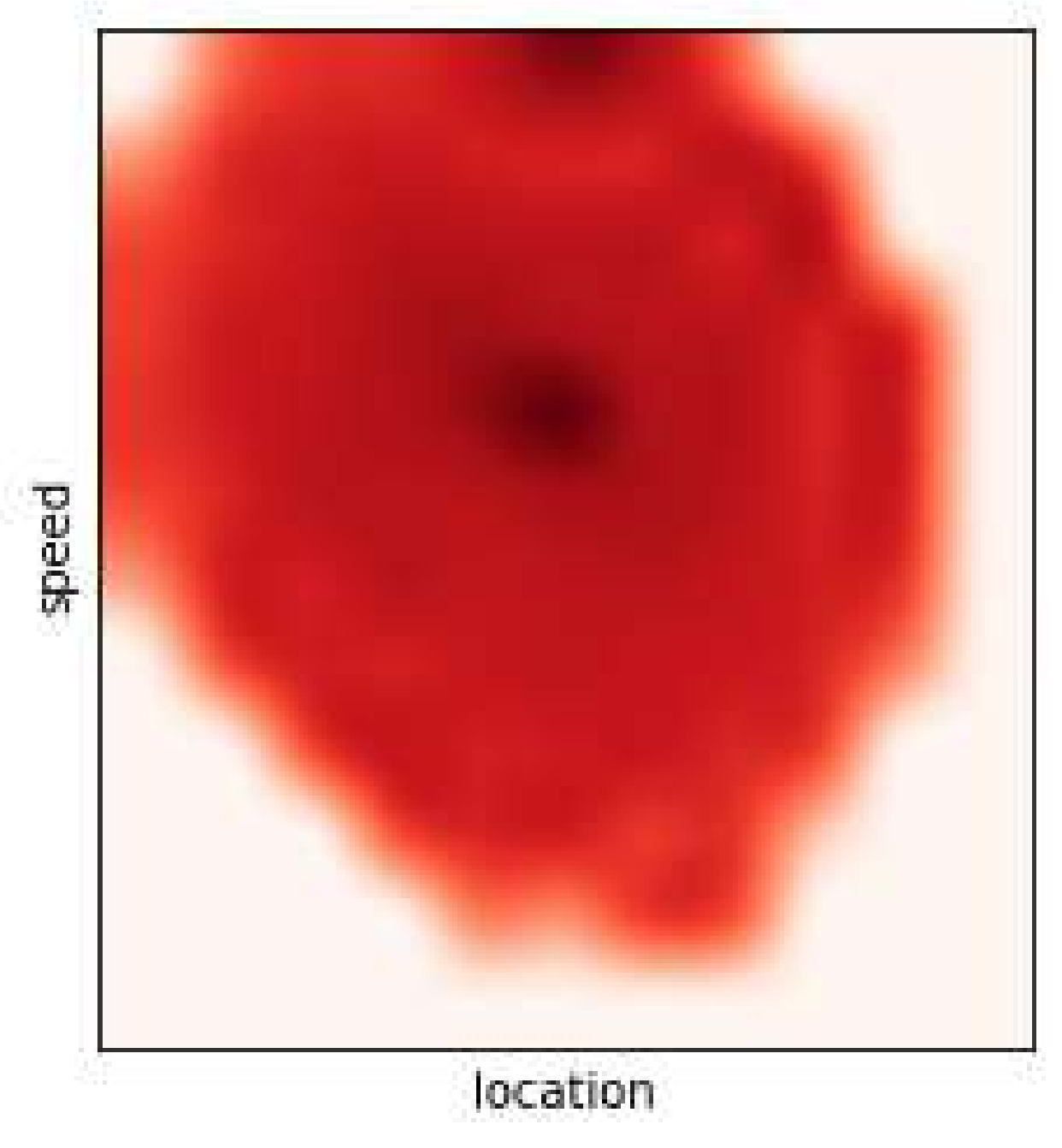}
	}
 	\subfigure[DTG-KL]{
		\centering
		\includegraphics[width=.125\linewidth]{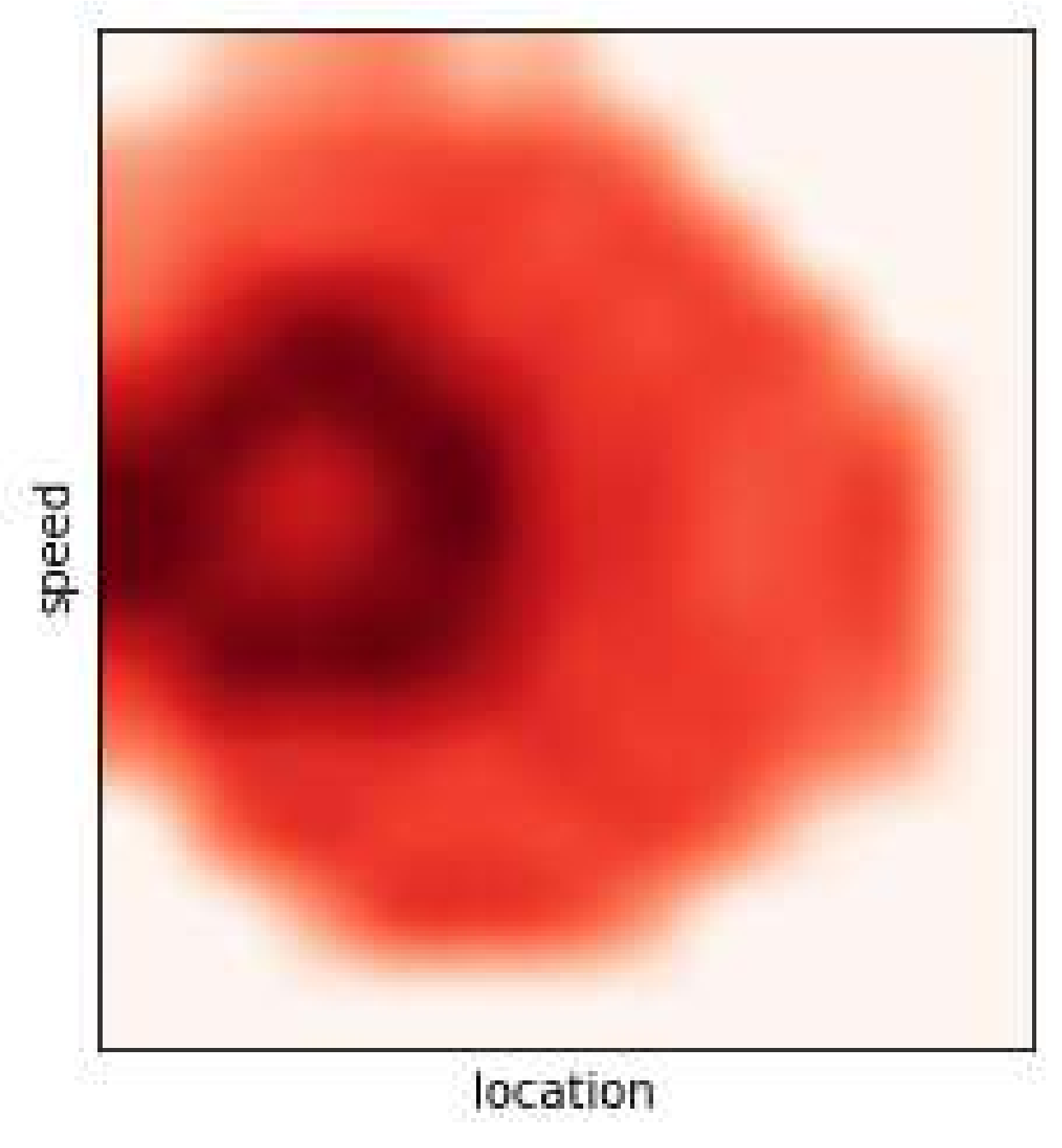}
	}
 	\subfigure[DTG-MMD]{
		\centering
		\includegraphics[width=.125\linewidth]{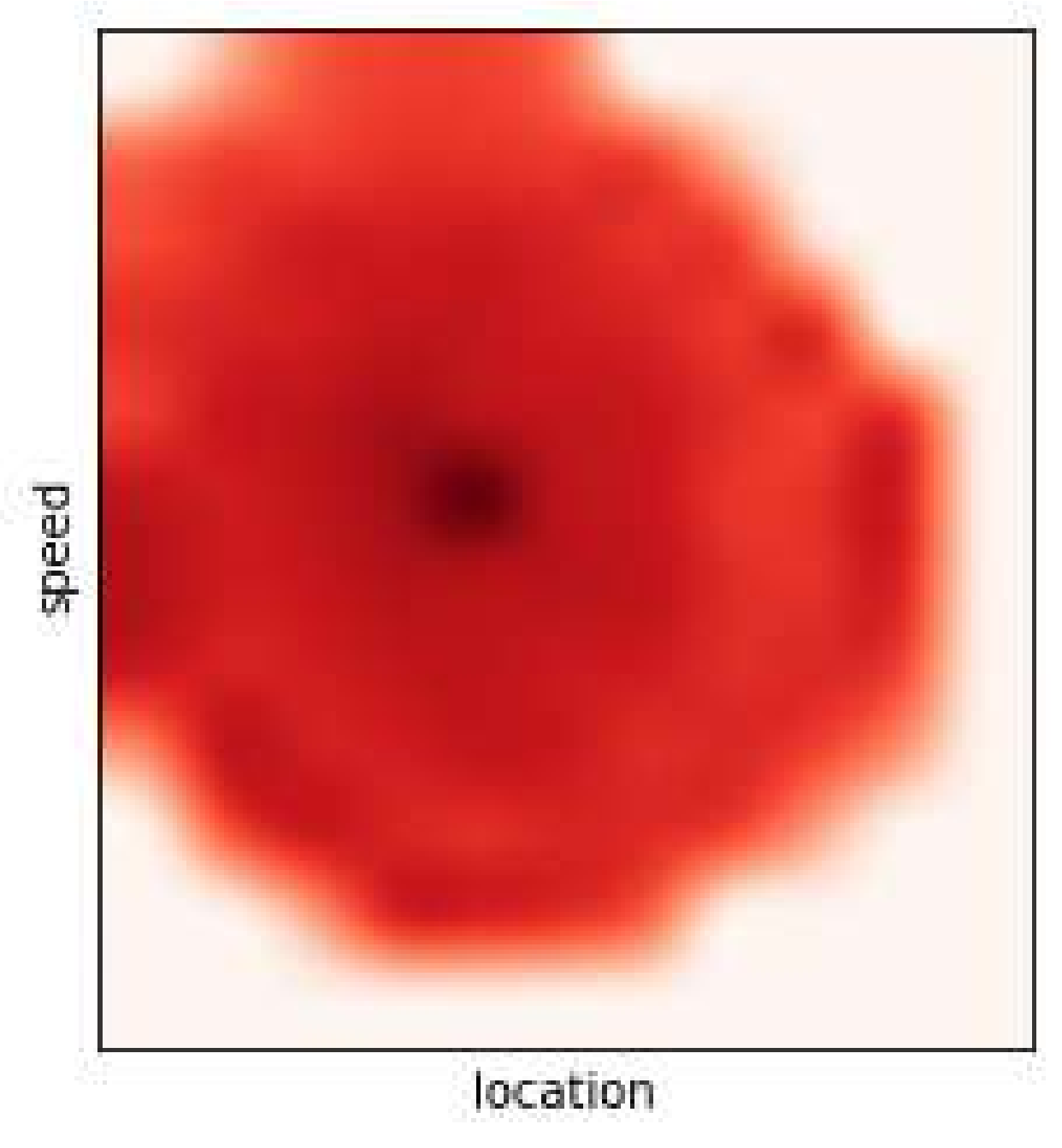}
	}
        \subfigure[DTG-CS]{
		\centering
		\includegraphics[width=.125\linewidth]{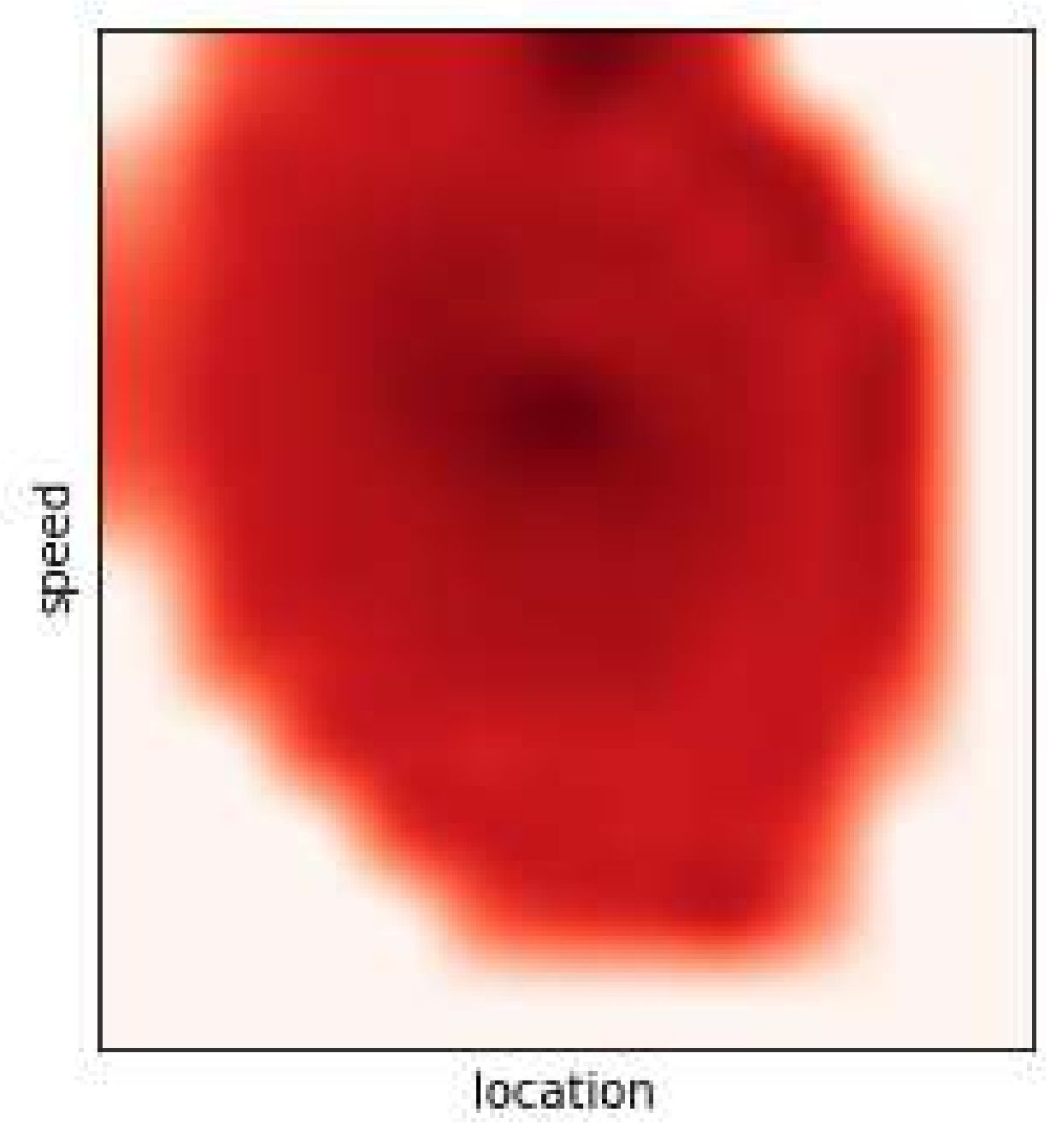}
	}\\
         \subfigure[Random]{
		\centering
		\includegraphics[width=.125\linewidth]{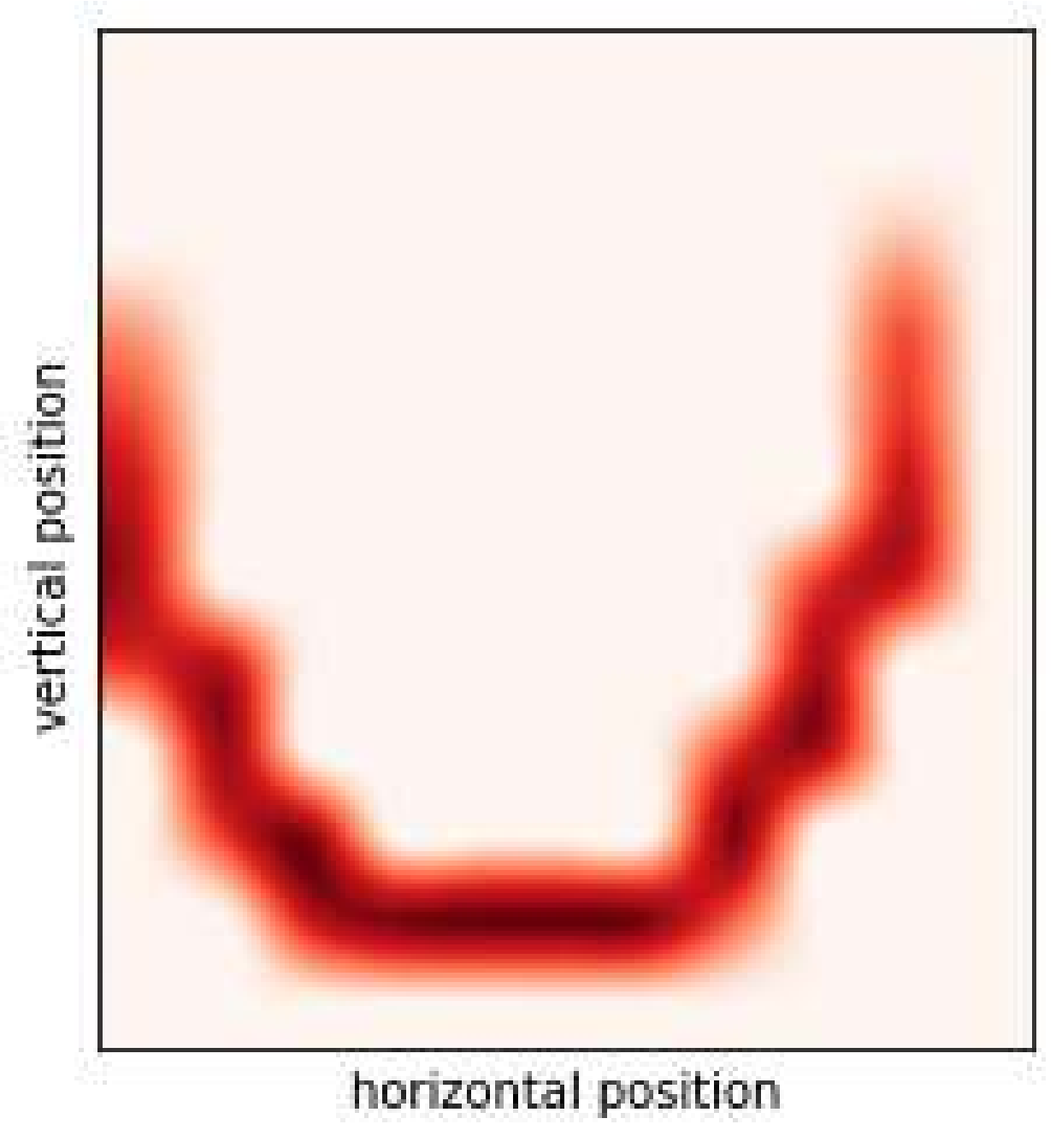}
	}
 	\subfigure[Q-learning]{
		\centering
		\includegraphics[width=.125\linewidth]{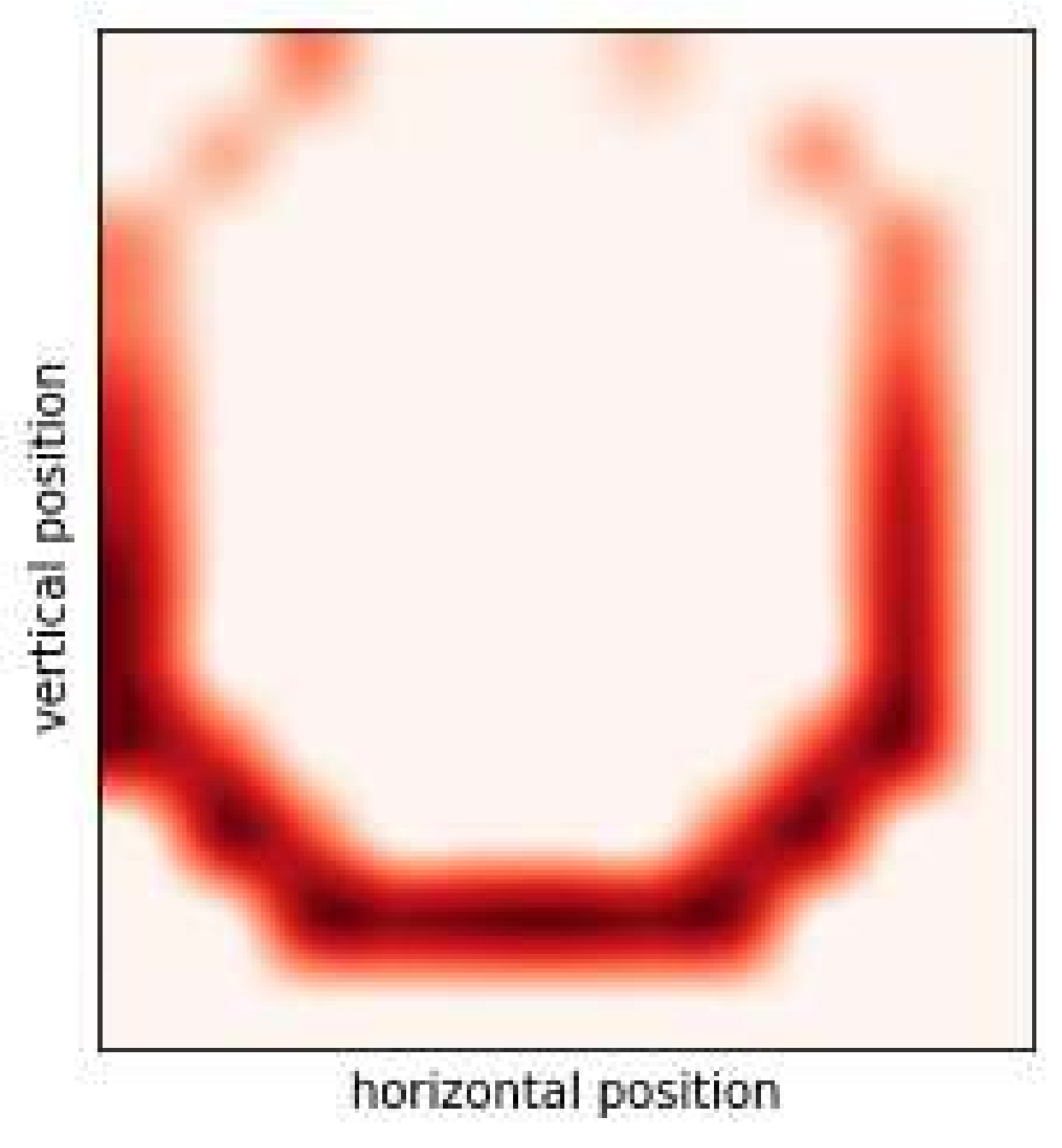}
	}
	\subfigure[MaxEnt]{
		\centering
		\includegraphics[width=.125\linewidth]{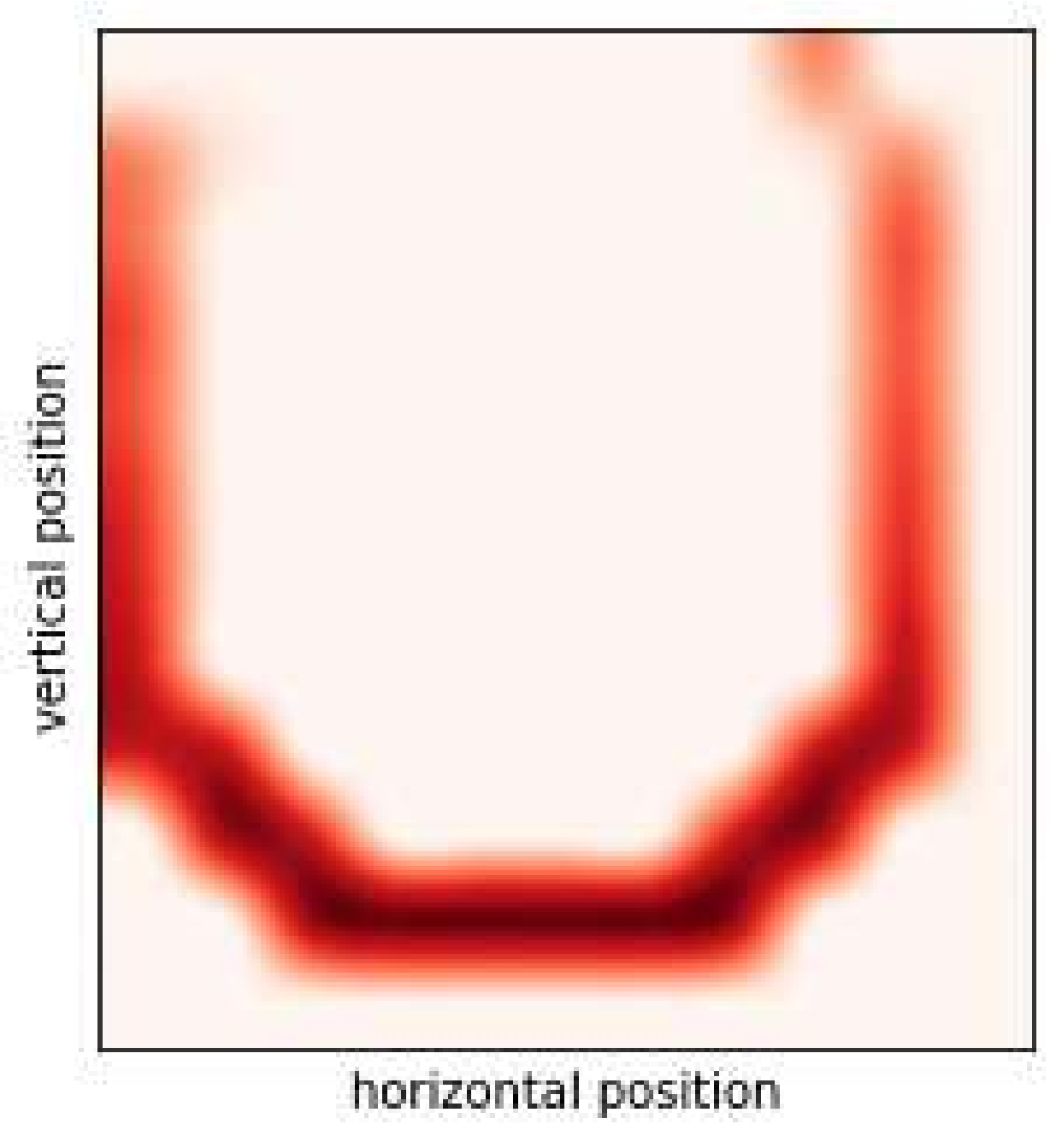}
	}
 	\subfigure[DTG]{
		\centering
		\includegraphics[width=.125\linewidth]{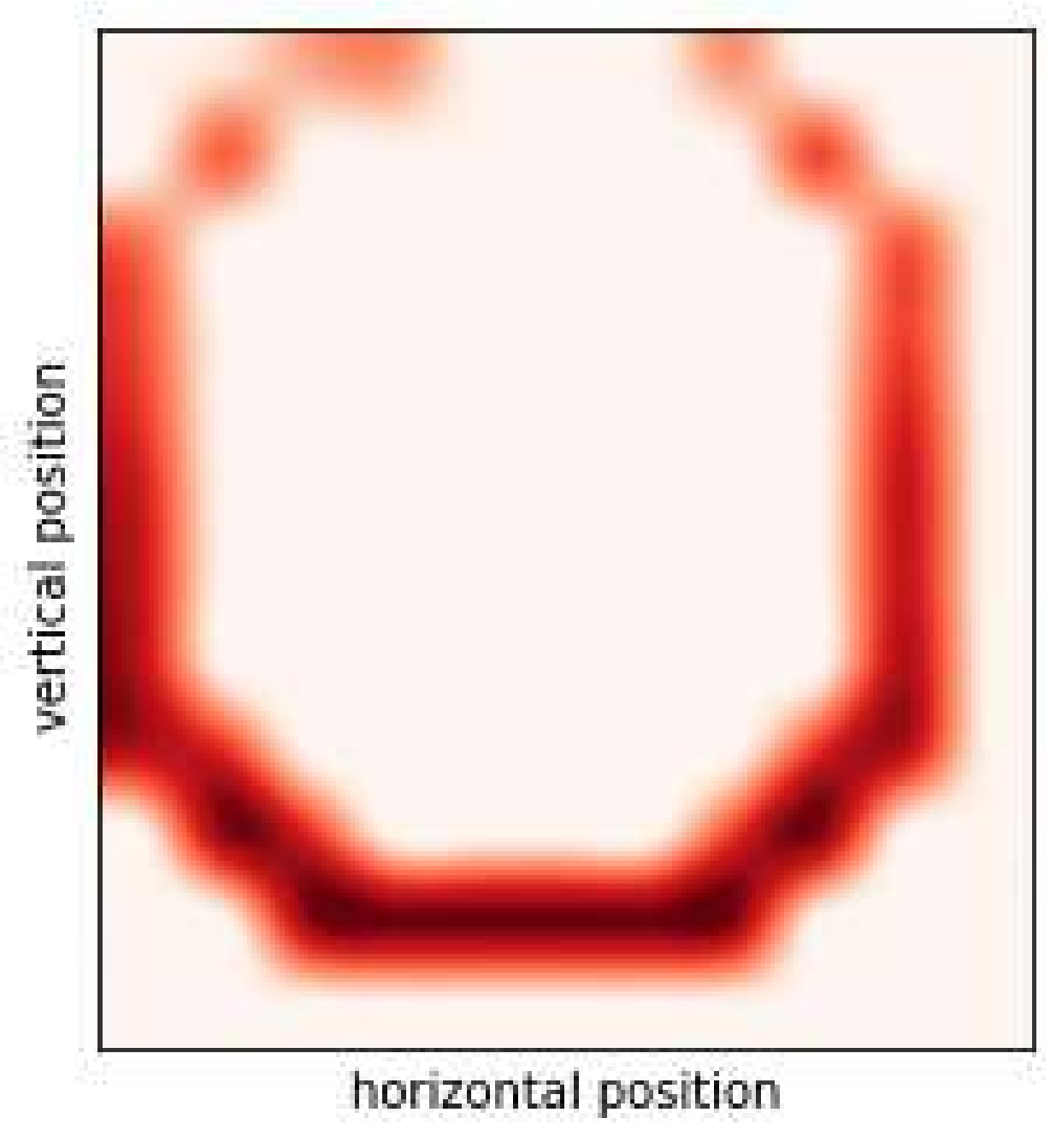}
	}
	\subfigure[DTG-KL]{
		\centering
		\includegraphics[width=.125\linewidth]{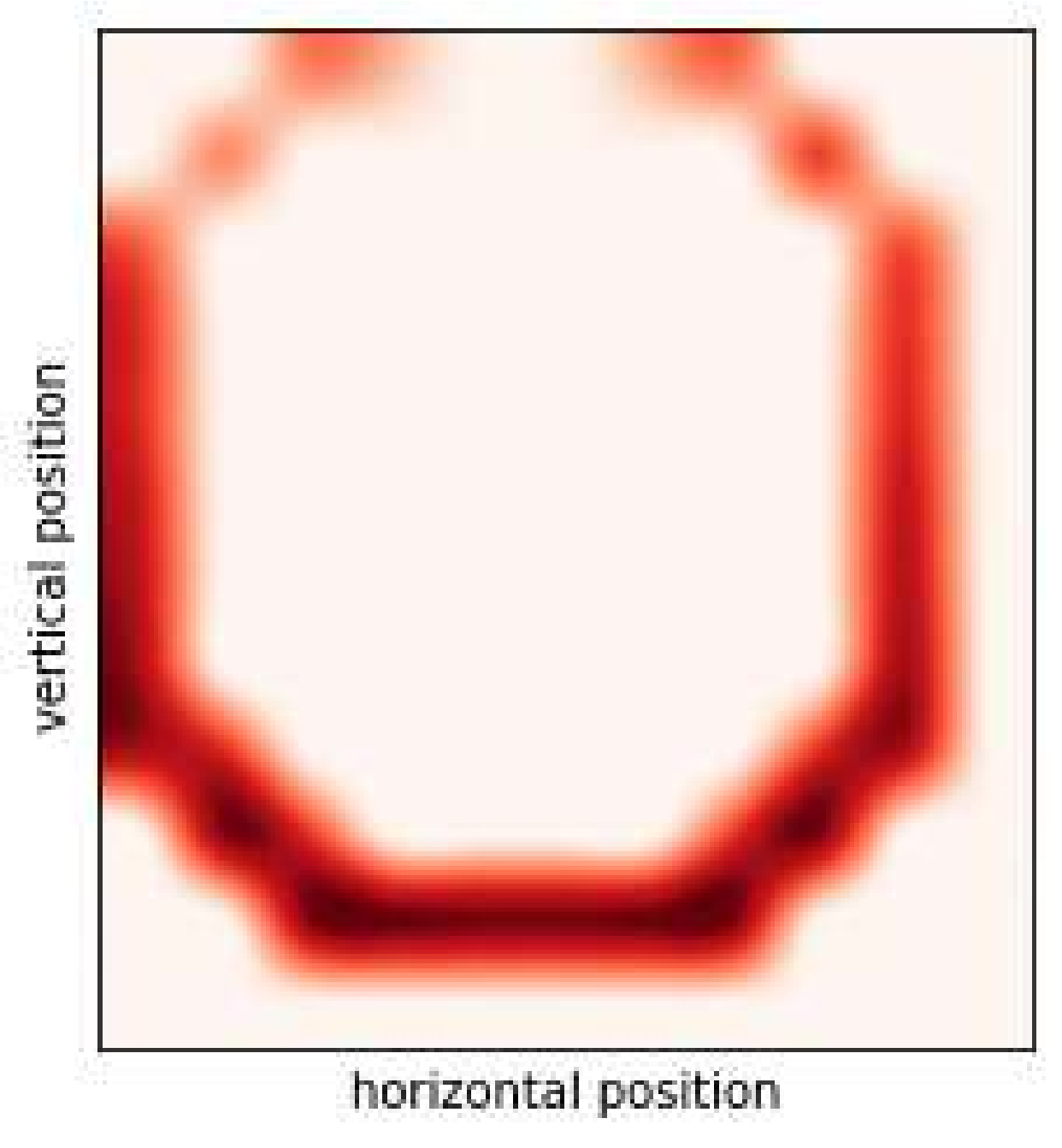}
	}
	\subfigure[DTG-MMD]{
		\centering
		\includegraphics[width=.125\linewidth]{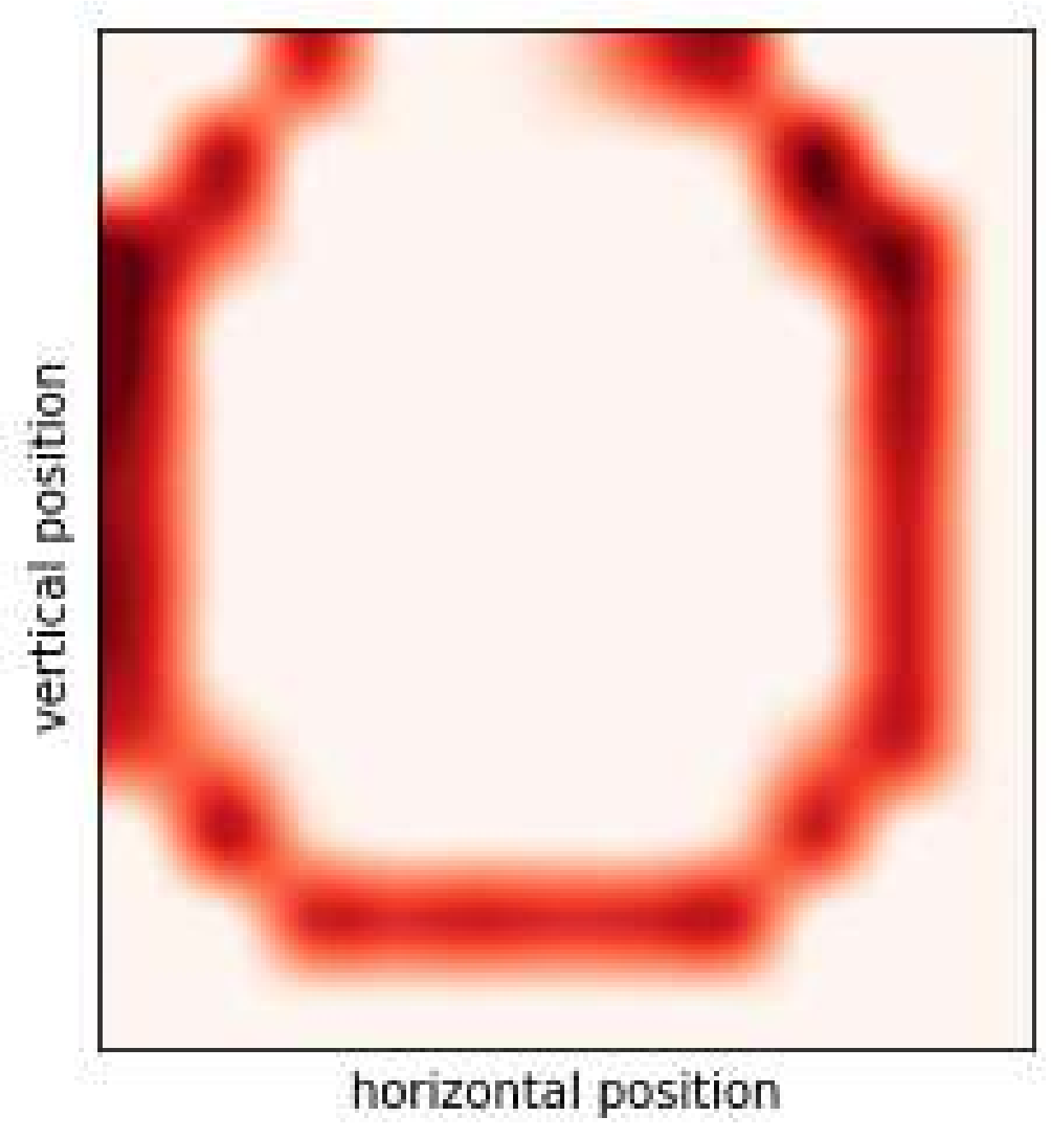}
	}
 	\subfigure[DTG-CS]{
		\centering
		\includegraphics[width=.125\linewidth]{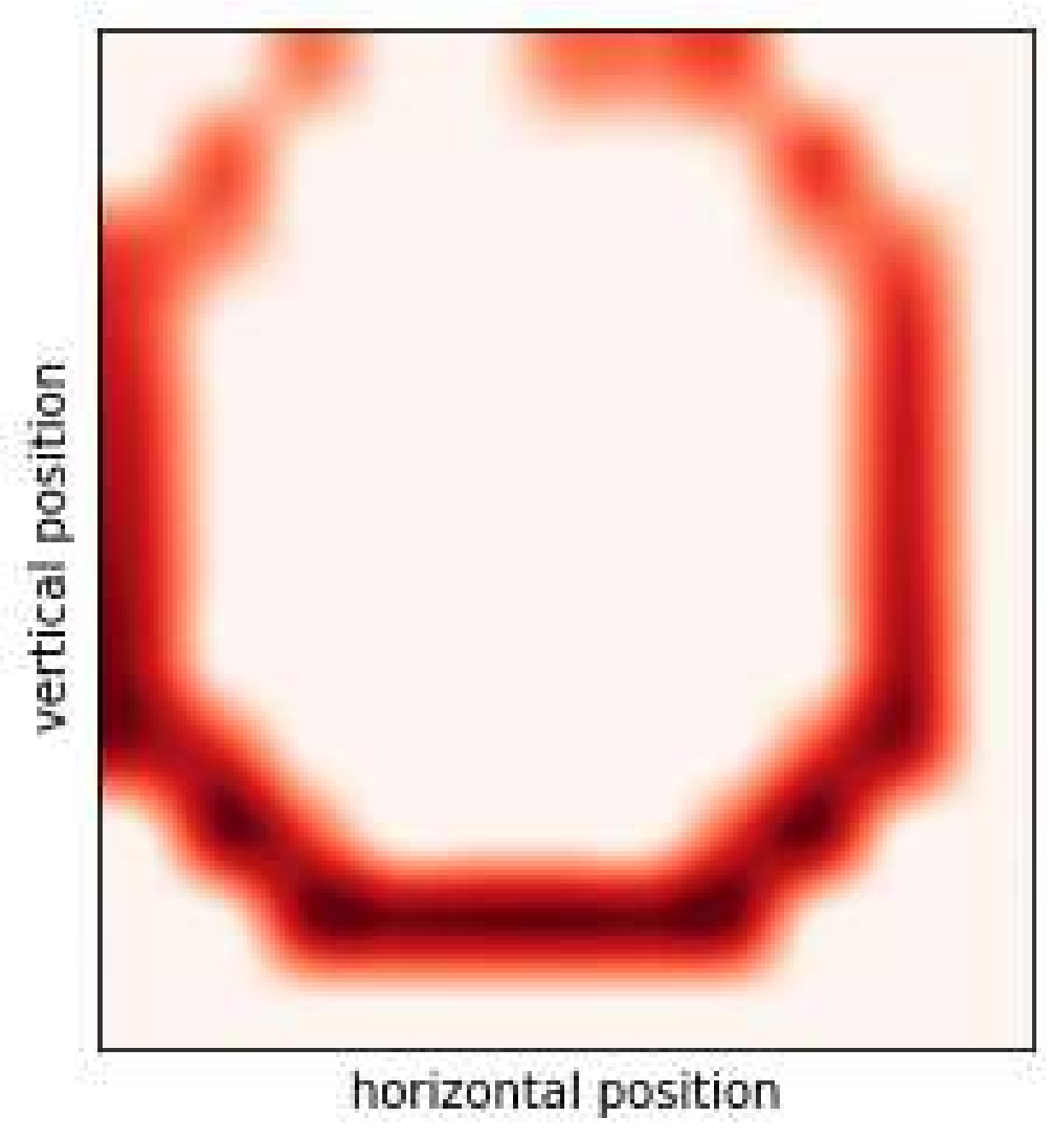}
	}\\
          \subfigure[Random]{
		\centering
		\includegraphics[width=.125\linewidth]{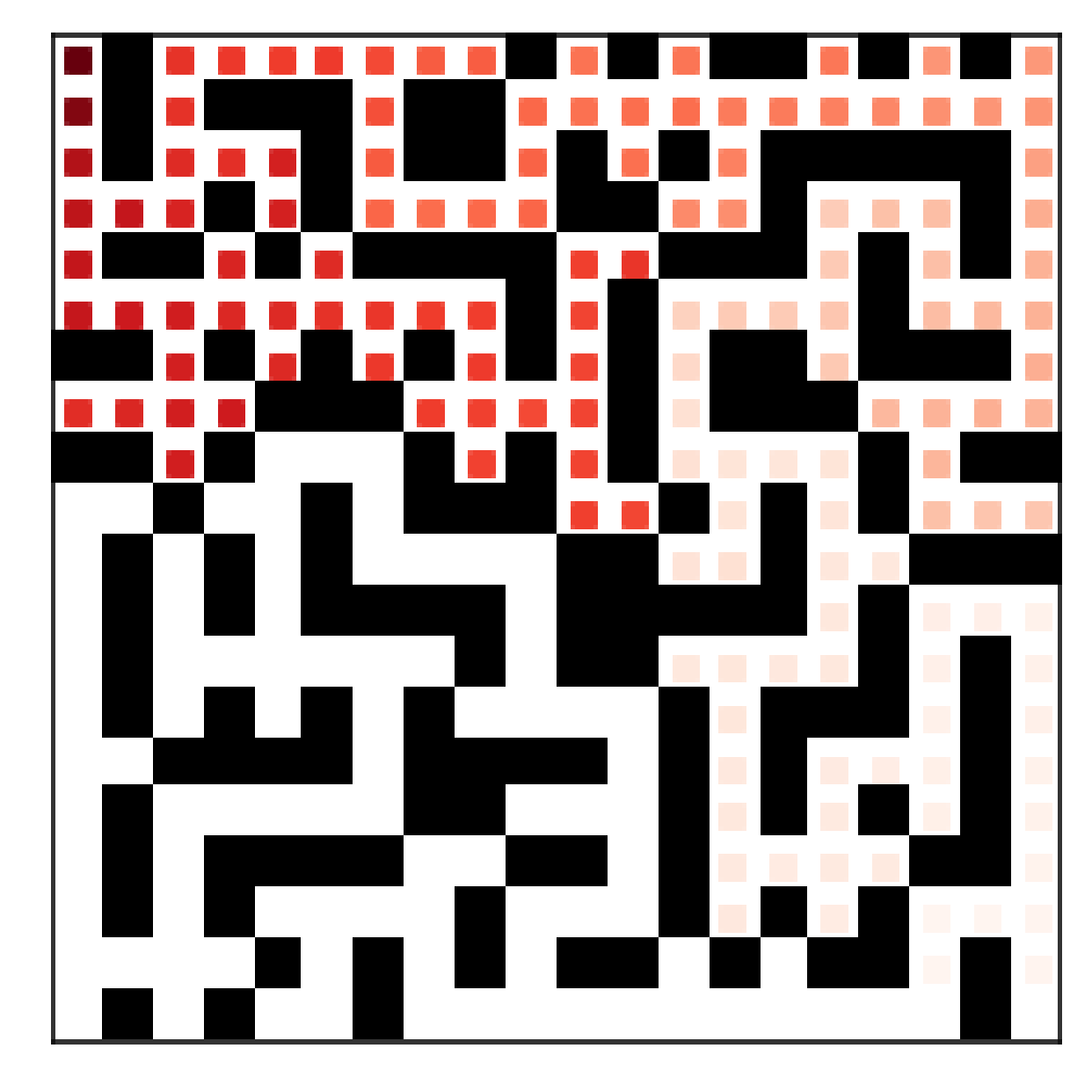}
	}
 	\subfigure[Q-learning]{
		\centering
		\includegraphics[width=.125\linewidth]{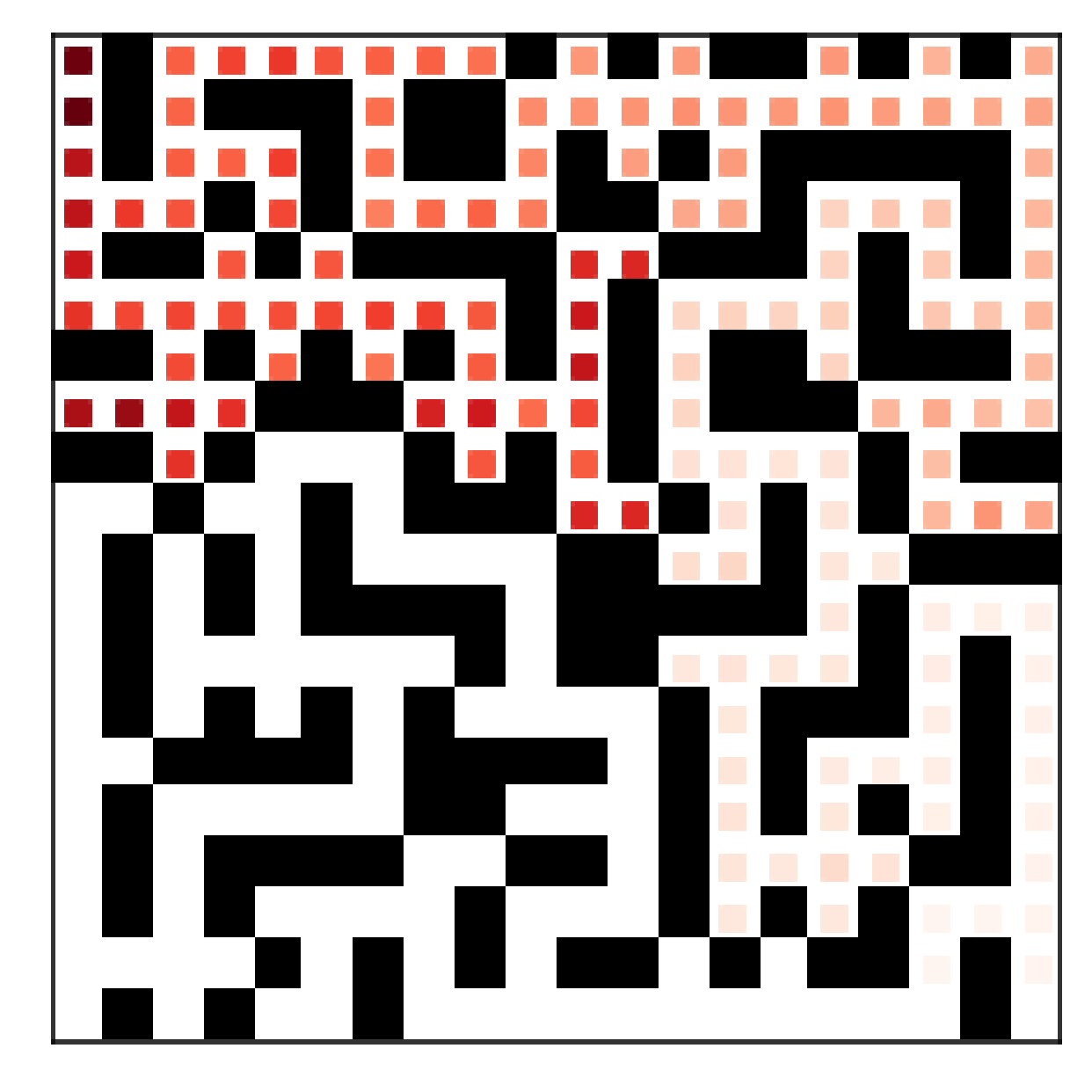}
	}
	\subfigure[MaxEnt]{
		\centering
		\includegraphics[width=.125\linewidth]{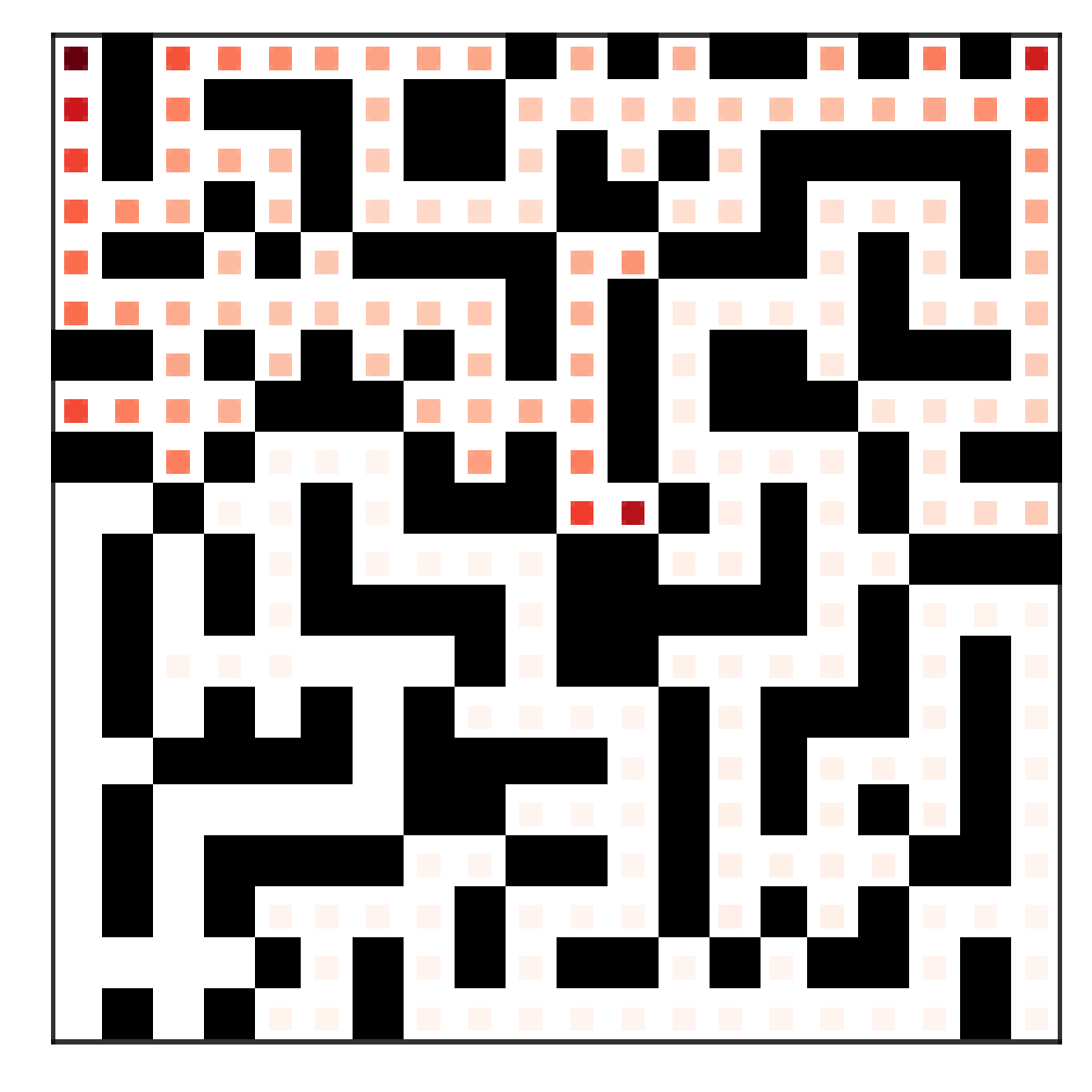}
	}
 	\subfigure[DTG]{
		\centering
		\includegraphics[width=.125\linewidth]{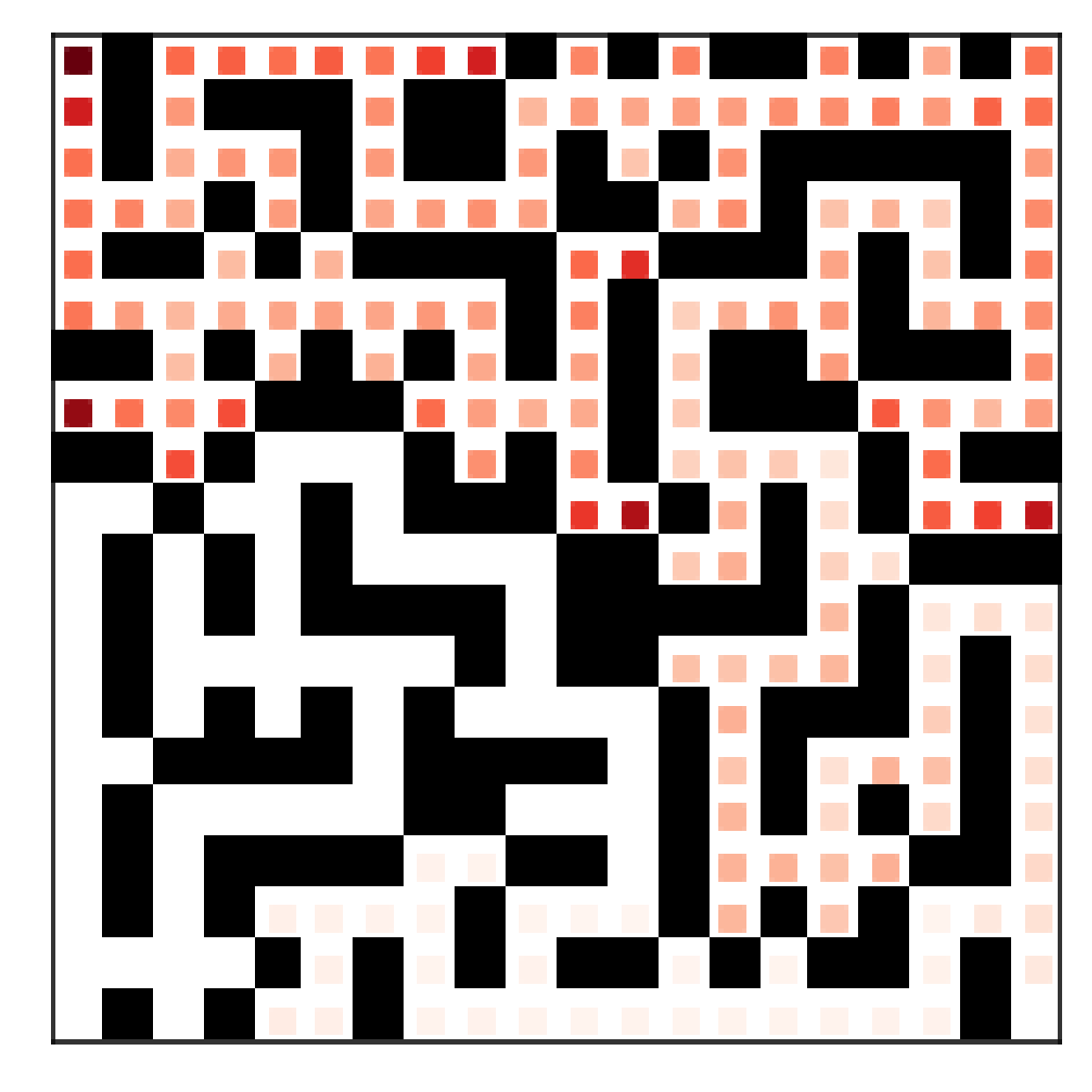}
	}
 	\subfigure[DTG-KL]{
		\centering
		\includegraphics[width=.125\linewidth]{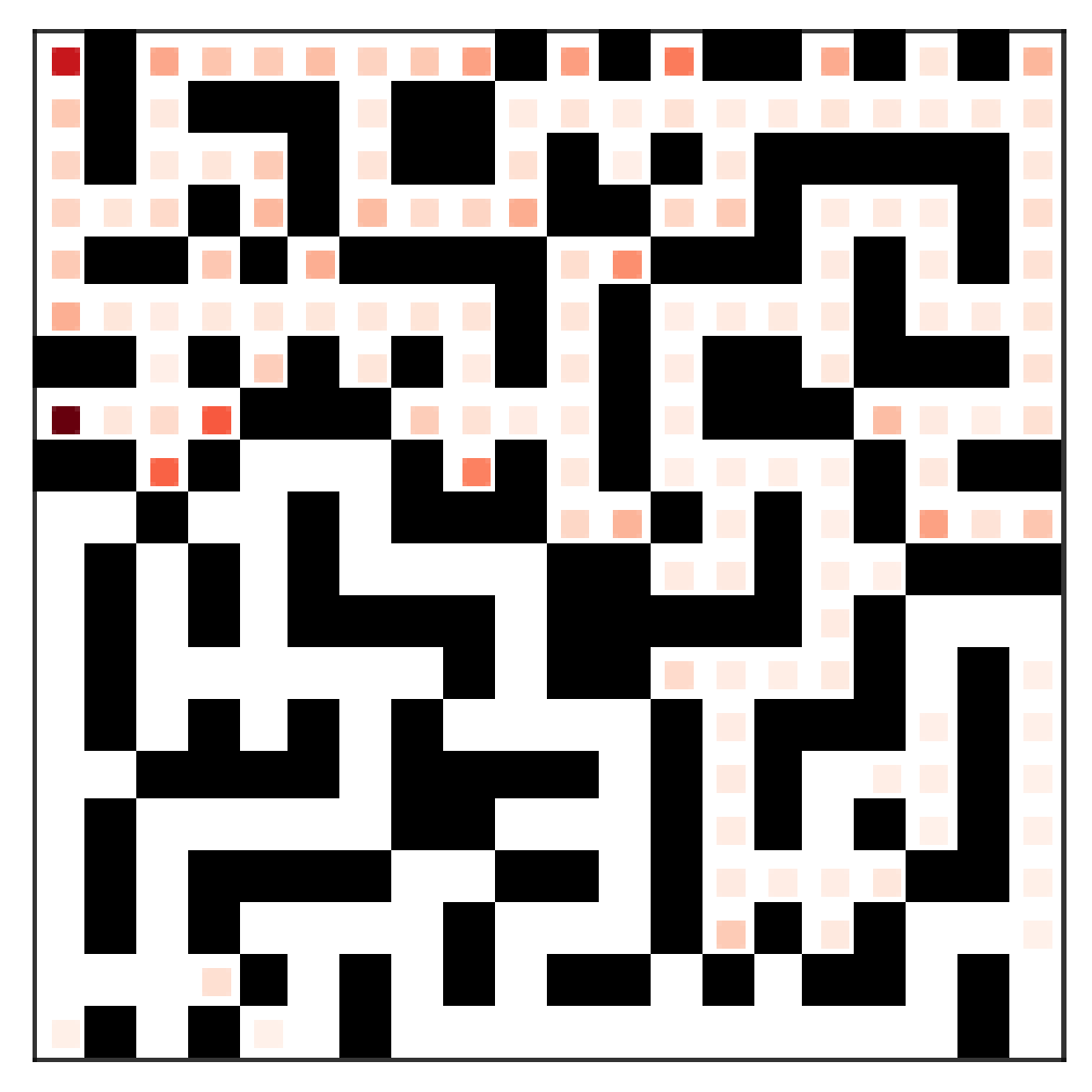}
	}
 	\subfigure[DTG-MMD]{
		\centering
		\includegraphics[width=.125\linewidth]{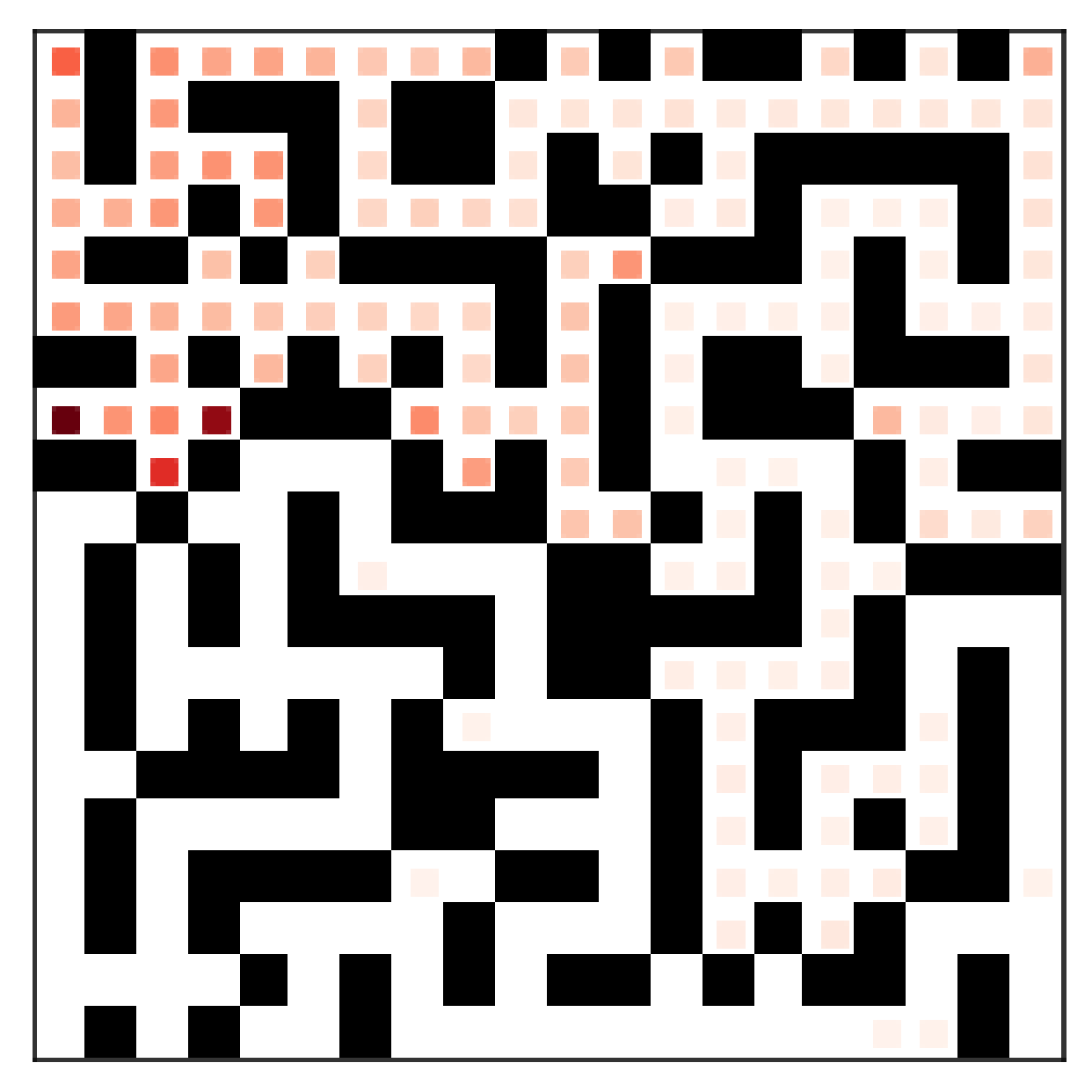}
	}
 	\subfigure[DTG-CS]{
		\centering
		\includegraphics[width=.125\linewidth]{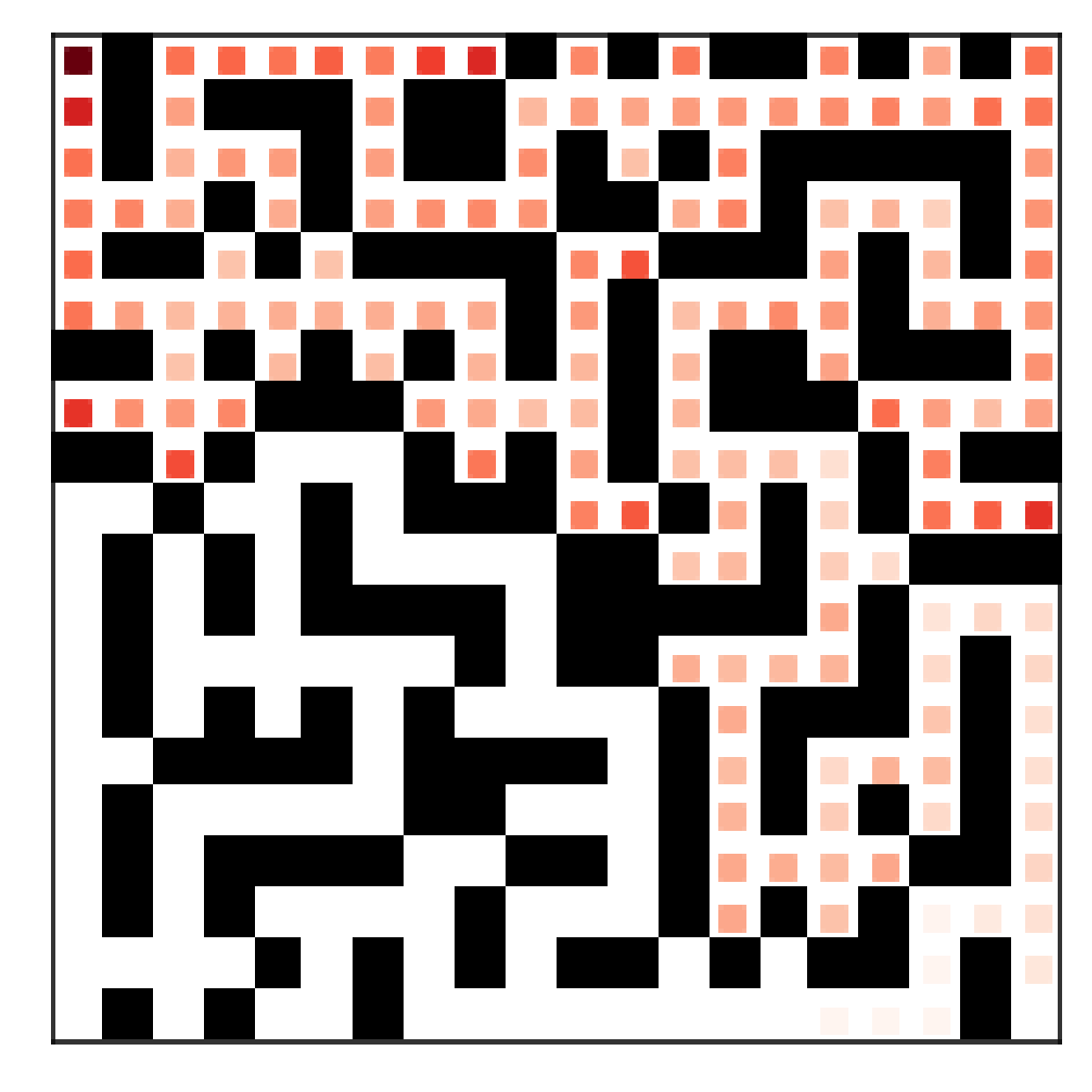}
	}
	\caption{The log-probability of occupancy of the two-dimensional state space in mountain car (first row), the log-probability of occupancy of x-y coordinates in Pendulum (second row) and the heatmap of traces for maze within $50,000$ training steps (third row). We repeat the experiment $100$ times with different random seeds and show the average results.}
	\label{fig:dtg_results}
\end{figure*}

\section{Conclusions and Implications for Future Work}\label{sec:conclusion}

We developed the conditional Cauchy-Schwarz (CS) divergence to quantify the closeness between two conditional distributions from samples, which can be elegantly evaluated with a kernel density estimator (KDE). Our conditional CS divergence enjoys simultaneously relatively lower computational complexity, differentiability, and faithfulness guarantee. The new divergence can be applied to a variety of time series and sequential decision making applications in a versatile way. With regard to time series clustering, it demonstrated obvious performance gain for multivariate or high-dimensional time series. With regard to reinforcement learning without explicit rewards, it outperforms the popular maximum entropy strategy and encourages significantly the exploration to states that have not been visited sufficiently for the agent to be familiar with it. We additionally analyzed two special cases of conditional CS divergence and illustrated their implications in other challenging areas such as time series causal discovery and the loss function design of deep regression models.

Our research opens up exciting avenues for future work, especially from an application perspective. Given the promising performance of causal score with CS divergence $(\text{CS})^{2}$ on benchmark simulated data in Fig.~\ref{fig:causal}, it would be interesting to systematically investigate its ability on causal discovery for realistic data in other domains, such as neuroscience, econometrics, and climate science. Meanwhile, it is compelling to apply our conditional CS divergence estimator in Eq.~(\ref{eq:conditional_CS_est}) and a by-product on sample efficient measure on conditional independence in Eq.~(\ref{eq:CS_ext2}) to different downstream tasks that aim to learn \emph{invariant} and \emph{fair} latent representations (see also discussions in Sec.~\ref{sec:3_2_2}).

We conclude this paper with an initial investigation of unsupervised domain adaptation (UDA) on EEG data. Our motivation is to inspire interested readers to jointly explore broader applications of the conditional CS divergence and uncover its undiscovered theoretical properties.

In UDA, we have $M$ labeled samples $\mathcal{D}_s = \{(\mathbf{x}_s^i, y_s^i)\}_{i=1}^{M}$ from a source domain with distribution $p_s$ and $N$ unlabeled samples $\mathcal{D}_t = \{\mathbf{x}_t^j\}_{j=1}^{N}$ from a target domain with distribution $p_t$ ($p_t \neq p_s$). The primary goal of UDA is to learn a neural network $h_\theta=f\circ g$ such that the risk on the target domain is minimized, where $f: \mathbf{x} \mapsto  \mathbf{z}$ is a feature extractor and $g: \mathbf{z} \mapsto  y$ is a classifier. We denote the prediction by $\hat{y}=g(\mathbf{z})$. According to~\cite{nguyen2022kl}, the loss $l_{\text{test}}$ in the test distribution (a.k.a., target domain) is upper bounded by:
\begin{equation}\label{eq:KL_bound}
l_{\text{test}} \leq l_{\text{train}} + \frac{M}{\sqrt{2}} \sqrt{ D_{\text{KL}} (p_t (\mathbf{z}); p_s(\mathbf{z}) ) + D_{\text{KL}} (p_t (y|\mathbf{z}); p_s(y|\mathbf{z})) },
\end{equation}
in which $l_{\text{train}}$ is the loss in source domain, and it is assumed that $-\log \hat{p}(y|\mathbf{z})$ is upper bounded by a constant $M$\footnote{In classification, one can enforce this condition easily by augmenting the output softmax of the classifier so that each class probability is always at least $\exp(-M)$~\cite{nguyen2022kl}. For example, if we choose $M = 4$, then $\exp(-M) \approx 0.02$.}.

Eq.~(\ref{eq:KL_bound}) implies that achieving small test error necessitates matching both the marginal distribution $p(\mathbf{z})$ and the conditional distribution $p(y|\mathbf{z})$, which aligns with \cite{zhao2019learning}. Motivated by this theoretical basis, it is reasonable to replace KL divergence (due to the difficulty of estimation) with our CS divergence and optimize the following objective\footnote{Replacing the KL divergence with the CS divergence also results in a tighter generalization error bound, as thoroughly discussed in~\cite{yindomain2024}.}:
\begin{equation}
L_{\text{CE}} + \alpha \left[ D_{\text{CS}} (p_t (\mathbf{z}); p_s(\mathbf{z}) ) + D_{\text{CS}} (p_t (\hat{y}|\mathbf{z}); p_s(y|\mathbf{z})) \right],
\end{equation}
where $L_{\text{CE}}=-\frac{1}{M}\sum_{i=1}^M y_s^i \log \hat{y}_s^i $ is the cross-entropy loss on source domain and $\alpha>0$ controls the strength of regularization.
Note that, in target domain, there is no ground truth $y_t$. Hence, we approximate $y_t$ with its prediction $\hat{y}_t=h_\theta(x_t)$, which is a common trick in UDA literature.

We apply CS divergence-based domain adaptation to EEG classification on two benchmark real-world datasets, namely  BCI Competition IIIb~\cite{blankertz2006bci} and SEED~\cite{zheng2015investigating}. We also compare our method with representative MMD-based approaches, which include DDC~\cite{tzeng2014deep}, DAN~\cite{long2015learning}, JAN~\cite{long2017deep}, DJP-MMD~\cite{zhang2020discriminative}. The baseline model is EEGNet~\cite{lawhern2018eegnet}, and $\mathbf{z}$ is selected to be the latent representation before the final linear classification layer. A detailed discussion of experimental setup and the difference amongst all competing approaches are provided in Appendix~\ref{sec:UDA}. According to Table~\ref{tab:UDA}, our CS divergence always outperforms the best MMD-based approaches.

\begin{table}[]
\centering
\caption{Classification accuracy for different approaches.}
\begin{tabular}{|c|c|c|c|}
\hline
Methods    & S4 $\rightarrow$ X11 & X11 $\rightarrow$ S4 & SEED \\ \hline
EEGNet (no adaptation)     & 0.608   & 0.718  &  0.515  \\ \hline
DDC     &  0.610   & 0.766  &  0.564  \\ \hline
DAN & 0.615  & 0.768  &  0.547  \\ \hline
JAN & 0.609 & 0.771  &  0.543  \\ \hline
DJP-MMD    & 0.628  & 0.768   &  0.536  \\ \hline
\textbf{CS} (\textbf{ours})  & $\mathbf{0.631}$  & $\mathbf{0.801}$  & $\mathbf{0.578}$ \\ \hline
\end{tabular}
\label{tab:UDA}
\end{table}




\bibliographystyle{elsarticle-num}
\bibliography{CS_divergence}

\newpage
\onecolumn
\appendices

\section{Background Knowledge}
\subsection{On Kernel Functions in Multivariate Kernel Density Estimation}
Kernel density estimation (KDE) can be extended to estimate multivariate densities $f$ in $\mathbb{R}^d$ based on the same principle: perform an average of densities ``centered" at the data points~\cite{Garcia-Portugues2024}. Specifically, given $M$ samples $X_1,X_2,\cdots,X_M$ in $\mathbb{R}^d$, the KDE of $f$ evaluated on an arbitrary point $\mathbf{x}$ is given by:
\begin{equation}
\label{eq:KDE_multivariate}
\hat{f}(\mathbf{x},\Sigma) := \frac{1}{M |\Sigma|^{1/2}} \sum_{i=1}^M K( \Sigma^{-1/2} (\mathbf{x}-X_i) ),
\end{equation}
where $K$ is a $d$-variate symmetric kernel that is unimodal at $\mathbf{0}$ and that depends on the bandwidth matrix $\Sigma$, a symmetric positive definite (SPD) matrix. Then, the bandwidth matrix $\Sigma$ can be thought of as the variance-covariance matrix of a multivariate normal density with mean $X_i$ and the KDE in Eq.~(\ref{eq:KDE_multivariate}) can be regarded as a data-driven mixture of those densities~\cite{Garcia-Portugues2024}.

A common notation is $K_\Sigma(\mathbf{z}):=|\Sigma|^{-1/2} K(\Sigma^{-1/2}\mathbf{z})$, so the KDE can be compactly written as:
\begin{equation}
\hat{f}(\mathbf{x},\Sigma) := \frac{1}{M} \sum_{i=1}^M K_\Sigma (\mathbf{x}-X_i).
\end{equation}

Considering a full bandwidth matrix $\Sigma$ gives more flexibility to the KDE, but also quadratically increases the amount of bandwidth parameters ($\frac{d(d+1)}{2}$ in total) that need to be chosen precisely, which notably complicates bandwidth selection as the dimension $d$ grows, and increases the variance of the KDE. A common simplification is to consider a diagonal bandwidth matrix $\Sigma = \diag (\sigma_1^2,\cdots,\sigma_d^2)$, which yields the KDE employing product kernels~\cite{Garcia-Portugues2024,li2023nonparametric}:
\begin{equation}
\label{eq:KDE_product_kernel}
\hat{f}(\mathbf{x},\Sigma) := \frac{1}{M} \sum_{i=1}^M \kappa_{\sigma_1}(x_1-X_{i,1}) \times \kappa_{\sigma_2}(x_2-X_{i,2}) \times \cdots \times \kappa_{\sigma_d}(x_d-X_{i,d}),
\end{equation}
where $\mathbf{x} = [x_1, x_2,\cdots,x_d]^T$, $X_i = [X_{i,1}, X_{i,2},\cdots,X_{i,d}]^T$, $\mathbf{\sigma} = [\sigma_1, \sigma_2,\cdots,\sigma_d]^T$ is the vector of bandwidths, $\kappa_\sigma(\cdot)$ is a univariate kernel function such as Gaussian $\kappa_{\sigma}(\cdot)=\frac{1}{\sqrt{2\pi}\sigma} \exp(-\frac{\|\cdot\|^2}{2\sigma^2})$.

Note that, the only assumption in Eq.~(\ref{eq:KDE_product_kernel}) was that the samples are independent. That is, no restrictions were placed on the $s$ index for each dimension $X_{i,s} (s=1,2,\cdots,d)$. The product kernel is used simply for convenience, and it certainly \emph{does not} require that the $X_{i,s}$'s are independent across the $s$ index. In other words, the multivariate KDE in Eq.~(\ref{eq:KDE_product_kernel}) is capable of capturing general dependence among the different dimensions of $X_i$~\cite[Chapter 1.6]{li2023nonparametric}.

In practice, a much simpler and common choice is to consider $\sigma=\sigma_1=\sigma_2=\cdots=\sigma_d$.

\subsection{\emph{Resubstitution Estimator} and \emph{Plug-in Estimator}}\label{sec:two_estimators}
The CS divergence involves the estimation of the inner product of two density functions. In fact, quantities like $\int p^2 d\mu$ and $\int pq d\mu$ can be estimated in a couple of ways~\cite{beirlant1997nonparametric}. In this paper, we use a so-called \emph{resubstitution estimator}; whereas the authors in \cite{jenssen2006cauchy} use the \emph{plug-in estimator} that simply inserts KDE of the density into the formula, i.e.,
\begin{equation}
\begin{split}
        \int p^2 d\mu \approx \int \widehat{p}^2(x)dx & = \int \left(
\frac{1}{N}\sum_{i=1}^N \kappa_\sigma(x_i-x) \right)^2 dx \\
& = \frac{1}{N^2} \sum_{i=1}^N \sum_{j=1}^N \int \kappa_\sigma(x_i-x)\times \kappa_\sigma(x_j-x) dx.
\end{split}
\end{equation}

Authors of \cite{jenssen2006cauchy} then assume a Gaussian kernel and rely on the property that the integral of the product of
two Gaussians is exactly evaluated as the value of the Gaussian computed at the difference of the arguments and whose variance is the sum of the variances of the two original Gaussian functions. Hence,
\begin{equation}\label{eq:106}
    \int p^2 d\mu  \approx \frac{1}{N^2} \sum_{i=1}^N \sum_{j=1}^N \int \kappa_\sigma(x_i-x)\times \kappa_\sigma(x_j-x) dx
     = \frac{1}{N^2} \sum_{i=1}^N \sum_{j=1}^N \kappa_{\sqrt{2}\sigma}(x_i-x_j).
\end{equation}

To our knowledge, other kernel functions, however, do not result in such a convenient evaluation of the integral because the Gaussian maintains the functional form under convolution.

By contrast, we estimate $\int p^2d\mu$ as:
\begin{equation}\label{eq:107}
        \int p^2 d\mu = \mathbb{E}_p(p) = \frac{1}{N} \sum_{i=1}^N p(x_i)
        = \frac{1}{N} \sum_{i=1}^N \left(
\frac{1}{N} \sum_{j=1}^N \kappa_\sigma(x_i-x_j) \right) dx
 = \frac{1}{N^2} \sum_{i=1}^N \sum_{j=1}^N \kappa_{\sigma}(x_i-x_j).
\end{equation}

Although Eq.~(\ref{eq:107}) only differs from Eq.~(\ref{eq:106}) by replacing $\sqrt{2}\sigma$ with $\sigma$,
our estimator offers two immediate advantages over that in \cite{jenssen2006cauchy}: 1) our estimator is generalizable to all valid kernel functions; and 2) the empirical estimator of conditional CS divergence can be achieved much more easily using only the \emph{resubstitution estimator}.

\subsection{CS Divergence and its Connection to MMD} \label{sec:CS_and_MMD}

Given $M$ samples $\{\mathbf{y}_i^p\}_{i=1}^M$ drawn from distribution $p$ and $N$ samples $\{\mathbf{y}_j^q\}_{j=1}^N$ drawn from distribution $q$. By the kernel density estimation (KDE) with a kernel function $\kappa_\sigma$ of width $\sigma$\footnote{For example, the most popular Gaussian kernel function is expressed as $\kappa_{\sigma}(\cdot)=\frac{1}{\sqrt{2\pi}\sigma}\exp(-\frac{\|\cdot\|^2}{2\sigma^2})$.}, we have:
\begin{equation}\label{eq:appendix_KDE1}
\hat{p}(\mathbf{y}) = \frac{1}{M} \sum_{i=1}^M \kappa_\sigma (\mathbf{y}-\mathbf{y}_i^p),
\end{equation}
and
\begin{equation}\label{eq:appendix_KDE2}
\hat{q}(\mathbf{y}) = \frac{1}{N} \sum_{j=1}^N \kappa_\sigma (\mathbf{y}-\mathbf{y}_j^q).
\end{equation}

To evaluate the dissimilarity between $p$ and $q$, a natural choice is the Euclidean distance:
\begin{equation}
\begin{split}
D_{\text{ED}} (p;q) & = \int (p(\mathbf{y}) - q(\mathbf{y}))^2 dy \\
 & = \int p^2(\mathbf{y})d\mathbf{y} + \int q^2(\mathbf{y})d\mathbf{y} - 2\int p(\mathbf{y})q(\mathbf{y}) d\mathbf{y}
\end{split}
\end{equation}

By Eq.~(\ref{eq:107}), we have:
\begin{equation}
\label{eq:appendix_KDE3}
\int p^2(\mathbf{y})d\mathbf{y} = \mathbb{E}_{p}(p) = \frac{1}{M^2} \sum_{i=1}^M\sum_{i'=1}^M \kappa_{\sigma} (\mathbf{y}_i^p-\mathbf{y}_{i'}^p).
\end{equation}

Similarly,
\begin{equation}
\int q^2(\mathbf{y})d\mathbf{y} = \mathbb{E}_{q}(q) = \frac{1}{N^2} \sum_{j=1}^N\sum_{j'=1}^N \kappa_{\sigma} (\mathbf{y}_j^q-\mathbf{y}_{j'}^q),
\end{equation}
and
\begin{equation}\label{eq:appendix_KDE5}
\int p(\mathbf{y})q(\mathbf{y})d\mathbf{y} = \mathbb{E}_{p}(q) =  \frac{1}{MN} \sum_{i=1}^M\sum_{j=1}^N \kappa_{\sigma} (\mathbf{y}_j^q-\mathbf{y}_i^p).
\end{equation}

Combining Eqs.~(\ref{eq:appendix_KDE3})-(\ref{eq:appendix_KDE5}), we have:
\begin{equation}\label{eq:appendix_ED}
D_{\text{ED}} (p;q ) = \underbrace{\frac{1}{M^2} \sum_{i,i'=1}^M \kappa_{\sigma} (\mathbf{y}_i^p-\mathbf{y}_{i'}^p)}_{\encircle{A}}
+ \underbrace{\frac{1}{N^2} \sum_{j,j'=1}^N \kappa_{\sigma} (\mathbf{y}_j^q-\mathbf{y}_{j'}^q)}_{\encircle{B}}
- \underbrace{\frac{2}{MN} \sum_{i,j=1}^{M,N} \kappa_{\sigma} (\mathbf{y}_j^q-\mathbf{y}_i^p)}_{\encircle{C}}.
\end{equation}

Note that Eq.~(\ref{eq:appendix_ED}) is exactly the same (in terms of mathematical expression) to the square of MMD using \emph{V}-statistic estimator~\cite{gretton2012kernel}:
\begin{equation}
\widehat{\text{MMD}}_{v} [p (\mathbf{x}),q (\mathbf{x})]
 = \left[ \frac{1}{M^2}\sum_{i,i'=1}^M \kappa(\mathbf{y}_i^p,\mathbf{y}_{i'}^p) + \frac{1}{N^2}\sum_{j,j'=1}^N \kappa(\mathbf{y}_j^q,\mathbf{y}_{j'}^q)
 - \frac{2}{MN}\sum_{i,j=1}^{M,N}\kappa(\mathbf{y}_j^q,\mathbf{y}_i^p) \right]^{\frac{1}{2}}.
\end{equation}

Although, in practice, the \emph{U}-statistic estimator of MMD is more widely used:
\begin{equation}
\widehat{\text{MMD}}_{u} [p_s (\mathbf{x}),p_t (\mathbf{x})]
 = \left[ \frac{1}{M(M-1)}\sum_{i=1}^M \sum_{i'\neq i}^M \kappa(\mathbf{y}_i^p,\mathbf{y}_{i'}^p) + \frac{1}{N(N-1)}\sum_{j=1}^N\sum_{j'\neq j}^N \kappa(\mathbf{y}_j^q,\mathbf{y}_{j'}^q)
 - \frac{2}{MN}\sum_{i=1}^{M}\sum_{j=1}^{N} \kappa(\mathbf{y}_j^q,\mathbf{y}_i^p) \right]^{\frac{1}{2}},
\end{equation}

\textcolor{black}{On the other hand, by substituting Eqs.~(\ref{eq:appendix_KDE3})-(\ref{eq:appendix_KDE5}) into the definition of CS divergence ($D_{\text{CS}} (p\|q) = -2\log(\int p(\mathbf{y})q(\mathbf{y})d\mathbf{y}) + \log(\int p(\mathbf{y})^2 d\mathbf{y}) + \log(\int q(\mathbf{y})^2 d\mathbf{y})$)}, we obtain:
\begin{equation}
\widehat{D}_{\text{CS}} (p\|q) = \log\left(\frac{1}{M^2}\sum_{i,i'=1}^M \kappa_{\sigma}(\mathbf{y}_i^p-\mathbf{y}_{i'}^p)\right) +
\log\left(\frac{1}{N^2}\sum_{j,j'=1}^N \kappa_{\sigma}(\mathbf{y}_j^q-\mathbf{y}_{j'}^q)\right)
-2 \log\left(\frac{1}{MN}\sum_{i=1}^M \sum_{j=1}^N \kappa_{\sigma}(\mathbf{y}_j^q-\mathbf{y}_i^p)\right).
\end{equation}

To summarize, the mathematical expression of MMD can be derived by either taking the distance of kernel mean embedding in a reproducing kernel Hilbert space (RKHS) or taking the Euclidean distance of two distributions which are estimated by KDE. More interestingly, one can estimate MMD by $\left( \encircle{A}+\encircle{B}-2\encircle{C} \right)^\frac{1}{2}$, and CS divergence by $\log\left(\encircle{A}\right)+ \log\left(\encircle{B}\right)-2\log\left(\encircle{C}\right)$. Note, however, that this observation does not apply to conditional MMD and conditional CS divergence.

\section{Proofs} \label{sec:appendix_proof}

\subsection{Proof to Proposition 1}

\begin{proof}
Note that, the Cauchy-Schwarz inequality can be generalized to double integrals~\cite{steele2004cauchy}. Specifically, for $f: \mathcal{X}\times\mathcal{Y} \rightarrow \mathbb{R}$, and $g: \mathcal{X}\times\mathcal{Y} \rightarrow \mathbb{R}$, then the double integrals
\begin{equation*}
A = \iint_{\mathcal{X}\times\mathcal{Y}} f^2 dxdy,\quad B = \iint_{\mathcal{X}\times\mathcal{Y}} fg dxdy, \quad C = \iint_{\mathcal{X}\times\mathcal{Y}} g^2 dxdy
\end{equation*}
must satisfy the inequality:
\begin{equation}
\label{eq:double_CS}
|B| \leq \sqrt{A}\cdot\sqrt{C},
\end{equation}
and the inequality is strict unless $f$ and $g$ are proportional, i.e., $f=\lambda g$.

Let $f=p(y|x)$ and $g=q(y|x)$. According to Eq.~(\ref{eq:double_CS}), we have:
\begin{equation}
\label{eq:conditional_CS_eq1}
\iint p(y|x)q(y|x)dxdy \leq \sqrt{\left(\iint p^2(y|x) dxdy \right) \left(\iint q^2(y|x) dxdy \right)}.
\end{equation}

This equality holds when $p(y|x)=\lambda q(y|x)$ for a scalar $\lambda$, which implies linear dependence between $p(y|x)$ and $q(y|x)$.

On the other hand, $p(y|x)$ and $q(y|x)$ are valid probability density functions in the sense that $1 = \int p(y|x)dy = \lambda \int q(y|x)dy = \lambda$, which implies that $\lambda=1$.
\end{proof}

\subsection{Proof to Proposition 2}

The conditional CS divergence for $p(\mathbf{y}|\mathbf{x})$ and $q(\mathbf{y}|\mathbf{x})$ is expressed as:
\begin{equation}
D_{\text{CS}}(p(\mathbf{y}|\mathbf{x});q(\mathbf{y}|\mathbf{x}))
 = - 2 \log \left(\int_\mathcal{X}\int_\mathcal{Y} \frac{p(\mathbf{x},\mathbf{y})q(\mathbf{x},\mathbf{y})}{p(\mathbf{x})q(\mathbf{x})} d\mathbf{x}d\mathbf{y} \right)
+ \log \left(\int_\mathcal{X}\int_\mathcal{Y} \frac{p^2(\mathbf{x},\mathbf{y})}{p^2(\mathbf{x})} d\mathbf{x}d\mathbf{y}\right) + \log \left(\int_\mathcal{X}\int_\mathcal{Y} \frac{q^2(\mathbf{x},\mathbf{y})}{q^2(\mathbf{x})} d\mathbf{x}d\mathbf{y}\right),
\end{equation}
which contains two conditional quadratic terms (i.e., $\int_\mathcal{X}\int_\mathcal{Y} \frac{p^2(\mathbf{x},\mathbf{y})}{p^2(\mathbf{x})} d\mathbf{x}d\mathbf{y}$ and $\int_\mathcal{X}\int_\mathcal{Y} \frac{q^2(\mathbf{x},\mathbf{y})}{q^2(\mathbf{x})} d\mathbf{x}d\mathbf{y}$) and a cross term (i.e., $\int_\mathcal{X}\int_\mathcal{Y} \frac{p(\mathbf{x},\mathbf{y})q(\mathbf{x},\mathbf{y})}{p(\mathbf{x})q(\mathbf{x})} d\mathbf{x}d\mathbf{y}$).

Assume we are given observations $\psi_p=\{(\mathbf{x}_i^p,\mathbf{y}_i^p )\}_{i=1}^M$ and $\psi_q=\{(\mathbf{x}_j^q,\mathbf{y}_j^q )\}_{j=1}^N$, sampled from distributions $p(\mathbf{x},\mathbf{y})$ and $q(\mathbf{x},\mathbf{y})$, respectively. Let $K^p$ and $L^p$ denote, respectively, the Gram matrices for the variable $\mathbf{x}$ and the output variable $\mathbf{y}$ in the distribution $p$. Similarly, let $K^q$ and $L^q$ denote, respectively, the Gram matrices for the variable $\mathbf{x}$ and the output variable $\mathbf{y}$ in the distribution $q$. Meanwhile, let $K^{pq}\in \mathbb{R}^{M\times N}$ (i.e., $\left(K^{pq}\right)_{ij}=\kappa(\mathbf{x}^p_i - \mathbf{x}^q_j)$) denote the Gram matrix from distribution $p$ to distribution $q$ for input variable $\mathbf{x}$, and $L^{pq}\in \mathbb{R}^{M\times N}$ the Gram matrix from distribution $p$ to distribution $q$ for output variable $\mathbf{y}$.
Similarly, let $K^{qp}\in \mathbb{R}^{N\times M}$ (i.e., $\left(K^{qp}\right)_{ji}=\kappa(\mathbf{x}^q_j - \mathbf{x}^p_i)$) denote the Gram matrix from distribution $q$ to distribution $p$ for input variable $\mathbf{x}$, and $L^{qp}\in \mathbb{R}^{N\times M}$ the Gram matrix from distribution $q$ to distribution $p$ for output variable $\mathbf{y}$.
The empirical estimation of $D_{\text{CS}}(p(\mathbf{y}|\mathbf{x});q(\mathbf{y}|\mathbf{x}))$ is given by:
\begin{equation}\label{eq:conditional_CS_est}
\begin{split}
& \widehat{D}_{\text{CS}}(p(\mathbf{y}|\mathbf{x});q(\mathbf{y}|\mathbf{x}))  \approx \log\left( \sum_{j=1}^M \left( \frac{ \sum_{i=1}^M K_{ji}^p L_{ji}^p }{ (\sum_{i=1}^M K_{ji}^p)^2 } \right) \right)
 + \log\left( \sum_{j=1}^N \left( \frac{ \sum_{i=1}^N K_{ji}^q L_{ji}^q }{ (\sum_{i=1}^N K_{ji}^q)^2 } \right) \right) \\
& - \log \left( \sum_{j=1}^M \left( \frac{ \sum_{i=1}^N K_{ji}^{pq} L_{ji}^{pq} }{ (\sum_{i=1}^M K_{ji}^p) (\sum_{i=1}^N K_{ji}^{pq}) } \right) \right)
 - \log \left( \sum_{j=1}^N \left( \frac{ \sum_{i=1}^M K_{ji}^{qp} L_{ji}^{qp} }{ (\sum_{i=1}^M K_{ji}^{qp}) (\sum_{i=1}^N K_{ji}^q) } \right) \right)
\end{split}
\end{equation}

In the following, we first demonstrate how to estimate the two conditional quadratic terms (i.e., $\int_\mathcal{X}\int_\mathcal{Y} \frac{p^2(\mathbf{x},\mathbf{y})}{p^2(\mathbf{x})} d\mathbf{x}d\mathbf{y}$ and $\int_\mathcal{X}\int_\mathcal{Y} \frac{q^2(\mathbf{x},\mathbf{y})}{q^2(\mathbf{x})} d\mathbf{x}d\mathbf{y}$) from samples. We then demonstrate how to estimate the cross term (i.e., $\int_\mathcal{X}\int_\mathcal{Y} \frac{p(\mathbf{x},\mathbf{y})q(\mathbf{x},\mathbf{y})}{p(\mathbf{x})q(\mathbf{x})} d\mathbf{x}d\mathbf{y}$). We finally explain the empirical estimation of $D_{\text{CS}}(p(\mathbf{y}|\mathbf{x});q(\mathbf{y}|\mathbf{x}))$.

\begin{proof}

\definecolor{lightmintbg}{rgb}{.88,.96,.99}
\colorbox{lightmintbg}{[The conditional quadratic term]}

The empirical estimation of $\int_\mathcal{X}\int_\mathcal{Y} \frac{p^2(\mathbf{x},\mathbf{y})}{p^2(\mathbf{x})} d\mathbf{x}d\mathbf{y}$ can be expressed as:
\begin{equation}
\int_\mathcal{X}\int_\mathcal{Y} \frac{p^2(\mathbf{x},\mathbf{y})}{p^2(\mathbf{x})} d\mathbf{x}d\mathbf{y} = \mathbb{E}_{p(X,Y)} \left[ \frac{p(X,Y)}{p^2(X)} \right] \approx \frac{1}{M} \sum_{j=1}^M \frac{p(\mathbf{x}_j,\mathbf{y}_j)}{p^2(\mathbf{x}_j)}.
\end{equation}

By kernel density estimator (KDE), we have:
\begin{equation}
\frac{p(\mathbf{x}_j,\mathbf{y}_j)}{p^2(\mathbf{x}_j)} \approx M \frac{\sum_{i=1}^M \kappa_\sigma(\mathbf{x}_j^p - \mathbf{x}_i^p)\kappa_\sigma(\mathbf{y}_j^p - \mathbf{y}_i^p) }{ \left(\sum_{i=1}^M \kappa_\sigma (\mathbf{x}_j^p - \mathbf{x}_i^p)\right)^2 }.
\end{equation}

Therefore,
\begin{equation}
\int_\mathcal{X}\int_\mathcal{Y} \frac{p^2(\mathbf{x},\mathbf{y})}{p^2(\mathbf{x})} d\mathbf{x}d\mathbf{y} \approx \sum_{j=1}^M \left( \frac{\sum_{i=1}^M \kappa_\sigma(\mathbf{x}_j^p - \mathbf{x}_i^p)\kappa_\sigma(\mathbf{y}_j^p - \mathbf{y}_i^p) }{ \left(\sum_{i=1}^M \kappa_\sigma (\mathbf{x}_j^p - \mathbf{x}_i^p)\right)^2 } \right).
\end{equation}

Similarly, the empirical estimation of $\int_\mathcal{X}\int_\mathcal{Y} \frac{q^2(\mathbf{x},\mathbf{y})}{q^2(\mathbf{x})} d\mathbf{x}d\mathbf{y}$ is given by:
\begin{equation}
\int_\mathcal{X}\int_\mathcal{Y} \frac{q^2(\mathbf{x},\mathbf{y})}{q^2(\mathbf{x})} d\mathbf{x}d\mathbf{y} \approx \sum_{j=1}^N \left( \frac{\sum_{i=1}^N \kappa_\sigma(\mathbf{x}_j^q - \mathbf{x}_i^q)\kappa_\sigma(\mathbf{y}_j^q - \mathbf{y}_i^q) }{ \left(\sum_{i=1}^N \kappa_\sigma (\mathbf{x}_j^q - \mathbf{x}_i^q)\right)^2 } \right).
\end{equation}

\colorbox{lightmintbg}{[The cross term]}

Again, the empirical estimation of $\int_\mathcal{X}\int_\mathcal{Y} \frac{p(\mathbf{x},\mathbf{y})q(\mathbf{x},\mathbf{y})}{p(\mathbf{x})q(\mathbf{x})} d\mathbf{x}d\mathbf{y}$ can be expressed as:
\begin{equation}
\int_\mathcal{X}\int_\mathcal{Y} \frac{p(\mathbf{x},\mathbf{y})q(\mathbf{x},\mathbf{y})}{p(\mathbf{x})q(\mathbf{x})} d\mathbf{x}d\mathbf{y} = \mathbb{E}_{p(X,Y)} \left[ \frac{q(X,Y)}{p(X)q(X)} \right] \approx \frac{1}{M} \sum_{j=1}^M \frac{q(\mathbf{x}_j,\mathbf{y}_j)}{p(\mathbf{x}_j)q(\mathbf{x}_j)}.
\end{equation}

By KDE, we further have:
\begin{equation}
\frac{q(\mathbf{x}_j,\mathbf{y}_j)}{p(\mathbf{x}_j)q(\mathbf{x}_j)} \approx M \frac{\sum_{i=1}^N \kappa_\sigma(\mathbf{x}_j^p - \mathbf{x}_i^q)\kappa_\sigma(\mathbf{y}_j^p - \mathbf{y}_i^q) }{\sum_{i=1}^M \kappa_\sigma (\mathbf{x}_j^p - \mathbf{x}_i^p) \sum_{i=1}^N \kappa_\sigma (\mathbf{x}_j^p - \mathbf{x}_i^q)}.
\end{equation}

Therefore,
\begin{equation}\label{eq:cross}
\int_\mathcal{X}\int_\mathcal{Y} \frac{p(\mathbf{x},\mathbf{y})q(\mathbf{x},\mathbf{y})}{p(\mathbf{x})q(\mathbf{x})} d\mathbf{x}d\mathbf{y} \approx \sum_{j=1}^M \left( \frac{\sum_{i=1}^N \kappa_\sigma(\mathbf{x}_j^p - \mathbf{x}_i^q)\kappa_\sigma(\mathbf{y}_j^p - \mathbf{y}_i^q) }{\sum_{i=1}^M \kappa_\sigma (\mathbf{x}_j^p - \mathbf{x}_i^p) \sum_{i=1}^N \kappa_\sigma (\mathbf{x}_j^p - \mathbf{x}_i^q)} \right).
\end{equation}

Note that, one can also empirically estimate $\int_\mathcal{X}\int_\mathcal{Y} \frac{p(\mathbf{x},\mathbf{y})q(\mathbf{x},\mathbf{y})}{p(\mathbf{x})q(\mathbf{x})} d\mathbf{x}d\mathbf{y}$ over $q(\mathbf{x},\mathbf{y})$, which can be expressed as:
\begin{equation}\label{eq:cross_alternative}
\begin{split}
\int_\mathcal{X}\int_\mathcal{Y} \frac{p(\mathbf{x},\mathbf{y})q(\mathbf{x},\mathbf{y})}{p(\mathbf{x})q(\mathbf{x})} d\mathbf{x}d\mathbf{y} & = \mathbb{E}_{q(X,Y)} \left[ \frac{p(X,Y)}{p(X)q(X)} \right]
\approx \frac{1}{N} \sum_{j=1}^N \frac{p(\mathbf{x}_j,\mathbf{y}_j)}{p(\mathbf{x}_j)q(\mathbf{x}_j)} \\
& \approx \sum_{j=1}^N \left( \frac{\sum_{i=1}^M \kappa_\sigma(\mathbf{x}_j^q - \mathbf{x}_i^p)\kappa_\sigma(\mathbf{y}_j^q - \mathbf{y}_i^p) }{\sum_{i=1}^M \kappa_\sigma (\mathbf{x}_j^q - \mathbf{x}_i^p) \sum_{i=1}^N \kappa_\sigma (\mathbf{x}_j^q - \mathbf{x}_i^q)} \right).
\end{split}
\end{equation}

\colorbox{lightmintbg}{[Empirical Estimation]}


Let $K^p$ and $L^p$ denote, respectively, the Gram matrices for the input variable $\mathbf{x}$ and output variable $\mathbf{y}$ in the distribution $p$. Further, let $\left(K\right)_{ji}$ denote the $(j,i)$-th element of a matrix $K$ (i.e., the $j$-th row and $i$-th column of $K$). We have:
\begin{equation}\label{eq:estimate_quadratic}
\int_\mathcal{X}\int_\mathcal{Y} \frac{p^2(\mathbf{x},\mathbf{y})}{p^2(\mathbf{x})} d\mathbf{x}d\mathbf{y} \approx \sum_{j=1}^M \left( \frac{ \sum_{i=1}^M K_{ji}^p L_{ji}^p }{ (\sum_{i=1}^M K_{ji}^p)^2 } \right).
\end{equation}


Similarly, let $K^q$ and $L^q$ denote, respectively, the Gram matrices for input variable $\mathbf{x}$ and output variable $\mathbf{y}$ in the distribution $q$. We have:
\begin{equation}
\int_\mathcal{X}\int_\mathcal{Y} \frac{q^2(\mathbf{x},\mathbf{y})}{q^2(\mathbf{x})} d\mathbf{x}d\mathbf{y} \approx \sum_{j=1}^N \left( \frac{ \sum_{i=1}^N K_{ji}^q L_{ji}^q }{ (\sum_{i=1}^N K_{ji}^q)^2 } \right).
\end{equation}


Further, let $K^{pq}\in \mathbb{R}^{M\times N}$ denote the Gram matrix between distributions $p$ and $q$ for input variable $\mathbf{x}$, and $L^{pq}$ the Gram matrix between distributions $p$ and $q$ for output variable $\mathbf{y}$. According to Eq.~(\ref{eq:cross}), we have:
\begin{equation}\label{eq:estimate_cross1}
\int_\mathcal{X}\int_\mathcal{Y} \frac{p(\mathbf{x},\mathbf{y})q(\mathbf{x},\mathbf{y})}{p(\mathbf{x})q(\mathbf{x})} d\mathbf{x}d\mathbf{y} \approx \sum_{j=1}^M \left( \frac{ \sum_{i=1}^N K_{ji}^{pq} L_{ji}^{pq} }{ (\sum_{i=1}^M K_{ji}^p) (\sum_{i=1}^N K_{ji}^{pq}) } \right).
\end{equation}


Therefore, according to Eqs.~(\ref{eq:estimate_quadratic})-(\ref{eq:estimate_cross1}), an empirical estimate of $D_{\text{CS}}(p(\mathbf{y}|\mathbf{x});q(\mathbf{y}|\mathbf{x}))$ is given by:
\begin{equation}\label{eq:conditional_CS_est1}
\begin{split}
D_{\text{CS}}(p(\mathbf{y}|\mathbf{x});q(\mathbf{y}|\mathbf{x})) & \approx \log\left( \sum_{j=1}^M \left( \frac{ \sum_{i=1}^M K_{ji}^p L_{ji}^p }{ (\sum_{i=1}^M K_{ji}^p)^2 } \right) \right)
+ \log\left( \sum_{j=1}^N \left( \frac{ \sum_{i=1}^N K_{ji}^q L_{ji}^q }{ (\sum_{i=1}^N K_{ji}^q)^2 } \right) \right) \\
& - 2 \log \left( \sum_{j=1}^M \left( \frac{ \sum_{i=1}^N K_{ji}^{pq} L_{ji}^{pq} }{ (\sum_{i=1}^M K_{ji}^p) (\sum_{i=1}^N K_{ji}^{pq}) } \right) \right).
\end{split}
\end{equation}

Note that, according to Eq.~(\ref{eq:cross_alternative}), $D_{\text{CS}}(p(\mathbf{y}|\mathbf{x});q(\mathbf{y}|\mathbf{x}))$ can also be expressed as:
\begin{equation}
\begin{split}
D_{\text{CS}}(p(\mathbf{y}|\mathbf{x});q(\mathbf{y}|\mathbf{x})) & \approx \log\left( \sum_{j=1}^M \left( \frac{ \sum_{i=1}^M K_{ji}^p L_{ji}^p }{ (\sum_{i=1}^M K_{ji}^p)^2 } \right) \right)
+ \log\left( \sum_{j=1}^N \left( \frac{ \sum_{i=1}^N K_{ji}^q L_{ji}^q }{ (\sum_{i=1}^N K_{ji}^q)^2 } \right) \right) \\
& - 2 \log \left( \sum_{j=1}^N \left( \frac{ \sum_{i=1}^M K_{ji}^{qp} L_{ji}^{qp} }{ (\sum_{i=1}^M K_{ji}^{qp}) (\sum_{i=1}^N K_{ji}^q) } \right) \right).
\end{split}
\end{equation}

Therefore, to obtain a consistent and symmetric expression, we estimate $D_{\text{CS}}(p(\mathbf{y}|\mathbf{x});q(\mathbf{y}|\mathbf{x}))$ by:
\begin{equation}
\begin{split}
D_{\text{CS}}(p(\mathbf{y}|\mathbf{x});q(\mathbf{y}|\mathbf{x})) & \approx \log\left( \sum_{j=1}^M \left( \frac{ \sum_{i=1}^M K_{ji}^p L_{ji}^p }{ (\sum_{i=1}^M K_{ji}^p)^2 } \right) \right)
+ \log\left( \sum_{j=1}^N \left( \frac{ \sum_{i=1}^N K_{ji}^q L_{ji}^q }{ (\sum_{i=1}^N K_{ji}^q)^2 } \right) \right) \\
& - \log \left( \sum_{j=1}^M \left( \frac{ \sum_{i=1}^N K_{ji}^{pq} L_{ji}^{pq} }{ (\sum_{i=1}^M K_{ji}^p) (\sum_{i=1}^N K_{ji}^{pq}) } \right) \right)
- \log \left( \sum_{j=1}^N \left( \frac{ \sum_{i=1}^M K_{ji}^{qp} L_{ji}^{qp} }{ (\sum_{i=1}^M K_{ji}^{qp}) (\sum_{i=1}^N K_{ji}^q) } \right) \right).
\end{split}
\end{equation}

\end{proof}

\subsection{Proof to Proposition 3}

Assume that we are given observations $\psi=\{(\mathbf{x}_i,\mathbf{y}_i^1,\mathbf{y}_i^2 )\}_{i=1}^N$, where $\mathbf{x}\in \mathbb{R}^{d_\mathbf{x}}$, $\mathbf{y}^1\in \mathbb{R}^{d_\mathbf{y}}$ and $\mathbf{y}^2\in \mathbb{R}^{d_\mathbf{y}}$. Let $K$, $L^1$, and $L^2$ denote, respectively, the Gram matrices for variables $\mathbf{x}$, $\mathbf{y}^1$, and $\mathbf{y}^2$. Further, let $L^{21}$ denote the Gram matrix between $\mathbf{y}^2$ and $\mathbf{y}^1$.  The empirical estimation of $D_{\text{CS}}(p(\mathbf{y}_1|\mathbf{x});p(\mathbf{y}_2|\mathbf{x}))$ is given by:
\begin{equation}\label{eq:CS_ext1}
\small
\begin{split}
& D_{\text{CS}}(p(\mathbf{y}_1|\mathbf{x});p(\mathbf{y}_2|\mathbf{x}))  \approx \log\left( \sum_{j=1}^N \left( \frac{ \sum_{i=1}^N K_{ji} L_{ji}^1 }{ (\sum_{i=1}^N K_{ji})^2 } \right) \right) \\
& + \log\left( \sum_{j=1}^N \left( \frac{ \sum_{i=1}^N K_{ji} L_{ji}^2 }{ (\sum_{i=1}^N K_{ji})^2 } \right) \right)
 - 2 \log \left( \sum_{j=1}^N \left( \frac{ \sum_{i=1}^N K_{ji} L_{ji}^{21} }{ (\sum_{i=1}^N K_{ji})^2 } \right) \right).
\end{split}
\end{equation}

\begin{proof}
Eq.~(\ref{eq:CS_ext1}) can be obtained by setting $\mathbf{x}_1 = \mathbf{x}_2 = \mathbf{x}$. In this sense, we have:
\begin{equation}\label{eq:series_K}
K = K^p = K^q = K^{pq} = K^{qp} \in \mathbb{R}^{N\times N}.
\end{equation}

Plugging in Eq.~(\ref{eq:series_K}) into Eq.~(\ref{eq:conditional_CS_est}), we obtain Eq.~(\ref{eq:CS_ext1}).
\end{proof}

\subsection{Proof to Proposition 4}

\begin{proof}
The CS divergence for conditional distribution $p(\mathbf{y}|\mathbf{x}_1)$ and conditional distribution $p(\mathbf{y}|\mathbf{x}_1,\mathbf{x}_2)$ can be expressed as (denote $\vec{\mathbf{x}}=[\mathbf{x}_1;\mathbf{x}_2]$):
\begin{equation}
\begin{split}
& D_{\text{CS}}(p(\mathbf{y}|\mathbf{x}_1);p(\mathbf{y}|\{\mathbf{x}_1,\mathbf{x}_2\}) \\
& = - 2 \log \left(\int_\mathcal{X}\int_\mathcal{Y} p(\mathbf{y}|\mathbf{x}_1) p(\mathbf{y}|\vec{\mathbf{x}}) d\vec{\mathbf{x}}d\mathbf{y} \right)
+ \log \left(\int_\mathcal{X}\int_\mathcal{Y} p^2(\mathbf{y}|\mathbf{x}_1) d\vec{\mathbf{x}}d\mathbf{y}\right) + \log \left(\int_\mathcal{X}\int_\mathcal{Y} p^2(\mathbf{y}|\vec{\mathbf{x}} ) d\vec{\mathbf{x}}d\mathbf{y}\right) \\
& = - 2 \log \left(\int_\mathcal{X}\int_\mathcal{Y} \frac{p(\mathbf{x}_1,\mathbf{y})p(\vec{\mathbf{x}},\mathbf{y})}{p(\mathbf{x}_1)p(\vec{\mathbf{x}})} d\vec{\mathbf{x}}d\mathbf{y} \right)
 + \log \left(\int_\mathcal{X}\int_\mathcal{Y} \frac{p^2(\mathbf{x}_1,\mathbf{y})}{p^2(\mathbf{x}_1)} d\vec{\mathbf{x}}d\mathbf{y}\right) + \log \left(\int_\mathcal{X}\int_\mathcal{Y} \frac{p^2(\vec{\mathbf{x}},\mathbf{y})}{p^2(\vec{\mathbf{x}})} d\vec{\mathbf{x}}d\mathbf{y}\right),
\end{split}
\end{equation}

Following the proof to Proposition 2, the two conditional quadratic terms can be estimated empirically by:
\begin{equation}\label{eq:appendix_1}
    \int_\mathcal{X}\int_\mathcal{Y} \frac{p^2(\mathbf{x}_1,\mathbf{y})}{p^2(\mathbf{x}_1)} d\vec{\mathbf{x}}d\mathbf{y} \approx \sum_{j=1}^N \left( \frac{ \sum_{i=1}^N K_{ji}^1 L_{ji} }{ (\sum_{i=1}^N K_{ji}^1)^2 } \right),
\end{equation}
and
\begin{equation}\label{eq:appendix_2}
    \int_\mathcal{X}\int_\mathcal{Y} \frac{p^2(\vec{\mathbf{x}},\mathbf{y})}{p^2(\vec{\mathbf{x}})} d\vec{\mathbf{x}}d\mathbf{y} \approx \sum_{j=1}^N \left( \frac{ \sum_{i=1}^N K_{ji}^{12} L_{ji} }{ (\sum_{i=1}^N K_{ji}^{12})^2 } \right).
\end{equation}

We only discuss below the empirical estimation to the term $\int_\mathcal{X}\int_\mathcal{Y} \frac{p(\mathbf{x}_1,\mathbf{y})p(\vec{\mathbf{x}},\mathbf{y})}{p(\mathbf{x}_1)p(\vec{\mathbf{x}})} d\vec{\mathbf{x}}d\mathbf{y}$.

We have:
\begin{equation}
\iint \frac{p(\mathbf{x}_1,\mathbf{y}) p(\vec{\mathbf{x}},\mathbf{y})}{p(\mathbf{x}_1)p(\vec{\mathbf{x}})} = \mathbb{E}_{p(\vec{\mathbf{x}},\mathbf{y})} \left[ \frac{p(\mathbf{x}_1,\mathbf{y})}{p(\mathbf{x}_1)p(\vec{\mathbf{x}})} \right] \approx \frac{1}{N} \sum_{j=1}^N \frac{p((\mathbf{x}_1)_j,\mathbf{y}_j)}{p((\mathbf{x}_1)_j)p(\vec{\mathbf{x}}_j)}.
\end{equation}

By KDE, we have:
\begin{equation}
\frac{p((\mathbf{x}_1)_j,\mathbf{y}_j)}{p((\mathbf{x}_1)_j)p(\vec{\mathbf{x}}_j)} \approx N \frac{\sum_{i=1}^N \kappa_\sigma(\mathbf{y}_j - \mathbf{y}_i)\kappa_\sigma((\mathbf{x}_1)_j - (\mathbf{x}_1)_i) }{\sum_{i=1}^N \kappa_\sigma((\mathbf{x}_1)_j - (\mathbf{x}_1)_i)\times \sum_{i=1}^N \kappa_\sigma((\mathbf{x}_1)_j - (\mathbf{x}_1)_i)\kappa_\sigma((\mathbf{x}_2)_j - (\mathbf{x}_2)_i)}.
\end{equation}

Therefore,
\begin{equation}\label{eq:appendix_3}
\begin{split}
\iint \frac{p(\mathbf{x}_1,\mathbf{y}) p(\vec{\mathbf{x}},\mathbf{y})}{p(\mathbf{x}_1)p(\vec{\mathbf{x}})}
& \approx \sum_{j=1}^N \left( \frac{\sum_{i=1}^N \kappa_\sigma(\mathbf{y}_j - \mathbf{y}_i)\kappa_\sigma((\mathbf{x}_1)_j - (\mathbf{x}_1)_i) }{\sum_{i=1}^N \kappa_\sigma((\mathbf{x}_1)_j - (\mathbf{x}_1)_i)\times \sum_{i=1}^N \kappa_\sigma((\mathbf{x}_1)_j - (\mathbf{x}_1)_i)\kappa_\sigma((\mathbf{x}_2)_j - (\mathbf{x}_2)_i)} \right) \\
& = \sum_{j=1}^N \left( \frac{ \sum_{i=1}^N K_{ji}^1 L_{ji} }{ (\sum_{i=1}^N K_{ji}^1)(\sum_{i=1}^N K_{ji}^{12}) } \right).
\end{split}
\end{equation}

Combining Eq.~(\ref{eq:appendix_1}), Eq.~(\ref{eq:appendix_2}), and Eq.~(\ref{eq:appendix_3}), we obtain Eq.~(\ref{eq:CS_ext2}).

\begin{equation}\label{eq:CS_ext2}
\begin{split}
& D_{\text{CS}}(p(\mathbf{y}|\mathbf{x}_1);p(\mathbf{y}|\{\mathbf{x}_1,\mathbf{x}_2\}) \approx \log\left( \sum_{j=1}^N \left( \frac{ \sum_{i=1}^N K_{ji}^1 L_{ji} }{ (\sum_{i=1}^N K_{ji}^1)^2 } \right) \right) \\
& + \log\left( \sum_{j=1}^N \left( \frac{ \sum_{i=1}^N K_{ji}^{12} L_{ji} }{ (\sum_{i=1}^N K_{ji}^{12})^2 } \right) \right)
 - 2 \log \left( \sum_{j=1}^N \left( \frac{ \sum_{i=1}^N K_{ji}^1 L_{ji} }{ (\sum_{i=1}^N K_{ji}^1)(\sum_{i=1}^N K_{ji}^{12}) } \right) \right).
\end{split}
\end{equation}

\end{proof}

\section{Experimental Details and Additional Results}

\subsection{Detailed Experiment Settings of Causal Discovery in Section~\ref{sec:extensions}}\label{sec:causal_setup}

We first consider $5$ coupled H\'enon chaotic maps~\cite{kugiumtzis2013direct}, whose ground truth causal relationship is $x_{i-1} \rightarrow x_i$. The system of $K$ coupled H\'enon chaotic maps is defined as:
\begin{equation}
    \begin{cases}
      x_{1,t}=1.4- x_{1,t-1}^2+0.3x_{1,t-2} \\
      x_{i,t}=1.4-(Cx_{i-1,t-1}+(1-C) x_{i,t-1} )^2+0.3x_{i,t-2} & \text{for $ i=2,3,\cdots,K$}.
    \end{cases}
\end{equation}

In our simulation, we set the coupling strength $C=0.3$ and generate $1,024$ samples in $10$ independent realizations respectively.

Next, we consider a nonlinear VAR process of order $1$ with $3$ variables (NLVAR3)~\cite{gourevitch2006linear}:
\begin{equation}
    \begin{cases}
        x_{1,t}=3.4x_{1,t-1} (1-x_{1,t-1}^2 ) \exp{(-x_{1,t-1}^2)} +0.01w_{1,t} \\
        x_{2,t}=3.4x_{2,t-1} (1-x_{2,t-1}^2 ) \exp{(-x_{2,t-1}^2)} +0.5x_{1,t-1}x_{2,t-1} +0.01w_{2,t} \\
        x_{3,t}=3.4x_{3,t-1} (1-x_{3,t-1}^2 ) \exp{(-x_{3,t-1}^2)} +0.3x_{2,t-1} +0.5x_{1,t-1}^2 +0.01w_{3,t}
    \end{cases}
\end{equation}
where $w_{1,t}$, $w_{2,t}$ and $w_{3,t}$ denotes independent standard Gaussian noise. The true causal directions in NLVAR3 are $x_1 \rightarrow x_2$, $x_1 \rightarrow x_3$, $x_2 \rightarrow x_3$. Again, we generate $1,024$ samples in $10$ independent realizations respectively.

The delay $\tau$ and the embedding dimension $d$ are vital parameters for all Wiener and Granger causality methods. In our simulations, we simply use the ground truth from the synthetic models, that is $\{\tau=1,d=1\}$ for NLVAR3 and $\{\tau=1,d=2\}$ for H\'enon maps, since selecting the embedding automatically is not the topic of this paper.

For kernel Granger causality (KGC), we use the official MATLAB code from authors\footnote{\url{https://github.com/danielemarinazzo/KernelGrangerCausality}} and select kernel size $\sigma$ by the median rule. For transfer entropy, we use the $k$NN entropy estimator from the Information Theoretical Estimators Toolbox\footnote{\url{https://bitbucket.org/szzoli/ite/src/master/}} and set $k=3$ as default.

Once we obtain the causal score $C_{\mathbf{x}\rightarrow \mathbf{y}}$ with one of the measures, we need to test its significance. To carry out this hypothesis test, we may use the Monte Carlo method by constructing a surrogate time series. The constructed surrogate time series must satisfy the null hypothesis that the causal influence from $\mathbf{x}$ to $\mathbf{y}$ is completely destroyed; at the same time, the statistical properties of $\mathbf{x}$ and $\mathbf{y}$ should remain the same. To construct the surrogate time series that satisfies these two conditions, we apply the surrogate time series construction method in~\cite{theiler1992testing,duan2014transfer}.

Specifically, given $N$ samples, let $T$ denote the length of the training set (in our case $T=1,024$), a pair of surrogate time series for $\mathbf{x}$ and $\mathbf{y}$ is constructed as:
\begin{equation}
    \begin{cases}
        x^{\text{surr}}=[x_i,x_{i+1},\cdots,x_{i+T-1}] \\
        y^{\text{surr}}=[y_j,y_{j+1},\cdots,y_{j+T-1}],
    \end{cases}
\end{equation}
where $i$ and $j$ are randomly chosen from ${1,2,\cdots, N-T+1}$ and $|j-i|\geq e$ and $e$ is a sufficient large integer such that there is almost no correlation between $x^{\text{surr}}$ and $y^{\text{surr}}$. In our simulation, we set $e=512$.
We compare $C(x\rightarrow y)$ with $C(x^{\text{surr}}\rightarrow y^{\text{surr}})$ from the $P$ permutations ($P=100$ in our experiment) to obtain a $p$-value. $p$-values below $0.05$ were considered as significant. Details are provided in Algorithm~\ref{CausalAlg}.

\begin{algorithm}
\caption{Test the significance of $C(x\rightarrow y)$}
\label{CausalAlg}
 \begin{algorithmic}[1]
 \renewcommand{\algorithmicrequire}{\textbf{Input:}}
 \renewcommand{\algorithmicensure}{\textbf{Output:}}
 \REQUIRE Two time series $\{x_t\}$ and $\{y_t\}$;
Number of permutations $P$;
Significance level $\eta=0.05$.
 \ENSURE  Test \emph{decision} (Is $\mathcal{H}_0: C(x\rightarrow y)$ significant or not?).
  \STATE Construct $\{y_{t+1},\mathbf{x}_t^m, \mathbf{y}_t^n\}_{t=1}^T$ ($T$ represents the total number of observations) from $\{x_t\}$ and $\{y_t\}$;
  \STATE Compute $C(x\rightarrow y)=D (p(y_{t+1}|\mathbf{y}_t^n);p(y_{t+1}|\mathbf{y}_t^n,\mathbf{x}_t^m))$ with one of the conditional divergence measures (e.g., conditional KL, or conditional Bregman divergence, or conditional MMD, or conditional CS divergence).
 \\ 
  \FOR {$m = 1$ to $P$}
  \STATE Construct a pair of surrogate time series $x^{\text{surr}}_m$ and $y^{\text{surr}}_m$.
  \STATE Compute $C(x^{\text{surr}}_m \rightarrow y^{\text{surr}}_m )$ with the selected conditional divergence measure.
  \ENDFOR
  \IF {$\frac{1+\sum\nolimits_{m=1}^P\mathbf{1}[C(x\rightarrow y) \leq C(x^{\text{surr}}_m \rightarrow y^{\text{surr}}_m )]}{1+P}\leq\eta$}
  \STATE $C(x\rightarrow y)$ is not significant large.
  \ELSE
  \STATE $C(x\rightarrow y)$ is significant large.
  \ENDIF
 \RETURN \emph{decision}
 \end{algorithmic}
\end{algorithm}

\subsection{Details of Permutation Test in Section~\ref{sec:simulation1}}\label{sec:permutation_test}

Details of permutation test are elaborated in Algorithm~\ref{PermutationAlg}.

\begin{algorithm}[htb]
\caption{Test the conditional distribution divergence (CDD) based on the matrix Bregman divergence}
\label{PermutationAlg}
\begin{algorithmic}[1]
\REQUIRE
Two groups of observations $\psi_p = \{(\mathbf{x}_i^p,\mathbf{y}_i^p)\}_{i=1}^{M}$ and $\psi_q = \{(\mathbf{x}_j^q,\mathbf{y}_j^q)\}_{j=1}^{N}$;
$D_{\varphi,B}$;
Number of permutations $P$;
Significance level $\eta$.
\ENSURE
Test \emph{decision} (Is $H_0: p(\mathbf{y}|\mathbf{x})=q(\mathbf{y}|\mathbf{x})$ $True$ or $False$?).
\STATE Compute conditional divergence value $d_0$ on $\psi_p$ and $\psi_q$ with one of the conditional divergence measures (e.g., conditional KL, or conditional Bregman divergence, or conditional MMD, or conditional CS divergence).
\FOR {$m = 1$ to $P$}
\STATE $(\psi^m_p, \psi^m_q)\leftarrow$ random split of $\psi_ p\bigcup \psi_q$.
\STATE Compute conditional divergence value $d_{m}$ on $\psi^m_p$ and $\psi^m_q$ with the selected conditional divergence measure.
\ENDFOR
\IF {$\frac{1+\sum\nolimits_{m=1}^P\mathbf{1}[d_{0}\leq d_m]}{1+P}\leq\eta$}
\STATE \emph{decision}$\leftarrow$$False$
\ELSE
\STATE \emph{decision}$\leftarrow$$True$
\ENDIF
\RETURN \emph{decision}
\end{algorithmic}
\end{algorithm}

\subsection{Detailed Experiment Settings and Additional Results of Time Series Clustering in Section~\ref{sec:clustering}}

\subsubsection{More on Dynamic Texture Datasets}\label{sec:data_clustering}
The dynamic texture (DT) is a sequence of images of moving scenes such as flames, smoke, and waves, which exhibits certain stationarity in time and can be modeled using a linear dynamic system (LDS)~\cite{doretto2003dynamic}:
\begin{equation}
    \begin{cases}
      \mathbf{h}_t = A \mathbf{h}_{t-1} + v_t, v_t \sim N(0,Q) \\
      \mathbf{y}_t = C \mathbf{h}_t + w_t, w_t \sim N(0,R)
    \end{cases}
\end{equation}
where $\mathbf{y}_t\in \mathbb{R}^m$ is the observational frame at time $t$, $\mathbf{h}_t\in \mathbb{R}^n$ ($n\ll m$) is the hidden state vector at time $t$, $A\in \mathbb{R}^{n\times n}$ is the state transition matrix, $C\in \mathbb{R}^{m\times n}$ is the output matrix that maps hidden states to observations. Finally, $v_t$ and $w_t$ are Gaussian noises with covariance matrices $Q$ and $R$, respectively. Thus, a single dynamic texture can be described by a transition function $f$ from $\mathbf{y}_{t-1}$ to $\mathbf{y}_t$, i.e., $\mathbf{y}_t=f(\mathbf{y}_{t-1})$~\cite{you2016kernel}. In this sense, it is natural to expect to distinguish different DTs by the conditional distribution $p(\mathbf{y}_t |\mathbf{y}_{t-1})$.

The UCLA database\footnote{\url{https://drive.google.com/file/d/0BxMIVlhgRmcbN3pRa0dyaHpHV1E/view?resourcekey=0-_OdWQfyRKH_FHh84bxZLcg}} originally contains $200$ DT sequences from $50$ categories, and each category contains $4$ video sequences captured from different viewpoints. All the video sequences are of the size $48\times 48\times 75$, where $75$ is the number of frames. By combining sequences from different viewpoints, the original $50$ categories are merged to $9$ categories: boiling water ($8$), fire ($8$), flowers ($12$), fountains ($20$), plants ($108$), sea ($12$), smoke ($4$), water ($12$) and waterfall ($16$), where the numbers in parentheses denote the number of the sequences in each category. This dataset is however very challenging and imbalanced, because the category ``plants” contains too many videos. Therefore, we follow \cite{xu2011dynamic} and remove the category of ``plants”. Finally, there are a total of $92$ video sequences from $8$ categories.

The traffic database\footnote{\url{http://www.svcl.ucsd.edu/projects/dytex/}}~\cite{chan2005classification}, consists of $253$ videos divided into three classes: light, medium, and heavy traffic. Videos have $42$ to $52$ frames with a resolution of $48\times 48$ pixels.

The statistics of all datasets are summarized in Table~\ref{tab:property_clustering}.

\begin{table}[]
\centering
\caption{Properties of the datasets (in time series clustering).}
\begin{tabular}{c|c|c|c|c}
\toprule
Datasets          & Time series & Length & Dimension & Clusters \\ \midrule
Coffee            & 28             & 286    & 1         & 2           \\
Diatom            & 306             & 345    & 1         & 4          \\
DistalPhalanxTW   & 400             & 80    & 1         & 6          \\
ECG5000           & 469          & 140    & 1          &  2         \\
FaceAll           & 560            & 131    & 1         & 14          \\
Synthetic control & 600            & 60     & 1         & 6           \\
\midrule
PenDigits         & 3498           & 8      & 2         & 10
 \\
Libras         & 180           & 45      & 2        & 15
 \\
uWave         & 200        & 315      & 3         & 8
 \\
Robot failure LP1    & 88           & 15      & 6       & 4
 \\
Robot failure LP2    & 47           & 15      & 6       & 5
 \\
Robot failure LP3    & 47           & 15      & 6       & 4
 \\
Robot failure LP4    & 117           & 15      & 6       & 3
 \\
Robot failure LP5    & 164           & 15      & 6       & 5
 \\
 \midrule
Traffic           & 253           & 42       & 2304      & 3           \\
UCLA              &  92         & 75    &  2304       & 8           \\ \bottomrule
\end{tabular}
\label{tab:property_clustering}
\end{table}

\subsubsection{Competing Baselines and Their Hyperparameters Setting}\label{sec:baseline_clustering}

We include the following representative baselines in the literature of time series clustering for evaluations:
\begin{itemize}
    \item DTW is a well-known approach to measure the similarity between two temporal time series sequences. It uses the dynamic programming technique to find the optimal temporal matching between elements of two-time series. In our experiments, we use the implementation of DTW and its multivariate extension provided by Schultz and Jain\footnote{ \url{https://www.ai4europe.eu/research/ai-catalog/dtw-mean}}~\cite{schultz2018nonsmooth}. We apply the default warping window constraint, i.e., the length of the longer sequence.
    \item MSM distance is a distance metric for time series. It is conceptually similar to other edit distance approaches. MSM metric uses as building blocks three fundamental operations: Move, Split, and Merge, which can be applied in sequence to transform any time series into any other time series. In our experiments, we use the official code provided by authors\footnote{ \url{https://athitsos.utasites.cloud/projects/msm/}}. There is no hyperparameters need to be tuned for MSM.
    \item TWED is a distance measure for discrete time series matching with time ``elasticity". It has two vital parameters: the penalty for deletion operation $\lambda$ and the elasticity parameter $\nu$. In our experiments, we use the official code provided by authors\footnote{\url{ http://people.irisa.fr/Pierre-Francois.Marteau/}} and set $\lambda=1$ and $\nu= 0.001$. A big limitation of TWED is that its computational complexity is usually $\mathcal{O}(n^2)$, in which $n$ is the length of time series.
    \item TCK is able to learn a valid kernel function to describe the similarity between pairwise time series. Distinct to above mentioned measures, TCK is an ensemble approach that combines multiple GMM models, whose diversity is ensured by training the models on subsamples of data, attributes, and time segments, to capture different levels of granularity in the data. It is robust to missing values and applies to multivariate time series. There are two optional parameters for practitioners: the maximum number of mixture components for each GMM ($C$) and the number of randomizations for each number of components ($G$). In our experiment, we use the default values of $C$ and $G$ provided by authors\footnote{\url{ https://github.com/kmi010/Time-series-cluster-kernel-TCK-}}.
\end{itemize}

\subsubsection{A Case study of exploratory data analysis} \label{sec:exploratory}
This section presents a network modeling example using real data.
We use temperature data over $109$ cities at US in the year of $2010$, which can be obtained from the US National Oceanic and Atmospheric Administration (NOAA)\footnote{\url{https://www.ncei.noaa.gov/access/monitoring/climate-at-a-glance/city/time-series}}. For each city, we obtain temperature records in each hour and take the daily average over $24$ hours,
thus forming a univariate time series of length $364$.

We first compute a $109\times 109$ matrix which encodes conditional CS divergence measures over all pairs of time series. We then obtain a nearest neighbor network over different cities using $5\%$ of the smallest conditional CS divergence measures (see Fig.~\ref{fig:temperature}(a)) and find communities using spectral clustering \textcolor{black}{with the number of clusters set to $2$}. The result is shown in Fig.~\ref{fig:temperature}(b). We can make two observations: 1) closer cities are more likely to have similar temperature patterns; and 2) the north-south difference is much more obvious than the east-west difference. Both observations make sense and match well with a recent exploratory data analysis (EDA) package~\cite{ferreira2022time}.

\begin{figure}[t]
	\centering
	\subfigure[]{
		\centering
		\includegraphics[width=.45\linewidth]{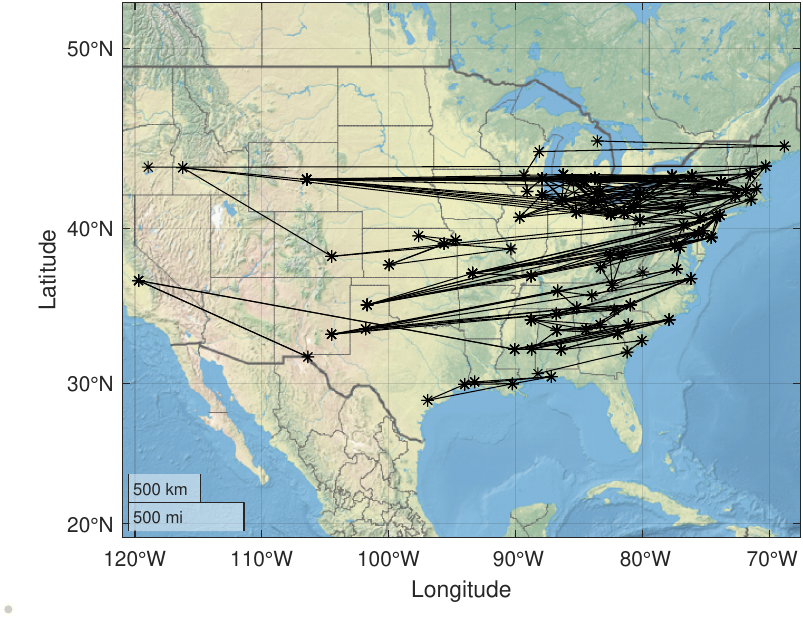}
	}
\hfill
	\subfigure[]{
		\centering
		\includegraphics[width=.45\linewidth]{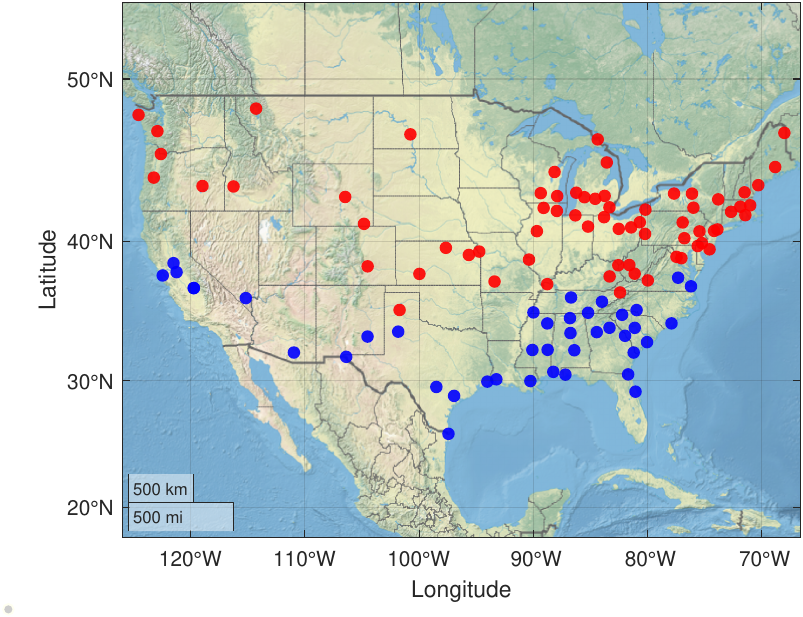}
	}
	\caption{(a) The nearest neighbor network constructed using $5\%$ of the smallest conditional CS divergences; (b) The detected communities by conditional CS divergence. Node colors represent communities.}
	\label{fig:temperature}
\end{figure}

\subsection{Details of divergence to go in Section~\ref{CS-dtg}}\label{sec:dtg_details}
Analogous to Q-learning and its extensions, we employ the divergence-to-go (dtg) metric in a dynamic programming framework. In short, we replace the reward in Q-learning with the divergence between old states and new counterparts $D(p_{\text{new}} (s_{t+1}|s_t,a_t );p_{\text{old}} (s_{t+1}|s_t,a_t))$. To obtain samples for divergence estimation, we introduce a replay buffer $B$ (of size $2\tau$) to record $2\tau$ steps of $\{ s_{t+1}, s_t,a_t\}$ trios. We use trios in the first half of the buffer to estimate $p_{\text{old}}$ and trios in the second half of the buffer to estimate $p_{\text{new}}$, such that we can evaluate the corresponding conditional CS divergence using Eq.~(\ref{eq:conditional_CS_est}) (by treating $s_{t+1}$ as variable $\mathbf{y}$ and the concatenation of $[s_t, a_t]$ as variable $\mathbf{x}$). Same to the initial dtg paper~\cite{emigh2015model}, we use a kernelized version of Q-learning, i.e., Kernel Temporal Difference (KTD) algorithm \cite{bae2011stochastic} as the backbone of dtg. More formally, we have:

\begin{equation}
\delta_{t} = D + \gamma \underset{a'}{\mathbf{max}}[\text{dtg}_{t+1}(s',a')]- \text{dtg}_{t}(s,a),
\end{equation}
and
\begin{equation}
\text{dtg}_{t}(s,a) = \alpha\sum_{j = 1}^{t}\delta_{j}\kappa(z;z_j),
\end{equation}
where $D$ is the divergence $D(p_{\text{new}} (s_{t+1}|s_t,a_t );p_{\text{old}} (s_{t+1}|s_t,a_t))$; $\kappa$ denotes Gaussian kernel; $z:=\{s,t\}$ denotes the vector of a state-action pair. Steps in detail are shown in Algorithm~\ref{algorithm: dtg}. In our experiment, the kernel size is fixed to be $0.1$. The length of the buffer $B$ is $2,000$.

\begin{algorithm}\label{algorithm: dtg}
\caption{The divergence-to-go (DTG) algorithm using conditional Cauchy-Schwarz divergence.}
\label{DTG-CS}
 \begin{algorithmic}[1]

  \STATE Initialize $\text{dtg}_0(s,a) = 0$, set learning rate $\alpha$, discount factor $\gamma$, empty first-in-first-out (FIFO) replay buffer $B$.
 \\ 
  \WHILE{not achieve the goal}
  \STATE $s\leftarrow$ current state.
  \STATE $a\leftarrow \underset{a'}{\mathbf{argmax}}[\text{dtg}(s,a')]$.
  \STATE Take action $a$, observe state transition.
  \STATE $s'\leftarrow$ new state.
  \STATE Record trio $\{s', s, a\}$ to the buffer $B$
  \STATE Compute conditional divergence $D(p_{\text{new}} (s_{t+1}|s_t,a_t );p_{\text{old}} (s_{t+1}|s_t,a_t))$ with Eq.~(\ref{eq:conditional_CS_est}) from trios in the buffer $B$.
  \STATE $\delta = D + \gamma \underset{a'}{\mathbf{max}}[\text{dtg}(x',a')] - \text{dtg}(x,a)$
  \STATE $\text{dtg}(s,a) \leftarrow \text{dtg}(s,a) + \alpha \cdot \delta$
  \ENDWHILE
 \end{algorithmic}
\end{algorithm}

In the following, we also show detailed steps of the standard reward-based Q-learning in Algorithm~\ref{algorithm: qlearning}. The major differences between dtg and Q-learning are highlighted by blue texts. In summary, our dtg algorithm is not reward driven. Instead of obtaining rewards from environments, our dtg estimates conditional CS divergence using state-action pairs only (see line 8 in Algorithm~\ref{algorithm: dtg} and line 7 in Algorithm~\ref{algorithm: qlearning}). In order to estimate divergence, we have to introduce a replay buffer for collecting samples in dtg algorithm. The buffer is not required in original Q-learning though this memory is a popular trick to conduct mini-batch learning and offline learning.

\begin{algorithm}\label{algorithm: qlearning}
\caption{The standard reward-based Q-learning. We highlight the main differences between dtg and Q-learning with blue texts.}
\label{qlearning}
 \begin{algorithmic}[1]

  \STATE Initialize $Q_0(s,a) = 0$, set learning rate $\alpha$, discount factor $\gamma$.
 \\ 
  \WHILE{not achieve the goal}
  \STATE $s\leftarrow$ current state.
  \STATE $a\leftarrow \underset{a'}{\mathbf{argmax}}[\textcolor{blue}{Q(s,a')}]$.
  \STATE Take action $a$, observe state transition.
  \STATE $s'\leftarrow$ new state.
  \STATE \textcolor{blue}{$r \leftarrow$ reward given by the environment.}
  \STATE \textcolor{blue}{$\delta = r + \gamma \underset{a'}{\mathbf{max}}[Q(s',a')] - Q(s,a)$}
  \STATE \textcolor{blue}{$Q(s,a) \leftarrow Q(s,a) + \alpha \cdot \delta$}
  \ENDWHILE
 \end{algorithmic}
\end{algorithm}

Another exploration-based reinforcement learning algorithm, which shares the same motivation to us, is the maximum entropy exploration (MaxEnt)~\cite{hazan2019provably}. Instead of estimating divergence in time series, the goal of MaxEnt is to obtain the policy which maximizes the entropy of states. In our experiments, we compare to MaxEnt with code in \url{https://github.com/abbyvansoest/maxent_base}. We keep all hyper-parameters of DTG and MaxEnt identical, as shown in Table~\ref{tab:hyper-MaxEnt}.


\begin{table}[]
\centering
\caption{Hyperparameters in DTG-CS and standard Q-learning in Algorithms~\ref{DTG-CS} and \ref{qlearning}.}
\begin{tabular}{ll}
\toprule
Hyper-parameter                            & value \\
\midrule
discount factor $\gamma$                     & 0.99  \\
learning rate $\alpha$                              & 1e-3  \\
$\epsilon$-greedy ratio                         & 0.1   \\
number of steps to roll out a policy & 1000 \\
\bottomrule
\end{tabular}
\label{tab:hyper-MaxEnt}
\end{table}

\subsection{Detailed Experiment Settings of Unsupervised Domain Adaptation in Section~\ref{sec:conclusion}}\label{sec:UDA}
\subsubsection{About the Datasets}

We apply CS divergence-based domain adaptation to EEG classification on two benchmark real-world datasets, namely BCI Competition IIIb~\cite{blankertz2006bci} and SEED~\cite{zheng2015investigating}.

The BCI Competition IIIb focuses on cued motor imagery tasks with online feedback, employing a non-stationary classifier with two classes (left hand and right hand). It comprises EEG recordings from three subjects (S4, X11, and O3VR) with $1080$, $1080$, and $640$ trials. The length of each trial is $7$s. Due to a mistake, trials $1$-$160$ and $161$-$320$ in O3VR are repeated. Hence, we only use data from S4 and X11 in the present experiment. EEG data were recorded at a $125$ Hz sampling rate, band-pass filtered between $0.5$ and $30$ Hz, and converted into the GDF format for analysis, using a bipolar EEG amplifier from g.tec with additional Notch filtering.


The SEED consists of data from $15$ subjects. Curated movie excerpts were specifically selected to elicit three distinct emotions: positive, neutral, and negative. Each emotion category contains five movie excerpts. All subjects underwent three EEG recording sessions, with a two-week interval between each session. During each recording session, subjects were exposed to fifteen four-minute long movie excerpts, each intended to induce one of the specified emotions. Importantly, the same fifteen movie excerpts were used across all three recording sessions. Thus, the SEED dataset comprises $15$ EEG trials recorded for each subject in each session, with each emotion category consisting of $5$ trials. EEG signals were acquired using a $62$-channel ESI NeuroScan device, sampled at a rate of $1000$ Hz, and subsequently downsampled to $200$ Hz for analysis~\cite{lan2018domain}. In our experiment, we randomly select $10$ pairs of subjects as the source and target domains, and report the average domain adaptation result.

\subsubsection{Competing Baselines and Their Hyperparameters Setting}

We compare our method with four representative MMD-based approaches, which include DDC~\cite{tzeng2014deep}, DAN~\cite{long2015learning}, JAN~\cite{long2017deep}, DJP-MMD~\cite{zhang2020discriminative}. Given a baseline network as shown in Fig.~\ref{fig:UDA_framework}, comprising a shared feature extractor, fully connected layers of depth $|L|$ and a final linear classification layer, all competing methods optimize the following objective:
\begin{equation}
\label{eq:UDA_obj}
L_{\text{CE}} + \alpha D(p_s,p_t),
\end{equation}
where $L_{\text{CE}}$ is the cross-entropy loss on source domain labeled data, $D(p_s,p_t)$ represents the distributional discrepancy between the source domain and target domain.
However, they vary in their approach to evaluate $D(p_s,p_t)$.

Specifically, DDC just minimizes the discrepancy in the last layer (i.e., $\text{MMD}^2(\mathbf{z}_s^{|L|},\mathbf{z}_t^{|L|})$), DAN improves upon DDC by minimizing the discrepancies in all fully connected layers and replacing the basic MMD with multi-kernel MMD (MK-MMD):
\begin{equation}
\label{eq:DAN_obj}
L_{\text{CE}} + \alpha \sum_{l=1}^{|L|} \text{MK-MMD}^2(\mathbf{z}_s^l,\mathbf{z}_t^l),
\end{equation}
in which the kernel function in MK-MMD is expressed as:
\begin{equation}
\kappa(\mathbf{z}_s,\mathbf{z}_t) = \sum_{i=1}^m \beta_i \kappa_i(\mathbf{z}_s,\mathbf{z}_t), \quad \text{s.t.} \quad \beta_i>0, \sum_{i=1}^m \beta_i=1.
\end{equation}

JAN argues that the $|L|$ constraints in Eq.~(\ref{eq:DAN_obj}) is independent and does not incorporate the dependence between different layers. Hence, the authors of JAN minimize the joint MMD for all fully connected layers:
\begin{equation}
\label{eq:JAN_obj}
L_{\text{CE}} + \alpha \| \mathcal{C}_{\mathbf{z}_s^{1:|L|}} - \mathcal{C}_{\mathbf{z}_t^{1:|L|}} \|_{\otimes_{l=1}^{|L|} \mathcal{H}^l},
\end{equation}
in which $\mathcal{C}_{\mathbf{z}^{1:|L|}}$ refers to kernel mean embedding for $|L|$ variables using tensor product feature space:
\begin{equation}
\mathcal{C}_{\mathbf{z}_s^{1:|L|}} = \mathbb{E}_{\mathbf{z}_s^{1:|L|}} \left[ \otimes_{l=1}^{|L|} \phi^l(\mathbf{z}^l) \right].
\end{equation}

DJP-MMD aims to minimize the discrepancy in the joint distribution $p(\mathbf{z},y)$, but approximates the joint alignment in the following way:
\begin{equation}
D(p_s(\mathbf{z},y),p_t(\mathbf{z},y)) = D(p_s(y|\mathbf{z})p_s(\mathbf{z}),p_t(y|\mathbf{z})p_t(\mathbf{z})) \approx \mu_1 D(p_s(\mathbf{z}),p_t(\mathbf{z})) + \mu_2 D(p_s(\mathbf{z}|y),p_t(\mathbf{z}|y)).
\end{equation}

Obviously, such approximation simplifies the estimation, but does not obey the chain rule that $p(\mathbf{z},y) = p(y|\mathbf{z})p(\mathbf{z})$ (rather than $p(\mathbf{z}|y)$).

By contrast, our approach directly optimizes the following objective:
\begin{equation}
L_{\text{CE}} + \alpha \left[ D_{\text{CS}} (p_t (\mathbf{z}); p_s(\mathbf{z}) ) + D_{\text{CS}} (p_t (\hat{y}|\mathbf{z}); p_s(y|\mathbf{z})) \right],
\end{equation}
and approximates $y_t$ (the true label in target domain) with its prediction $\hat{y}_t$, which is a common trick in UDA literature.

\begin{figure}[t]
	\centering
		\includegraphics[width=.6\linewidth]{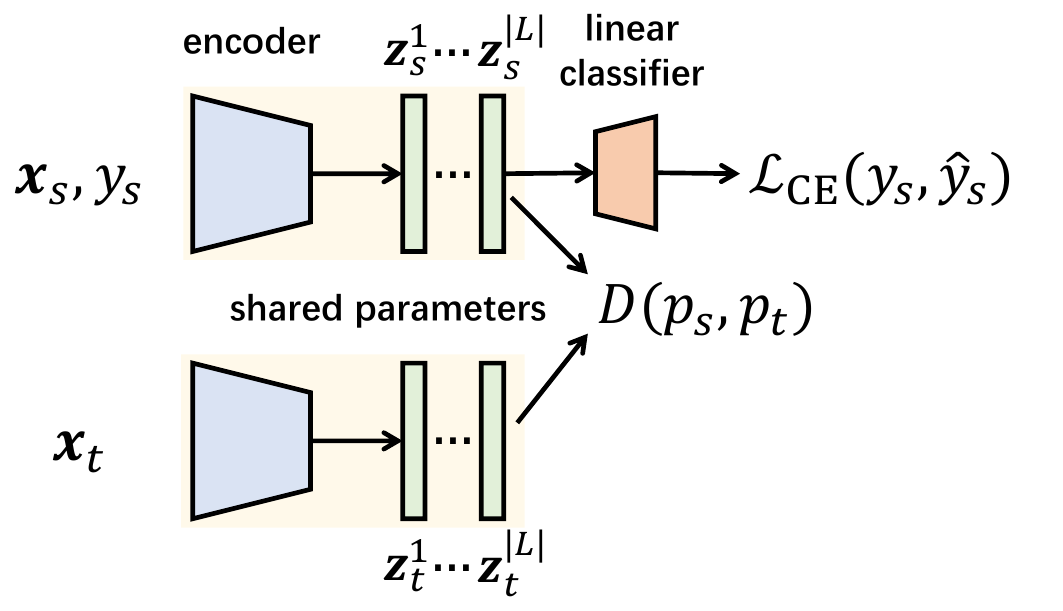}
	\caption{UDA framework.}
	\label{fig:UDA_framework}
\end{figure}

Our network for domain adaptation is built upon the EEGNet~\cite{lawhern2018eegnet}. Additionally, as a common practice, for all methods, we use a bottleneck subnetwork, comprising two fully connected layers (i.e., $|L|=2$) with ReLU activation functions before the final linear classification layer. Except for JAN which aligns the joint distribution $p(\mathbf{z}^1,\mathbf{z}^2)$, all other methods only takes $\mathbf{z}^2$ as the learned feature. The training is performed with PyTorch~\cite{paszke2019pytorch} on a NVIDIA GeForce RTX 3090 GPU. Adam optimizer is used with a learning rate of $1e-2$ and batch size $64$. In order to stabilize distribution alignment in the training phase, we employ a warm-up strategy, initially training without adaptation for the first $10$ epochs. For our method, we normalize the feature and adopt kernel size $\sigma=1$. For other methods, we choose either adaptive kernel size or fixed size $1$, depending on their performances. As for the weight $\alpha$ of the discrepancy term in each method, we perform a grid search from $1$ to $100$ with an interval of $10$.

\end{document}